\theoremstyle{plain}
\newtheorem{proposition}[theorem]{Proposition}
\let\epsilon\varepsilon
\let\phi\varphi
\let\emptyset\varnothing
\let\rho\varrho
\tikzset{->,>=stealth',
every loop/.style={looseness=6},
initial text={},
auto,node distance=1cm,
el/.style={font=\scriptsize},
every state/.style={font=\scriptsize,inner
sep=0.1mm,minimum size=0.5cm,outer sep=1pt},
psplit/.style={rectangle,fill=black,minimum size=1mm,inner sep=0mm},
}
\newcommand{\calA}{\mathcal{A}}
\newcommand{\calC}{\mathcal{C}}
\newcommand{\calG}{\mathcal{G}}
\newcommand{\calM}{\mathcal{M}}
\newcommand{\calN}{\mathcal{N}}
\newcommand{\calT}{\mathcal{T}}
\renewcommand{\st}{\mathrel{\mid}}
\newcommand{\defequals}{\overset{\text{def}}{=}}
\newcommand{\defiff}{\overset{\text{def}}{\iff}}
\newcommand{\pow}[1]{\mathcal{P}\left(#1\right)}
\newcommand{\dist}[1]{\mathcal{D}\left(#1\right)}
\newcommand{\runs}[2]{\mathrm{Runs}^{#2}(#1)}
\newcommand{\supp}[1]{\mathrm{supp}\left(#1\right)}
\newcommand{\last}{\mathrm{last}}
\newcommand{\rvprob}[1]{\mathbb{P}\left[#1\right]}
\newcommand{\rvexpect}[1]{\mathbb{E}\left[#1\right]}
\newcommand{\probevent}[3]{\mathbb{P}_{#1}^{#2}\left[#3\right]}
\newcommand{\expect}[3]{\mathbb{E}^{#2}_{#1}\left[#3\right]}
\newcommand{\Inf}{\mathrm{Inf}}
\newcommand{\MP}{\underline{\mathbf{MP}}}
\newcommand{\finMP}{\mathbf{MP}}
\newcommand{\Val}{\mathbf{Val}}
\newcommand{\cVal}{\mathbf{sVal}}
\newcommand{\lcVal}{\mathbf{asVal}}
\newcommand{\parity}{\text{\textsc{Parity}}}
\newcommand{\pmin}{\pi_{\mathrm{min}}}
\newcommand{\autotomdp}[3]{#1_{#2,#3}}
\newcommand{\autotochain}[4]{#1_{#2,#3}^{#4}}
\newcommand{\mec}[1]{\mathrm{MEC}_{#1}}
\newcommand{\eclose}[2]{\mathrel{\sim^{#1}_{#2}}}
\title{Learning-Based Mean-Payoff Optimization
in an Unknown MDP under Omega-Regular Constraints}
\titlerunning{Learning-Based Mean-Payoff Optimization in an Unknown Parity MDP}
\author{Jan K\v{r}et\'insk\'y}{
Technische Universit\"at M\"unchen, Munich, Germany}{jan.kretinsky@in.tum.de}{%
https://orcid.org/0000-0002-8122-2881}{}
\author{Guillermo A. P\'erez}{%
Universit\'e libre de Bruxelles, Brussels, Belgium}{gperezme@ulb.ac.be}{%
https://orcid.org/0000-0002-1200-4952}{%
G. A. P\'erez has been supported by an F.R.S.-FNRS Aspirant fellowship.
}
\author{Jean-Fran\c{c}ois Raskin}{
Universit\'e libre de Bruxelles, Brussels, Belgium}{jraskin@ulb.ac.be}{}{%
J.-F. Raskin is Professeur Francqui de Recherche funded by the Francqui
foundation.
}
\authorrunning{J. K\v{r}et\'insk\'y, G. A. P\'erez, J.-F. Raskin}
\subjclass{
\ccsdesc[500]{Theory of computation~Logic and verification},
\ccsdesc[500]{Theory of computation~Reinforcement learning}
}
\keywords{Markov decision processes, Reinforcement learning, Beyond worst case}
\begin{document}

\maketitle

\begin{abstract}
    We formalize the problem of maximizing the mean-payoff value with high
    probability while satisfying a parity objective in a Markov decision process
    (MDP) with unknown probabilistic transition function and unknown reward
    function. Assuming the support of the unknown transition function and a
    lower bound on the minimal transition probability are known in advance, we
    show that in MDPs consisting of a single end component, two combinations of
    guarantees on the parity and mean-payoff objectives can be achieved
    depending on how much memory one is willing to use.
        (i) For all $\epsilon$ and $\gamma$ we can construct an
            online-learning finite-memory strategy 
            that almost-surely satisfies the
            parity objective and which achieves an $\epsilon$-optimal mean
            payoff with probability at least $1 - \gamma$.
        (ii) Alternatively, for all $\epsilon$ and $\gamma$ 
            there exists an online-learning infinite-memory
            strategy that satisfies the parity objective
            surely and which achieves an $\epsilon$-optimal mean payoff with
            probability at least $1 - \gamma$.
    We extend the above results to MDPs consisting of more than one end
    component in a natural way.  Finally, we show that the aforementioned
    guarantees are tight, i.e. there are MDPs for which stronger
    combinations of the guarantees cannot be ensured.    
\end{abstract}

\section{Introduction}

\subparagraph{Reactive synthesis and online reinforcement learning.}
Reactive systems are systems that maintain a continuous interaction with the
environment in which they operate. When designing such systems, we usually face
two partially-conflicting objectives. First, to ensure a safe execution, we want
some basic and critical properties to be enforced by the system no matter how
the environment behaves. Second, we want the reactive system to be as efficient
as possible given the actual observed behaviour of the environment in which the
system is executed. As an illustration, let us consider a robot that needs to
explore an unknown environment as efficiently as possible while avoiding any
collision.  While operating at low speed makes it easier to avoid
collisions, it will impair its ability to explore the environment quickly even
if the environment is clear of other objects.  

There has been, in the past, a large research effort to define mathematical
models and algorithms in order to address the two objectives above, but in
isolation only. To synthesize safe control strategies, two-player zero-sum games
with omega-regular objectives have been proposed~\cite{thomas95,ag11}. 
Reinforcement-learning (RL, for short) algorithms for partially-specified
Markov decision processes (MDPs) have been proposed (see
e.g.~\cite{wd92,DBLP:journals/jair/KaelblingLM96,rn10,sb18}) to learn
strategies that reach (near-)optimal performance in the actual
environment in which the system is executed. In this paper, we want to answer
the following question:
{\it How efficient can online-learning techniques be if only
correct executions, i.e. executions that satisfy a specified omega-regular
objective, are explored during execution?}
So, we want to understand how to combine synthesis and RL
to construct systems that are safe,
yet, at the same time, can adapt their behaviour according to the actual
environment in which they execute. 

\subparagraph{Problem statement.} In order to answer in a precise way the
question above, we consider a model halfway between the fully-unknown models
considered in RL and the fully-known models used in verification.  To be precise,
we consider as input an MDP with rewards whose transition probabilities are not
known and whose rewards are discovered on the fly. That is, the input is 
the support of the unknown transition function of the MDP.
This is natural from the point of view of verification since:
we may be working with an underspecified system,
its qualitative behaviour
may have already been observed,
or we may not trust all given probability values.
As optimization objective on
this MDP, we consider the mean-payoff function, and to capture the sure
omega-regular constraint we use a parity objective.

\subparagraph{Contributions.}
Given a lower bound $\pmin$ on the minimal transition probability,
we show that, in partially-specified
MDPs consisting of a single end component (EC), two
combinations of guarantees on the parity and mean-payoff objectives can be
achieved.
(i)
For all $\epsilon$ and $\gamma$, we show how to construct a
finite-memory strategy which almost-surely satisfies the parity objective and
which achieves an $\epsilon$-optimal mean payoff with probability at least $1 -
\gamma$
(Prop.~\ref{pro:fmstrat}).
(ii)
For all $\epsilon$ and $\gamma$, we show how to construct an
infinite-memory strategy which satisfies the parity objective
surely and which achieves an $\epsilon$-optimal mean payoff with probability at
least $1 - \gamma$
(Prop.~\ref{pro:fbstrat2}).
We also extend our results to MDPs consisting of more than one EC in a natural
way (Thms.~\ref{thm:case2} and~\ref{thm:case1}) and study special cases that
allow for improved optimality results as in the case of good ECs
(Props.~\ref{pro:fbstrat} and~\ref{pro:inf-mem-better}).  Finally, we show that
there are partially-specified MDPs for which stronger combinations of the
guarantees cannot be ensured.

Our usage of
$\pmin$ follows~\cite{DBLP:conf/atva/BrazdilCCFKKPU14,DBLP:conf/tacas/DacaHKP16}
where it is argued that
it is necessary for the statistical analysis of unbounded-horizon
properties and realistic in many scenarios. 

\subparagraph{Example: almost-sure constraints.}
Consider the MDP on the right-hand side of
Fig.~\ref{fig:opts} for which we know the support of the
transition function but not the probabilities $x$ and $y$ (for simplicity the
rewards are assumed to be known). 
First, note that while there is no surely winning strategy for the parity
objective in this MDP, playing action $a$ forever in $q_0$ guarantees to visit
state $q_3$ infinitely many times with probability one, i.e. this is a strategy
that almost-surely wins the parity objective.
Clearly, if $x > y$ then it is better to play
$b$ for optimizing the mean-payoff, otherwise, it is better to play $a$. As $x$
and $y$ are unknown, we need to learn estimates $\hat{x}$ and $\hat{y}$ for
those values to make a decision. This can be done by playing $a$ and $b$ a
number of times from $q_0$ and by observing how many times we get up and how
many times we get down. If $\hat{x} > \hat{y}$, we may choose to play $b$
forever in order to optimize our mean payoff.
We then face two difficulties.
First, after the learning episode, we may instead
observe $\hat{x} < \hat{y}$ while $x
\geq y$.
This is because we may have been unlucky and observed statistics that
differ from the real distribution. Second, playing $b$ always is not an option if
we want to satisfy the parity objective with probability $1$ (almost surely).
In this paper, we
give algorithms to overcome these two problems and compute a finite-memory strategy
that satisfies the parity objective with probability $1$ and is close to the optimal
expected mean-payoff value with high probability.

The finite-memory learning strategy produced by our algorithm works as follows
in this example.  First, it chooses $n \in \mathbb{N}$ large enough so that
trying $a$ and $b$ from $q_0$ as many as $n$ times allows it to learn $\hat{x}$ and
$\hat{y}$ such that $|\hat{x}-x| \leq \epsilon$ and $|\hat{y}-y| \leq \epsilon$
with probability at least $1-\gamma$. Then, if $\hat{x} > \hat{y}$ the strategy
plays $b$ for $K$ steps and then $a$ once. $K$ is chosen large enough
so that the mean payoff of any run will be $\epsilon$-close to the best
obtainable expected 
mean payoff with probability at least $1-\gamma$. Furthermore, as $a$
is played infinitely many times, the upper-right state will be visited
infinitely many times with probability $1$. Hence, the strategy is also
almost-surely satisfying the parity objective.

In the sequel we also show that if we allow for learning all along the execution
of the strategy then we can get, on this example, the exact optimal value and
satisfy the parity objective almost surely. However, to do so, we need infinite
memory.

\begin{figure}
    \begin{minipage}[t]{0.55\linewidth}
    \centering
    \begin{tikzpicture}
        \node[state,initial left](q0){$q_0:2$};
        \node[state,right= of q0](q1){$q_1:1$};
        \node[state,right= of q1](q2){$q_2:0$};
        \node[psplit,above=0.5cm of q1](q1a){ };

		\path
        (q0) edge[loop above] node[el]{$a:r_0$} (q0)
        (q0) edge node[el]{$b : 0$} (q1)
        (q1) edge[-] node[el,swap]{$a$} (q1a)
        (q1a) edge[bend right] node[el,swap,pos=0.25]{$1-x : 0$} (q0)
        (q1a) edge[bend left] node[el]{$x : r_1$} (q2)
        (q2) edge node[el,swap]{$a : r_1$} (q1)
        ;
	\end{tikzpicture}
    \end{minipage}
    \hfill
    \begin{minipage}[t]{0.4\linewidth}
    \centering
    \begin{tikzpicture}
        \node[state,initial above](q0) {$q_0 : 1$};
        \node[state,above right= of q0](qr1) {$q_3 : 0$};
        \node[state,below right= of q0](qr2) {$q_4 : 1$};
        \node[state,above left= of q0](ql1) {$q_1 : 1$};
        \node[state,below left= of q0](ql2) {$q_2 : 1$};
        \node[psplit,left=0.5cm of q0](ql) {};
        \node[psplit,right=0.5cm of q0](qr) {};

        \path
        (q0) edge[-] node[el]{$a$} (qr)
        (qr) edge[bend right] node[el,swap]{$x : 0$} (qr1)
        (qr1) edge[bend right] node[el]{$a : 0$} (q0)
        (qr) edge[bend left] node[el]{$1-x: 1$} (qr2)
        (qr2) edge[bend left] node[el,swap]{$a : 1$} (q0)
        (q0) edge[-] node[el,swap]{$b$} (ql)
        (ql) edge[bend left] node[el]{$y : 0$} (ql1)
        (ql1) edge[bend left] node[el,swap]{$a : 0$} (q0)
        (ql) edge[bend right] node[el,swap]{$1-y : 1$} (ql2)
        (ql2) edge[bend right] node[el]{$a : 1$} (q0)
        ;
    \end{tikzpicture}
    \end{minipage}
    \caption{Two automata, representing unknown MDPs, are depicted in the
        figure. Actions label edges from states (circles) to distributions
        (squares); a probability-reward pair, edges from distributions to
        states; an action-reward pair, Dirac
        transitions; a name-priority pair, states.}
    \label{fig:opts}
\end{figure}
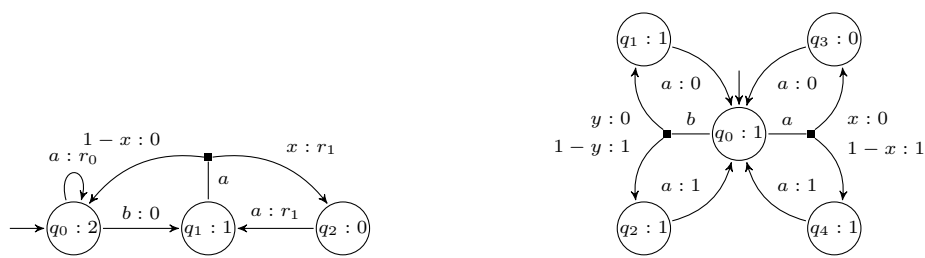

\subparagraph{Related works.}
In~\cite{bfrr14,DBLP:conf/lics/ClementeR15,brr17,DBLP:conf/aaai/Chatterjee0PRZ17},
we initiated the study of a mathematical model that combines MDPs and two-player
zero sum games.  With this new model, we provide formal grounds to synthesize
strategies that guarantee \textit{both} some minimal performance against any
adversary \textit{and} a higher expected performance against a given
expected behaviour of the environment, thus essentially combining the two
traditional standpoints from games and MDPs. Following this approach,
in~\cite{akv16}, Almagor et al.~study MDPs equipped with a mean-payoff and
parity objective. They study the problem of synthesizing a strategy that
ensures an expected mean-payoff value that is as large as possible while
satisfying a parity objective surely.  In~\cite{DBLP:conf/mfcs/ChatterjeeD11},
Chatterjee and Doyen study how to enforce almost surely a parity objective
together with a threshold constraint on the expected mean-payoff.  See
also~\cite{DBLP:conf/atva/BrazdilKN16}, where mean-payoff MDPs with
energy constraints are studied.  In all those works, the transition
probability and the reward function are {\em known} in advance. In contrast, we
consider the more complex setting in which the
reward function is {\em discovered on the fly during execution time} and the
transition probabilities need to be {\em learned}.

In~\cite{DBLP:conf/atva/DavidJLLLST14,DBLP:conf/iros/WenET15,jdtk16,DBLP:journals/corr/abs-1708-08611},
RL is combined with safety guarantees. In those works, there
is a MDP with a set of unsafe states that must be avoided at all cost. This MDP
is then restricted to states and actions that are safe and cannot lead to unsafe
states. Thereafter, classical RL is exercised. The problem that is
considered there is thus very similar to the problem that we study here with the
difference that they only consider {\em safety constraints}.
For safety constraints, the reactive synthesis phase and the
RL can be entirely decoupled with a two-phase algorithm.
A simple two-phase approach cannot be applied to the more general
setting of parity objectives. In our more challenging setting, we need to
intertwine the learning with the satisfaction of the parity objective in a non
trivial way. It is easy to show that reducing parity to safety, as
in~\cite{DBLP:journals/ita/BernetJW02}, could lead to learning strategies that
are arbitrary far from the optimal value that our learning strategies achieve.
In~\cite{DBLP:conf/ijcai/WenT16}, Topcu and Wen study how to learn in a MDP with
a discounted-sum (and not mean-payoff) function and liveness constraints
expressed as deterministic B\"uchi automata that must be enforced {\em almost
surely}.  Contrary to our setting, they do not consider general omega-regular
specifications expressed as parity objectives nor {\em sure} satisfaction. 

Finally, in~\cite{DBLP:conf/atva/BrazdilCCFKKPU14}, we apply RL to
MDPs where even the topology is unknown. Only $\pmin$ and, for convenience, the
size of the state space is given. There, we optimize the probability to satisfy an
omega-regular property; however, no mean payoff is involved. 

\subparagraph{Structure of the paper.} 
In Sect.~2, we introduce the necessary preliminaries.  In Sect.~3, we study
online finite and infinite-memory learning strategies for mean-payoff objectives
without omega-regular constraints.  In Sect.~4, we study 
strategies for mean-payoff objectives under a parity constraint that must be
enforced surely.  In Sect.~5, we study strategies for
mean-payoff objectives under a parity constraint that must be enforced almost
surely.

\section{Preliminaries}

Let $S$ be a finite set.  We denote by $\dist{S}$ the set of all
\emph{(rational) probabilistic distributions} on $S$, i.e. the set of all
functions $f : S \to \mathbb{Q}_{\ge 0}$ such that $\sum_{s \in S} f(s) = 1$.
For sets $A$ and $B$ and functions $g : A \to \dist{S}$ and $h : A \times B \to
\dist{S}$, we write $g(s|a)$ and $h(s|a,b)$ instead of $g(a)(s)$ and $h(a,b)(s)$
respectively.
The \emph{support} of a distribution $f \in \dist{S}$ is the set
\(
    \supp{f} \defequals \{ s \in S \st f(s) > 0 \}.
\)
The support of a function $g : A \to \dist{S}$ is the relation $R \subseteq A
\times S$ such that $(a,s) \in R \defiff g(s|a) > 0$.

\subsection{Markov chains}
\begin{definition}[Markov chains]
    A \emph{Markov chain} $\calC$ (MC, for short)
    is a tuple $(Q,\delta,p,r)$ where $Q$ is
    a (potentially countably infinite) set of states, $\delta$ is a
    (probabilistic) transition function $\delta : Q \to \dist{Q}$, $p : Q \to
    \mathbb{N}$ is a priority function, and $r : \supp{\delta} \to [0,1] \cap
    \mathbb{Q}$ is an (instantaneous) reward function.
\end{definition}
A \emph{run} of an MC is an infinite sequence of states $q_0 q_1 \dots
\in Q^\omega$ such that $\delta(q_{i+1}|q_i) > 0$ for all $0 \leq i$. We denote by
$\runs{\calC}{q_0}$ the set of all runs of $\calC$ that start with the state
$q_0$.

Consider an initial state $q_0$. The \emph{probability} of every measurable
\emph{event} $\calA \subseteq \runs{\calC}{q_0}$ is
well-defined~\cite{vardi85,puterman05}. We denote by $\probevent{\calC}{q_0}{\calA}$
the probability of $\calA$; for a measurable function $f : \runs{\calC}{q_0} \to
\mathbb{R}$, we write $\expect{\calC}{q_0}{f}$ for the \emph{expected value} of
the function $f$ under the probability measure $\probevent{\calC}{q_0}{\cdot}$
(see~\cite{norris98,puterman05} for a detailed
definition of these classical notions).

\subparagraph{Parity and mean payoff.} Consider a run
$\rho = q_0 q_1 \dots$ of $\calC$. We say $\rho$ \emph{satisfies the parity
objective}, written $\rho \models \parity$, if the minimal priority of states
along the run is even. That is to say
\(
    \rho \models \parity \defiff \liminf\{p(q_i) \st i \in \mathbb{N}\} \text{
    is even.}
\)
In a slight abuse of notation, we sometimes write $\parity$ to refer to the set
of all runs of an MC which satisfy the parity objective $\{\rho \in
\runs{\calC}{q_0} \st \rho \models \parity\}$. The latter set of runs is clearly
measurable.

The \emph{mean-payoff function} $\MP$ is defined for all runs $\rho = q_0
q_1\dots$ of $\calC$ as follows
\(
    \MP(\rho) \defequals \liminf_{j \in \mathbb{N}_{> 0}} \frac{1}{j}
    \sum_{i=0}^{j-1} r(q_i,q_{i+1}).
\)
This function is readily seen to be Borel
definable~\cite{chatterjee07}, thus also measurable.

\subsection{Markov decision processes}
\begin{definition}[Markov decision processes]
    A \emph{(finite discrete-time) Markov decision process} $\calM$ (MDP, for
    short) is a tuple $(Q,A,\alpha,\delta,p,r)$ where $Q$ is a finite set of
    states, $A$ a finite set of actions, $\alpha : Q \to \pow{A}$ a function
    that assigns to $q$ its set of available actions, $\delta : Q \times A \to
    \dist{Q}$ a (partial probabilistic) transition function with $\delta(q,a)$
    defined for all $q \in Q$ and all $a \in \alpha(q)$, $p : Q \to
    \mathbb{N}$ a priority function, and $r : \supp{\delta} \to [0,1]
    \cap \mathbb{Q}$ a reward function. We make the assumption that $\alpha(q)
    \neq \emptyset$ for all $q \in Q$, i.e. there are no deadlocks.
\end{definition}
A \emph{history} $h$ in an MDP is a finite state-reward-action sequence that
ends in a state and respects $\alpha$, $\delta$, and $r$, i.e.  if $h = q_0 a_0
x_0 \dots a_{k-1} x_{k-1} q_k$ then $a_i \in \alpha(q_i)$,
$\delta(q_{i+1}|q_i,a_i) > 0$, and $r(q_i,a_i,q_{i+1}) $, for all $0 \leq i <
k$. We write $\last(h)$ to denote the state $q_k$. For two histories $h,h'$, we
write $h < h'$ if $h$ is a \emph{proper prefix} of $h'$.

\begin{definition}[Strategies]
    A \emph{strategy} $\sigma$ in an MDP $\calM = (Q,A,\alpha,\delta,p,r)$ is a
    function $\sigma : (Q \cdot A \cdot \mathbb{Q})^*Q \to \dist{A}$ such that
    $\sigma(a|h) > 0 \implies a \in \alpha(\last(h))$.
\end{definition}
We write that a strategy $\sigma$ is \emph{memoryless} if $\sigma(h) =
\sigma(h')$ whenever $\last(h) = \last(h')$; \emph{deterministic} if for all
histories $h$ the distribution $\sigma(h)$ is Dirac.

Throughout this work we will speak of \emph{steps}, \emph{episodes}, and
\emph{following strategies}. We write that \emph{$\sigma$ follows $\tau$ (from
the history $h = q_0 a_0 x_0 \dots q_k$) during $n$ steps} if for all $h' = q'_0
a'_0 x'_0 \dots q'_{\ell}$, such that $h < h'$ and $\ell \leq k + n$, we have that
$\sigma(h') = \tau(h')$. An episode is simply a finite sequence of
consecutive steps, i.e. a
finite infix of the history, during which one or more strategies may have been
sequentially followed.

A \emph{stochastic Mealy machine} $\calT$ is a tuple $(M,m_0,f_u,f_o)$ where $M$
is a (potentially countably infinite) set of memory elements, $m_0 \in M$ is the
initial memory element, $f_u : M \times Q \times \mathbb{Q} \to M$ is an update
function, and $f_o : M \times Q \to \dist{A}$ is an output function. The machine
$\calT$ is said to implement a strategy $\sigma$ if for all histories $h = q_0
a_0 x_0 \dots a_{k-1} x_{k-1} q_k$ we have $\sigma(h) = f_o(m_k, q_k)$, where
$m_k$ is inductively defined as $m_i = f_u(m_{i-1}, q_{i-1}, x_{i-1})$ for all
$i \ge 1$.  It is easy to see that any strategy can be implemented by such a
machine.  A strategy $\sigma$ is said to have \emph{finite memory} if there
exists a stochastic Mealy machine that implements it and such that its set $M$
of memory elements is finite.

A (possibly infinite) state-action sequence $h = q_0 a_0 x_0 q_1 a_1 x_1 \dots$
is \emph{consistent with strategy $\sigma$} if $\sigma(a_i|q_0 a_0 x_0 \dots
a_{i-1} x_{i-1} q_{i}) > 0$ for all $i \geq 0$.

\subparagraph{From MDPs to MCs.}
The MDP $\calM$ and a strategy $\sigma$ implemented by the stochastic Mealy
machine $(M,m_0,f_u,f_0)$ induce the MC $\calM^\sigma = (Q',\delta',p',r')$
where $Q' = (Q \times M \times A) \cup (Q \times M)$; $\delta'(\langle q',m',a'
\rangle | s) = f_o(a'|m,q) \cdot \delta(q'|q,a')$ for any $s \in \{\langle q,m,a
\rangle, \langle q,m \rangle\}$ and $a' \in \alpha(q)$ with $(q,a',q') \in
\supp{\delta}$ and $m' = f_u(m,q,r(q,a',q'))$; $p'(\langle q,m,a\rangle) =
p'(\langle q,m\rangle) = p(q)$; and $r'(s,\langle q',m',a'\rangle) = r(q,a,q')$
for any $s \in \{ \langle q,m,a \rangle, \langle q,m\rangle\}$. For convenience,
we write $\probevent{\calM^\sigma}{q_0}{\cdot}$ instead of
$\probevent{\calM^\sigma}{\langle q_0, m_0 \rangle}{\cdot}$.

A strategy $\sigma$ is said to be \emph{unichain} if $\calM^\sigma$ has a single
recurrent class, i.e. a single bottom strongly-connected component (BSCC).

\subparagraph{End components.}
Consider a pair $(S,\beta)$ where $S
\subseteq Q$ and $\beta : S \to \pow{A}$ gives a subset of actions allowed per state
(i.e. $\beta(q) \subseteq \alpha(q)$ for all $q \in S$).
Let $\calG_{(S,\beta)}$ be the directed graph $(S,E)$ where $E$ is the set of
all pairs $(q,q') \in S \times S$
such that $\delta(q'|q,a) > 0$ for some $a \in \beta(q)$. We
say $(S, \beta)$ is an \emph{end component} (EC) if the following hold:
if $a \in \beta(s)$, for $(s,a) \in S \times A$, then $\supp{\delta(s,a)}
\subseteq S$; and
the graph $\calG_{(S,\beta)}$ is strongly connected.
Furthermore, we say the EC $(S,\beta)$ is \emph{good (for the parity
objective)} (a GEC, for short) if the minimal priority over all states from $S$ is
even; \emph{weakly good} if it contains a GEC.

For ECs $(S,\beta)$ and $(S',\beta')$, let us denote by $(S,\beta)
\subseteq (S',\beta')$ the fact that $S \subseteq S'$ and
\(
    \beta(s) \subseteq \beta'(s)
\)
for all $s \in S$.  We denote by $\mec{\calM}$ the set of all maximal ECs (MECs)
in $\calM$ with respect to $\subseteq$. It is easy to see that for all
$(S,\cdot),(S',\cdot) \in \mec{\calM}$ we have that $S \cap S' = \emptyset$,
i.e.  every state belongs to at most one MEC.

\subparagraph{Model learning and robust strategies.}
In this work we will ``approximate'' the stochastic dynamics of an unknown EC
in an MDP. Below, we formalize what we mean by
approximation.
\begin{definition}[Approximating distributions]
    Let $\calM = (Q,A,\alpha,\delta,p,r)$ be an MDP, $(S,\beta)$ an EC,
    and $\epsilon \in (0,1)$. We say $\delta'$ is
    $\epsilon$-close to $\delta$ in $(S,\beta)$, denoted $\delta'
    \eclose{\epsilon}{(S,\beta)} \delta$, if
    \(
        \left|\delta'(q'|q,a) - \delta(q'|q,a)\right| \leq \epsilon
    \)
    for all $q,q' \in S$ and all $a \in \beta(q)$. If the inequality holds
    for all $q,q' \in Q$ and all $a \in \alpha(q)$, then we write $\delta'
    \eclose{\epsilon}{ } \delta$.
\end{definition}

A strategy $\sigma$ is said to be \emph{(uniformly) expectation-optimal} if for
all $q_0 \in Q$ we have $\expect{\calM^\sigma}{q_0}{\MP} = \sup_\tau
\expect{\calM^\tau}{q_0}{\MP}$.  The following result captures the idea that some
expectation-optimal strategies for MDPs whose transition function have the same
support are ``robust''. That is, when used to play in another MDP with the same
support and close transition functions, they achieve near-optimal expectation.
\begin{lemma}[Follows from~{\cite[Theorem 6]{solan03}} and~{\cite[Theorem
    5]{chatterjee12}}]\label{lem:robust-opt}
    Consider values $\epsilon,\eta_\epsilon \in (0,1)$ such that
    \(
        \eta_\epsilon \leq \frac{\epsilon \cdot \pmin}{24|Q|};
    \)
    a transition function $\delta'$ such that $\supp{\delta} = \supp{\delta'}$
    and $\delta \eclose{\eta_\epsilon}{} \delta'$ where $\pmin$ is the minimal
    nonzero probability value from $\delta$ and $\delta'$; and a reward function
    $r'$ such that $\max\{|r(q,a,q') - r'(q,a,q')| : (q,a,q') \in
    \supp{\delta}\} \leq \frac{\epsilon}{4}$.
    For all memoryless deterministic expectation-optimal strategies $\sigma$ in
    $(Q,A,\alpha,\delta',p,r')$, for all $q_0 \in Q$, it holds that
    \(
        \left|
        \expect{\calM^\sigma}{q_0}{\MP}
        -
        \sup_{\tau}
        \expect{\calM^\tau}{q_0}{\MP}
        \right|
        \leq
        \epsilon.
    \)
\end{lemma}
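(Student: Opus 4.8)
The plan is to fix the memoryless deterministic expectation-optimal strategy $\sigma$ and to track how its performance, and the optimal value, change as we move between the two MDPs $\calM = (Q,A,\alpha,\delta,p,r)$ and $\calM' = (Q,A,\alpha,\delta',p,r')$. Since $\sigma$ refers only to the action structure $\alpha$, which is common to both, and since $\supp{\delta}=\supp{\delta'}$, $\sigma$ is a legal strategy in either MDP and induces a finite Markov chain in each. Writing $F_\calM(q_0) = \sup_\tau \expect{\calM^\tau}{q_0}{\MP}$ and $F_{\calM'}(q_0) = \sup_\tau \expect{(\calM')^\sigma}{q_0}{\MP}$ with $\sigma$ replaced by a generic $\tau$, the triangle inequality gives
\begin{equation*}
\begin{aligned}
    \left| \expect{\calM^\sigma}{q_0}{\MP} - F_\calM(q_0) \right|
    \le{} & \left| \expect{\calM^\sigma}{q_0}{\MP} - \expect{(\calM')^\sigma}{q_0}{\MP} \right| \\
    & {}+ \left| \expect{(\calM')^\sigma}{q_0}{\MP} - F_{\calM'}(q_0) \right|
    + \left| F_{\calM'}(q_0) - F_\calM(q_0) \right|,
\end{aligned}
\end{equation*}
and the middle term vanishes because $\sigma$ is (uniformly) expectation-optimal in $\calM'$. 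It remains to bound the first and third terms by $\epsilon/2$ each; for both we route the passage from $(\delta,r)$ to $(\delta',r')$ through the intermediate pair $(\delta',r)$, so that each term splits into a transition-only perturbation and a reward-only perturbation.

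The reward-only perturbation is elementary. Along any run the prefix averages of $r$ and of $r'$ differ by at most $\epsilon/4$ at every length, because the two reward functions differ by at most $\epsilon/4$ on $\supp{\delta}$; hence the two values of $\MP$ differ pointwise by at most $\epsilon/4$, and therefore the expected value of $\MP$ under any fixed strategy changes by at most $\epsilon/4$ when $r$ is replaced by $r'$. For the optimal value we additionally use that in a mean-payoff MDP the supremum over all strategies is attained by a memoryless deterministic strategy, which ignores the observed rewards; the comparison then reduces to a per-strategy comparison of two Markov chains with identical structure and $\epsilon/4$-close rewards, so $F$ too changes by at most $\epsilon/4$. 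This consumes $\epsilon/4$ of the budget in each of the two remaining terms and reduces the lemma to the following two statements about transition-only perturbations with $r$ fixed: (a) the value of the fixed memoryless deterministic strategy $\sigma$ changes by at most $\epsilon/4$ when $\delta$ is replaced by $\delta'$, and (b) the optimal value changes by at most $\epsilon/4$ under the same perturbation.

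Statement (b) is exactly the continuity of the value of an MDP under perturbations of its transition function and follows from~\cite[Theorem 6]{solan03}: because $\supp{\delta}=\supp{\delta'}$ and every nonzero entry of $\delta$ and of $\delta'$ is at least $\pmin$, the value is Lipschitz in the transition probabilities with a constant of order $|Q|/\pmin$, and the hypothesis $\eta_\epsilon \le \epsilon\pmin/(24|Q|)$ is calibrated so that the product of this constant with the entrywise perturbation $\eta_\epsilon$ does not exceed $\epsilon/4$. For statement (a), the MDP $\calM$ under $\sigma$ and the MDP $\calM'$ under $\sigma$ are two Markov chains on the same state set, with the same edge set (as $\sigma$ is deterministic and $\supp{\delta}=\supp{\delta'}$), transition probabilities differing entrywise by at most $\eta_\epsilon$ and bounded below by $\pmin$ where nonzero; the mean-payoff value of such a chain is Lipschitz in its transition probabilities with the same order of constant, which is the content of the robustness result~\cite[Theorem 5]{chatterjee12} for structurally equivalent systems, so the same choice of $\eta_\epsilon$ again yields the bound $\epsilon/4$. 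Summing the four $\epsilon/4$ contributions produces the claimed bound $\epsilon$.

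I expect the main obstacle to be bookkeeping rather than anything conceptual: the two cited robustness theorems are stated for somewhat more general models, so one must check that specialising them to mean-payoff MDPs, respectively to finite Markov chains, yields Lipschitz constants no worse than the $6|Q|/\pmin$ implicitly demanded by the hypothesis on $\eta_\epsilon$, and that it is precisely the shared-support assumption together with the common lower bound $\pmin$ that keeps these constants finite --- a perturbation that creates or removes an edge can change the value discontinuously. Everything else is the triangle-inequality decomposition and the pointwise-averaging estimate sketched above.
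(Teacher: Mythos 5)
Your proposal is correct and follows essentially the same route as the paper: both rest on Solan's perturbation bound for the optimal mean-payoff value (combined with Chatterjee's observation that the relevant metric is at most $\eta_\epsilon/\pmin$ under the shared-support and $\pmin$ assumptions), and both allocate the $\epsilon$ budget between the transition and reward perturbations in the same way. The only difference is presentational: you make the strategy-transfer step explicit via the triangle inequality through $\expect{(\calM')^{\sigma}}{q_0}{\MP}$, whereas the paper delegates exactly that step to Chatterjee's observation that a memoryless expectation-optimal strategy for one of two structurally equivalent MDPs whose optimal values differ by at most $\epsilon/2$ is $\epsilon$-optimal for the other.
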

We say a strategy $\sigma$ such as the one in the result above is
\emph{$\epsilon$-robust-optimal} (with respect to the expected mean payoff).

\subsection{Automata as proto-MDPs}
We study MDPs with unknown transition and reward functions. It is therefore
convenient to abstract those values and work with \emph{automata}.

\begin{definition}[Automata]
    A \emph{(finite-state parity) automaton} $\calA$ is a tuple
    $(Q,A,T,p)$ where $Q$ is a finite set of states, $A$ is a finite alphabet of
    actions, $T \subseteq Q \times A \times Q$ is a transition relation,
    and $p : Q \to \mathbb{N}$ is a priority function. We make the
    assumption that for all $q \in Q$ we have $(q,a,q') \in T$ for some
    $(a,q') \in A \times Q$.
\end{definition}

A transition function $\delta : Q \times A \to \dist{Q}$ is then
said to be \emph{compatible} with $\calA$ if 
$\forall (q,a) \in Q \times A: \supp{\delta(q,a)}=\{ q' \mid T(q,a,q')\}$.
For a transition
function $\delta$ compatible with $\calA$ and a reward function $r : 
T \to [0,1] \cap \mathbb{Q}$, we denote by $\autotomdp{\calA}{\delta}{r}$
the MDP $(Q,A,\alpha_T,\delta,p,r)$ where $a \in \alpha_T(q) \defiff \exists 
(q,a,q') \in T$. It is easy to see that the sets of ECs of MDPs
$(Q,A,\alpha_T,\delta,p,r)$ and $(Q,A,\alpha_T,\delta',p,r')$ coincide for all
$\delta'$ compatible with $\calA$ and all reward functions $r'$.  Hence, we will
sometimes speak of the ECs of an automaton.

\subparagraph{Example: sure-constraints.}
Consider the (variable-labelled-)automaton on the left-hand side of
Fig.~\ref{fig:opts}. Note that playing $a$ forever surely wins the parity
objective from everywhere but it may not be optimal for the expected mean
payoff.  To play optimally, we need to learn about the values $r_0$, $r_2$, and
$x$.  Assume that we play for $n$ steps $a$ and $b$ uniformly at random when in
state $q_0$. This will probably allow us to reach $q_1$ and $q_2$ a number of
times, and so to learn $r_0$ and $r_1$, and compute an estimate $\hat{x}$ of
$x$. If $\hat{x} \cdot r_1 > r_0$, we may want to conclude that the optimal
strategy is to always play $b$ from $q_0$. But we face here two major
difficulties.  First, after the learning episode of $n$ steps, we can observe
$\hat{x} \cdot r_1 > r_0$ while $x \cdot r_1 \leq r_0$, this is because we may
have been unlucky and observed statistics that differ from the real
distribution. Second, playing $b$ always is not an option if we want to surely
satisfy the parity objective. In this paper, we give algorithms to overcome the
two problems.  In our example, the strategy constructed by our algorithm will do
the following: given $\epsilon,\gamma \in (0,1)$, choose $n \in \mathbb{N}$
large enough, learn $\hat{x}$ such that $|\hat{x}-x| \leq \epsilon$ with
probability more than $1-\gamma$, then if $\hat{x} \cdot r_1 \leq r_0$, play $a$
forever. Otherwise, keep playing $b$ for longer and longer episodes.  If during
one of these episodes, the state $q_2$ is not visited (i.e. the parity objective
is endangered as the minimal priority seen during the episode is odd) switch to
playing $a$ forever. 

\subparagraph{Transition-probability lower bound.}
Let $\pmin \in [0,1] \cap \mathbb{Q}$ be a \emph{transition-probability lower
bound}. We say that $\delta$ is \emph{compatible} with $\pmin$ if for all
$(q,a,q') \in Q \times A \times Q$ we have that: either $\delta(q'|q,a) \ge
\pmin$ or $\delta(q'|q,a) = 0$.

\section{Learning for MP: the Unconstrained Case}\label{sec:mp-opt-ec}
\label{sec:mp}
In this section, we focus on the design of optimal learning strategies for the
mean-payoff function in the unconstrained single-end-component case. That is, we
have an unknown strongly connected MDP with no parity objective.

We consider, in turn, learning strategies that use finite and infinite memory.
Whereas classical RL algorithms focus on achieving an optimal expected value
(see, e.g.,~\cite{wd92}; cf.~\cite{bdm17}), we prove here that a stronger result
is achievable: one can ensure---using finite memory only---outcomes that are
close to the best expected value with high probability. Further, with infinite
memory the optimal outcomes can be ensured with probability $1$. In both cases,
we argue that our results are tight.

For the rest of this section, let us fix an automaton $\calA = (Q,A,T,p)$ such
that $(Q,\alpha_T)$ is an EC, and some $\pmin \in (0,1]$.

\subparagraph{Yardstick.}
Let $\delta$ be a transition function compatible with $\calA$ and $\pmin$, and 
$r$ be a reward function. The optimal expected mean-payoff value that is achievable in
the unique EC $(Q,\alpha_T)$ is defined as
\(
    \Val(Q,\alpha_T) \defequals \sup_\sigma
    \expect{\autotochain{\calA}{\delta}{r}{\sigma}}{q_0}{\MP}
\)
for any $q_0 \in Q$. Indeed, it is well known that this value
is the same for all states in the same
EC.

Note that this value can always be obtained by a memoryless
deterministic~\cite{gimbert07} and unichain~\cite{bfrr14} expectation-optimal
strategy when $\delta$ and $r$ are known. We will use this value as a
yardstick for measuring the performance of the learning strategies we describe
below.

\subparagraph{Model learning.}
Our strategies learn approximate models of $\delta$ and $r$
to be able to compute near-optimal strategies. To
obtain those models, we use an approach based on ideas from probably
approximately correct (PAC) learning. Namely, we will execute a
random exploration of the MDP for some number of steps and obtain an empirical
estimation of its stochastic dynamics, see e.g.~\cite{valiant84}. We say that a
memoryless strategy $\lambda$ is a \emph{(uniform random) exploration strategy}
for a function $\beta : Q \to \pow{A}$ if 
\(
    \lambda(a|q) = 1/|\beta(q)|
\)
for all $q \in Q, a \in \alpha(q)$ such that
$a \in \beta(q)$ and $\lambda(a|q) = 0$ otherwise.
Each time the random exploration enters a state $q$ and
chooses an action $a$, we say that it performs an experiment on $(q,a)$, and if
the state reached is $q'$ then we say that the result of the experiment is $q'$.
Furthermore, the value $r(q,a,q')$ is then known to us. To learn an
approximation $\delta'$ of the transition function $\delta$, and to learn $r$,
the learning strategy remembers statistics about such experiments. If the random
exploration strategy is executed long enough then it collects sufficiently many
experiment results to accurately approximate the transition function $\delta$
and the exact reward function $r$ with high probability.

The next lemma gives us a bound on the number of $|Q|$-step episodes for which
we need to exercise such a strategy to obtain the desired approximation with at
least some given probability. It can be proved via a simple application of
Hoeffding's inequality.
\begin{lemma}\label{lem:suff-urandom}
    For all ECs $(S,\beta)$ and all
    $\epsilon, \gamma \in (0,1)$ one can compute $n \in
    \mathbb{N}$ (exponential in $|Q|$ and polynomial in $|A|$, $\pmin^{-1}$,
    $\ln(\gamma^{-1})$, and $\epsilon^{-1}$) such that following an
    exploration strategy for $\beta$ during $n$ (potentially
    non-consecutive) episodes of $|Q|$-steps suffices
    to collect enough information to be able to compute a transition function
    $\delta'$ such that
    \(
        \rvprob{\delta' \eclose{\epsilon}{(S,\beta)} \delta} \geq 1 - \gamma.
    \)
\end{lemma}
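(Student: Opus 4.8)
The plan is to reduce the statement to a routine application of Hoeffding's inequality plus a union bound, where the main subtlety is ensuring that during $n$ episodes of length $|Q|$ we obtain, with high probability, enough independent samples of the outcome of each state-action pair $(q,a)$ with $a \in \beta(q)$.

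First, I would fix a target EC $(S,\beta)$ and recall that an exploration strategy $\lambda$ for $\beta$ plays each available action uniformly. The key combinatorial fact is that in the finite Markov chain induced by $\lambda$ on $(S,\beta)$, every state of $S$ is reachable from every other within $|S| \le |Q|$ steps (strong connectivity of $\calG_{(S,\beta)}$), and each transition actually taken has probability at least $\pmin / |A|$ (the $1/|\beta(q)|$ factor times the transition lower bound $\pmin$). Hence, starting from any state, within one $|Q|$-step episode the probability of visiting a \emph{fixed} target state $q$ and then taking a \emph{fixed} action $a \in \beta(q)$ is bounded below by some $p^* \ge (\pmin/|A|)^{|Q|}$, a quantity that is exponentially small in $|Q|$ but strictly positive and computable from $\pmin$, $|Q|$, $|A|$. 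I would make this precise as: in each episode, independently of the past, with probability at least $p^*$ we perform at least one fresh experiment on $(q,a)$.

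Second, I would turn "number of episodes" into "number of experiments per pair." Let $N$ be the number of experiments on a given $(q,a)$ needed so that the empirical frequencies $\hat\delta(q'|q,a)$ are within $\epsilon$ of the true $\delta(q'|q,a)$ for all $q'$ simultaneously, with failure probability at most $\gamma/(2|S||A|)$; by Hoeffding's inequality applied to the $0/1$ indicator "result of experiment is $q'$" this needs $N = O(\epsilon^{-2} \ln(|Q||A|/\gamma))$ experiments (the $|Q|$ inside the log absorbs a further union bound over the at most $|Q|$ possible successors $q'$). Then I choose $n$ large enough that, by a Chernoff/Hoeffding tail bound on the number of successes in $n$ independent Bernoulli($p^*$) trials, the number of episodes in which we experiment on $(q,a)$ is at least $N$ with probability at least $1 - \gamma/(2|S||A|)$; this forces $n$ to be roughly $N / p^*$ plus lower-order terms, i.e. $n$ is exponential in $|Q|$ (through $1/p^*$) and polynomial in $|A|$, $\pmin^{-1}$, $\ln(\gamma^{-1})$, $\epsilon^{-1}$, as claimed. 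Finally a union bound over all $(q,a)$ with $q \in S$, $a \in \beta(q)$ (at most $|S||A|$ of them) gives that with probability at least $1 - \gamma$ the computed $\delta'$ (defined by the empirical frequencies, and extended arbitrarily but compatibly outside $(S,\beta)$) satisfies $\delta' \eclose{\epsilon}{(S,\beta)} \delta$.

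\textbf{The main obstacle} is the independence bookkeeping: the episodes are "potentially non-consecutive," and within a single episode the experiments on different pairs are not independent of each other, nor are successive visits to the same pair within one episode. The clean way around this is to only \emph{count at most one experiment per pair per episode} when deriving the lower bound $p^*$, so that across the $n$ episodes the events "episode $i$ yields a fresh experiment on $(q,a)$" dominate independent Bernoulli($p^*$) variables (formally, by the strong Markov property and a stochastic-domination argument on the episode-level filtration). Everything else — the two Hoeffding applications and the union bounds — is routine, and I would present only the bound on $n$ and the probability accounting in detail, citing Hoeffding's inequality for the concentration steps.
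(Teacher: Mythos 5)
Your proposal is correct and follows essentially the same route as the paper's proof: a per-pair Hoeffding bound with a union bound over states and actions to fix the required number $k$ of experiments per pair (the paper isolates this as a separate lemma), a lower bound $\mu = (\pmin/|A|)^{|Q|}$ on the probability that a fixed pair $(q,a)$ is experimented on during one $|Q|$-step episode, a binomial tail bound to choose $n$ so each pair is sampled at least $k$ times with high probability, and a final union bound plus the law of total probability. Your explicit treatment of the stochastic-domination/independence bookkeeping across episodes is in fact slightly more careful than the paper's, which simply asserts the per-episode success probability bound.
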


\newcommand{\fmstrat}{\sigma_\mathrm{fin}}

\subsection{Finite memory}\label{sec:mp-fm}
We now present a family of finite memory strategies $\fmstrat$ that force, given
any $\epsilon,\gamma \in (0,1)$, outcomes with a mean payoff that is
$\epsilon$-close to the optimal expected value with probability higher than
$1-\gamma$. The strategy  $\fmstrat$ is defined as follows.
\begin{enumerate}
    \item First, $\fmstrat$ follows the model-learning strategy above for $L$
        steps, according to Lemma~\ref{lem:suff-urandom}, in order to obtain an
        approximation $\delta'$ of $\delta$ such that $\delta'
        \eclose{\eta}{ } \delta$ with probability at least $1 - \gamma$.
        A reward function $r'$ is also constructed from the
        observed rewards.
    \item Then, $\fmstrat$ follows a unichain memoryless deterministic
        expectation-optimal strategy $\tau$ for
        $\autotomdp{\calA}{\delta'}{r'}$. 
\end{enumerate}
The following result tells us that if the learning phase is sufficiently long,
then we can obtain, with $\fmstrat$, a near-optimal
mean payoff with high probability.

\begin{proposition}
    For all $\epsilon, \gamma \in (0,1)$, one can compute $L \in \mathbb{N}$
    such that for the resulting finite-memory strategy $\fmstrat$, for all
    $q_0 \in Q$, for all $\delta$ compatible with $\calA$ and $\pmin$, and for
    all reward functions $r$, we have
    \(
        \probevent{\autotochain{\calA}{\delta}{r}{\fmstrat}}{q_0}{
            \rho : 
            \MP(\rho) \geq \Val(Q,\alpha_T) - \epsilon
            } \ge 1 - \gamma.
    \)
\end{proposition}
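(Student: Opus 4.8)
The plan is to combine the sampling bound of Lemma~\ref{lem:suff-urandom} with the robustness result of Lemma~\ref{lem:robust-opt}, choosing the learning horizon $L$ so that after the exploration phase the learned model is close enough (in the sense required by Lemma~\ref{lem:robust-opt}) with probability at least $1-\gamma$, and then arguing that on this good event every run of the exploitation phase achieves mean payoff at least $\Val(Q,\alpha_T)-\epsilon$.

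First I would fix the target accuracy for the learned model. Given $\epsilon$, set $\eta_\epsilon \defequals \frac{\epsilon \cdot \pmin}{24|Q|}$ as in Lemma~\ref{lem:robust-opt}, and note that the exploration strategy also lets us observe the exact rewards $r(q,a,q')$ on every transition it takes, so the reward function $r'$ we build will literally equal $r$ on the whole support (in particular $\max|r-r'| = 0 \le \epsilon/4$). Apply Lemma~\ref{lem:suff-urandom} with the EC $(Q,\alpha_T)$, accuracy $\eta_\epsilon$, and confidence $\gamma$ to obtain the number $n$ of $|Q|$-step episodes needed; set $L \defequals n \cdot |Q|$. Since the exploration strategy is a uniform random strategy for $\alpha_T$ in the strongly connected MDP $\autotomdp{\calA}{\delta}{r}$ (an EC), with probability $1$ it visits every state infinitely often, so the $n$ episodes of $|Q|$ steps will indeed be completed within the $L$-step prefix (up to the usual bookkeeping: one counts episodes rather than raw steps, exactly as Lemma~\ref{lem:suff-urandom} is phrased, so "$L$ steps" should really be read as "$L$ steps, enough to realize $n$ counted episodes" — I would state it that way). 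Hence with probability at least $1-\gamma$ the learned transition function satisfies $\delta' \eclose{\eta_\epsilon}{(Q,\alpha_T)} \delta$; and because $\alpha_T(q)$ already lists exactly the enabled actions and $\supp{\delta'(q,a)} = \{q' : T(q,a,q')\} = \supp{\delta(q,a)}$ by compatibility with $\calA$, we moreover have $\supp{\delta}=\supp{\delta'}$ and $\delta \eclose{\eta_\epsilon}{}\delta'$ globally (there are no transitions outside $Q\times\alpha_T\times Q$ to worry about). Also $\pmin$ is a common lower bound on the nonzero probabilities of $\delta$ and, after rounding, of $\delta'$.

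Next, condition on this good event $G$ (of probability $\ge 1-\gamma$). On $G$, $\autotomdp{\calA}{\delta'}{r'}$ has the same support as $\autotomdp{\calA}{\delta}{r}$, its transition function is $\eta_\epsilon$-close to $\delta$, and its rewards coincide with $r$, so Lemma~\ref{lem:robust-opt} applies: the memoryless deterministic expectation-optimal strategy $\tau$ computed for $\autotomdp{\calA}{\delta'}{r'}$ is $\epsilon$-robust-optimal in $\autotomdp{\calA}{\delta}{r}$, i.e. $\expect{\autotochain{\calA}{\delta}{r}{\tau}}{q_0}{\MP} \ge \sup_\sigma \expect{\autotochain{\calA}{\delta}{r}{\sigma}}{q_0}{\MP} - \epsilon = \Val(Q,\alpha_T) - \epsilon$. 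The remaining — and I expect main — obstacle is to upgrade this statement about the \emph{expectation} of $\MP$ under $\tau$ to an \emph{almost-sure} statement about individual runs, since the proposition claims $\MP(\rho) \ge \Val(Q,\alpha_T)-\epsilon$ for (conditionally) almost every $\rho$, not merely in expectation. This is where unichainness of $\tau$ is used: the Markov chain $\autotochain{\calA}{\delta}{r}{\tau}$ has a single BSCC, and on any finite Markov chain $\MP$ is almost-surely constant on runs that end up in a given BSCC, equal to the expected mean payoff computed from the stationary distribution of that BSCC; hence $\MP(\rho) = \expect{\autotochain{\calA}{\delta}{r}{\tau}}{q_0}{\MP} \ge \Val(Q,\alpha_T)-\epsilon$ for almost every run of $\autotochain{\calA}{\delta}{r}{\tau}$, regardless of the starting state. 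Finally, the exploitation phase of $\fmstrat$ from the history reached at step $L$ plays exactly $\tau$, and since $\MP$ ignores any finite prefix (it is a $\liminf$ of Cesàro averages), the $L$-step learning prefix does not affect the value of $\MP(\rho)$; one passes from the "from $q_0$" statement to "from the state reached after learning" by the same BSCC argument. Therefore, on the event $G$, every run of $\fmstrat$ satisfies $\MP(\rho) \ge \Val(Q,\alpha_T)-\epsilon$ almost surely, which gives $\probevent{\autotochain{\calA}{\delta}{r}{\fmstrat}}{q_0}{\rho : \MP(\rho)\ge \Val(Q,\alpha_T)-\epsilon} \ge \rvprob{G} \ge 1-\gamma$, as required. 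The only subtlety to handle carefully in the write-up is that the exploration prefix is random, so one should condition on each possible history $h$ of length $L$ lying in $G$ and on the resulting $(\delta',r',\tau)$, apply the BSCC argument to $\autotochain{\calA}{\delta}{r}{\tau}$ started from $\last(h)$, and then integrate over $h \in G$.
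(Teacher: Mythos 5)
Your proposal is correct and follows essentially the same route as the paper's proof: Lemma~\ref{lem:suff-urandom} with accuracy $\eta_\epsilon$ (the paper uses $\eta=\min\{\pmin,\eta_\epsilon\}$, which coincides since $\eta_\epsilon<\pmin$ anyway) to obtain a good model with probability at least $1-\gamma$, Lemma~\ref{lem:robust-opt} for the $\epsilon$-robust-optimality of $\tau$, and the almost-sure convergence of the mean payoff to its expectation in the unichain induced chain (the paper's Lemma~\ref{lem:tracol}, item~\ref{itm:lim}) to pass from an expectation bound to the run-level guarantee. The only point worth stating more carefully is that $r'=r$ holds on the good event rather than unconditionally: it is the closeness $\delta'\eclose{\eta_\epsilon}{}\delta$ with $\eta_\epsilon<\pmin$ that forces every supported transition to have been sampled at least once, hence its reward observed.
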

\begin{proof}
    We will make use of Lemma~\ref{lem:robust-opt}. For that purpose, let $\eta
    = \min\{\pmin, \eta_\epsilon\}$ where $\eta_\epsilon$ is as in the
    statement of the lemma. Next, we set $L = |Q|n$ where $n$ is as dictated by
    Lemma~\ref{lem:suff-urandom} using $\eta$ and $\gamma$. By
    Lemma~\ref{lem:suff-urandom}, with probability at least $1 - \gamma$ our
    approximation $\delta'$ is such that $\delta' \eclose{\eta}{ } \delta$. 
    Since $\eta \leq \pmin$, it follows that $\supp{\delta} =
    \supp{\delta'}$ and we now have learned $r$, again
    with probability $1 - \gamma$. Finally, since $\eta \leq \eta_\epsilon$ and
    $\tau$ is a unichain memoryless deterministic expectation-optimal strategy,
    Lemmas~\ref{lem:robust-opt} and~\ref{lem:tracol} item~\ref{itm:lim} imply
    the desired result.
\end{proof}

\begin{remark}[Finite-memory implementability]
    Note that $\fmstrat$, as we described it previously, is not immediately seen
    to be a computable finite stochastic Mealy machine. Let us consider all
    possible histories of length $L$. Observe that this set is not finite
    because of the unknown rewards which can range over arbitrary rational
    numbers in $[0,1]$. However, we can finitize the set by focusing only on
    rewards of bounded representation size by imposing an upper-bound on the
    bitsize of their representation (truncating the rest off observed rewards)
    while still satisfying the hypotheses of Lemma~\ref{lem:robust-opt}.  Now,
    for all such histories we can compute an approximation $\delta'$ of $\delta$
    and an approximation $r'$ of the observed reward function $r$.  Using that
    information, the required finite-memory expectation-optimal strategy $\tau$
    can be computed. We encode these (finitely many) strategies into the
    machine implementing $\fmstrat$ so that it only has to choose which one to
    follow forever after the (finite) learning phase has ended.  Hence, one can
    indeed construct a finite-memory strategy realizing the described strategy.
\end{remark}

\subparagraph{Optimality.}
The following tells us that we cannot do better with finite memory
strategies.

\begin{proposition}\label{pro:opt-fin-mem}
    Let $\calA$ be the single-EC automaton on the right-hand side of
    Fig.~\ref{fig:opts} and $\pmin \in (0,1]$.
    For all $\epsilon,\gamma \in (0,1)$, the following two statements hold.
    \begin{itemize}
        \item For all finite memory strategies $\sigma$, there exist $\delta$
            compatible with $\calA$ and $\pmin$,
            and a reward function $r$, such that
            \(
                \probevent{\autotochain{\calA}{\delta}{r}{\sigma}}{q_0}{
                    \rho : 
                    \MP(\rho) \geq \Val(Q,\alpha_T) - \epsilon
                    } < 1.
            \)
        \item For all finite memory strategies $\sigma$, there exist $\delta$
            compatible with $\calA$ and $\pmin$, and a reward function
            $r$ such that
            \(
                \probevent{\autotochain{\calA}{\delta}{r}{\sigma}}{q_0}{
                    \rho : 
                    \MP(\rho) < \Val(Q,\alpha_T)
                    } \ge \gamma.
            \)
    \end{itemize}
\end{proposition}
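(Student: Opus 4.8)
The plan is to pass, for any fixed finite-memory $\sigma$, to the finite Markov chain it induces, read off the mean payoff inside its bottom strongly connected components (BSCCs) via the ergodic theorem, and then exploit that the adversary picks $\delta$ (and, if convenient, $r$) last. I fix the reward function drawn in the figure; then for every $\delta$ compatible with $\calA$ and $\pmin$ -- equivalently every pair $x,y$ with $x,1-x,y,1-y\ge\pmin$ -- one checks $\Val(Q,\alpha_T)=\max\{1-x,1-y\}$: playing $a$ (resp.\ $b$) forever yields $\MP=1-x$ (resp.\ $1-y$) almost surely, since each two-step excursion from $q_0$ collects total reward $2$ with probability $1-x$ (resp.\ $1-y$) and $0$ otherwise, and memoryless deterministic strategies are optimal for mean payoff~\cite{gimbert07}. (If $\pmin\ge\tfrac12$ then $x=y=\tfrac12$ is forced, $\MP=\tfrac12$ always, and both items are trivial, so assume $\pmin<\tfrac12$.)

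Let $\calC:=\autotochain{\calA}{\delta}{r}{\sigma}$. Since the state set of $\calC$ and its BSCC decomposition depend only on $\supp\delta$, they are the same for all admissible $(x,y)$; let $B_1,\dots,B_k$ be the reachable BSCCs. Each $B_i$ contains a configuration whose state component is $q_0$ (every run returns to $q_0$ within one step), and $q_0$ is the only state at which $\sigma$ can branch. Almost every run from $q_0$ enters some $B_i$, and inside the finite irreducible chain $B_i$ the ergodic theorem gives $\MP(\rho)=v_i(x,y)$ almost surely, where $v_i(x,y)=f_i(x,y)(1-x)+(1-f_i(x,y))(1-y)\le\Val(x,y)$ and $f_i(x,y)\in[0,1]$ is the stationary frequency with which $\sigma$ plays $a$ at $q_0$ inside $B_i$ (all excursions from $q_0$ have length $2$, so this is exactly the fraction of steps spent in $a$-excursions). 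Hence $\probevent{\calC}{q_0}{\rho:\MP(\rho)\ge\Val-\epsilon}=1$ iff $v_i(x,y)\ge\Val(x,y)-\epsilon$ for every reachable $B_i$, while $\probevent{\calC}{q_0}{\rho:\MP(\rho)<\Val}=\sum_{i:\,v_i(x,y)<\Val(x,y)}p_i(x,y)$ with $p_i(x,y)>0$ the probability of reaching $B_i$; both $f_i$ and $p_i$ are continuous (in fact rational) in $(x,y)$ on $[\pmin,1-\pmin]^2$.

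For the first item I would argue by contradiction: assume $\probevent{\calC}{q_0}{\rho:\MP(\rho)\ge\Val-\epsilon}=1$ for all admissible $(x,y)$. Fixing a reachable $B_i$ and substituting $(x,y)=(\pmin,1-\pmin)$ and then $(x,y)=(1-\pmin,\pmin)$ into the formula for $v_i$ yields $f_i(\pmin,1-\pmin)\ge 1-\tfrac{\epsilon}{1-2\pmin}$ and $f_i(1-\pmin,\pmin)\le\tfrac{\epsilon}{1-2\pmin}$, so inside a single BSCC the frequency of $a$ must be driven essentially all the way to $1$ under one model and all the way to $0$ under the other. Ruling this out is the main obstacle, because a finite-memory strategy is allowed to keep ``re-learning'' and thereby does adjust $f_i$ with $(x,y)$. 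The resolution I would aim for is that a machine with $|M|$ memory elements accumulates only boundedly much information between consecutive visits to $q_0$, so one can choose $x<y$ with gap small enough (depending on $|M|$) that the two branch distributions at $q_0$ are statistically indistinguishable over every $O(|M|)$-step window; a pigeonhole/coupling argument over the finitely many memory configurations then bounds the conditional probability of choosing $a$ at the next visit of $q_0$ away from both $0$ and $1$ by a constant $c(|M|)>0$, uniformly for $(x,y)$ in a neighbourhood, which forces $f_i$ away from both extremes and contradicts the two bounds above. For such $(x,y)$ and the fixed $r$ one gets $v_i<\Val-\epsilon$, hence the displayed probability is $<1$.

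For the second item, the same formula shows that when $x\ne y$ one has $v_i(x,y)=\Val(x,y)$ only if $f_i(x,y)=1$ (when $x<y$) or $f_i(x,y)=0$ (when $x>y$), i.e.\ $\sigma$ is committed to the optimal action inside $B_i$; in particular any BSCC in which $\sigma$ plays both $a$ and $b$ with positive probability contributes to the event $\MP(\rho)<\Val$ whenever $x\ne y$. Partitioning the (fixed) reachable BSCCs into the ``always $a$'', ``always $b$'', and ``mixed'' classes, a continuity argument as $(x,y)$ tends to the diagonal $x=y$ shows that the first two classes cannot both have reaching probability exceeding $1-\gamma$ under all nearby admissible models (their reaching probabilities are continuous and sum to at most $1$ once the ``mixed'' mass is subtracted), so for one choice of the order of $x,y$ the non-optimal classes carry probability at least $\gamma$; this is where the bound $\gamma$ comes from. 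Its delicate part is again to exhibit a concrete admissible $(x,y)$ for which the finite-memory strategy cannot reliably identify the better action, so that with probability at least $\gamma$ the run ends up committed to the wrong action or keeps mixing -- on that event $\MP(\rho)<\Val$ -- giving $\probevent{\calC}{q_0}{\rho:\MP(\rho)<\Val}\ge\gamma$.
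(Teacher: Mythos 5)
Your reduction to the BSCCs of the induced finite chain, the ergodic-theorem identification of $\MP$ with the stationary $a$-frequency $f_i$ at $q_0$, and the resulting formula $v_i=f_i(1-x)+(1-f_i)(1-y)$ are all correct and considerably more explicit than what the paper provides: the paper's own proof is only a four-sentence sketch asserting that finite memory forces imprecise models with residual error probability. In that sense your approach is the paper's approach, formalized further. The difficulty is that you stop exactly where the mathematical content begins, and you say so yourself. For the first item, the claim that a machine with $|M|$ memory elements cannot drive $f_i$ essentially to $1$ under one model and to $0$ under a nearby one is the whole theorem; the coupling/pigeonhole plan is plausible but unexecuted, and it needs care on two points you do not address: the memory update also observes the (adversarially chosen, exactly revealed) rewards, and $f_i$ is a global stationary quantity of the BSCC rather than a per-visit decision probability, so ``boundedly much information between visits to $q_0$'' does not by itself bound what the chain structure can encode in the long run.

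The second item has a sharper problem: your continuity argument at the diagonal provably cannot deliver the statement for $\gamma>\tfrac12$. From $p_A+p_B\le 1$ you can only conclude that $p_A$ and $p_B$ do not both exceed $1-\gamma$ when $2(1-\gamma)\ge 1$; for $\gamma>\tfrac12$ the split $p_A=p_B=\tfrac12$ defeats it, and nothing in your ``delicate part'' indicates how to go further. Worse, it is unclear that any argument can: consider the strategy that runs $N$ excursions of each action, commits to the one with the higher empirical average reward, and breaks ties arbitrarily. Under \emph{every} admissible $(\delta,r)$, the event that all $N$ excursions of the truly better action hit its higher-reward outcome while all $N$ excursions of the other action hit its lower-reward outcome has probability at least $\pmin^{2N}$, and on that event the strategy commits correctly; hence $\probevent{\autotochain{\calA}{\delta}{r}{\sigma}}{q_0}{\rho:\MP(\rho)<\Val(Q,\alpha_T)}\le 1-\pmin^{2N}$ for every model, which contradicts the second item for $\gamma$ close to $1$. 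So either a genuinely different construction of the adversarial model is needed, or the universal quantification over $\gamma$ in the second item is itself suspect; your proposal (like the paper's sketch) does not confront this, and as written the second half of your argument would fail for large $\gamma$.
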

\begin{proof}[Proof sketch]
    With a finite-memory strategy we cannot satisfy a stronger guarantee than
    being $\epsilon$-optimal with probability at least $1 - \gamma$ in this
    example.
    Indeed, as we can only use finite memory, we can only learn imprecise models
    of $\delta$ and $r$. That is, we will always have a non-zero probability to
    have approximated $x$ or $y$ arbitrarily far from their actual values. It
    should then be clear that neither optimality with high probability nor
    almost-sure $\epsilon$-optimality can be achieved.
\end{proof}

\newcommand{\imstrat}{\sigma_\infty}

\subsection{Infinite memory}
While we have shown that probably approximately optimal is the best that can
be obtained with finite memory learning strategies, we now establish that with
infinite memory, one can guarantee almost sure optimality.

To this end, we define a strategy $\imstrat$ which operates in episodes
consisting of two phases: learning and optimization. In episode $i \in
\mathbb{N}$, the strategy does the following.
\begin{enumerate}
    \item It first follows an exploration strategy $\lambda$ for $\alpha_T$
        during $L_i$ steps, there exist models $\delta_i$ and $r_i$ based on the
        experiments obtained throughout the $\sum_{j=0}^i L_j$ steps during
        which $\lambda$ has been followed so far.
    \item Then, $\imstrat$ follows a unichain memoryless
        deterministic expectation-optimal strategy
        $\sigma_{\mathrm{MP}}^{\delta_i}$ for $\autotomdp{\calA}{\delta_i}{r_i}$
        during $O_i$ steps.
\end{enumerate}
One can then argue that $\imstrat$ can be instantiated so that in every episode
the finite average obtained so far gets ever close to $\Val(Q,\alpha_T)$ with
ever higher probability. This is achieved by choosing the $L_i$ as an increasing
sequence so that the approximations $\delta_i$ get ever better with ever higher
probability. Then, the $O_i$ are chosen so as to compensate
for the past history,
for the time before the induced MC reaches its limit
distribution, and
for the future number of steps that will be spent learning in the next episode.
The latter then allows us to use the Borel-Cantelli lemma to show that in
the unknown EC we can obtain its value almost surely. 
\begin{proposition}\label{pro:mp-opt-ec}
    One can compute a sequence $(L_i,O_i)_{i \in \mathbb{N}}$ such that $L_i
    \geq |Q|$ for all $i \in \mathbb{N}$; additionally the resulting strategy
    $\imstrat$ is such that for all $q_0 \in Q$, for
    all $\delta$ compatible with $\calA$ and $\pmin$, and for all reward
    functions $r$, we have
    \(
        \probevent{\autotochain{\calA}{\delta}{r}{\imstrat}}{q_0}{
        \rho : \MP(\rho) \ge \Val(Q,\alpha_T)
        } = 1.
    \)
\end{proposition}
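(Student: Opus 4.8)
The plan is to show that along almost every run of $\autotochain{\calA}{\delta}{r}{\imstrat}$, the running average of rewards does not dip below $\Val(Q,\alpha_T)$ in the limit inferior; since $\MP \le \Val(Q,\alpha_T)$ can never be exceeded in expectation, this pins the value exactly. The construction of the schedule $(L_i,O_i)$ is driven by three competing quantities that the optimization phase of episode $i$ must \emph{dilute to insignificance}: (a) the total length $H_i = \sum_{j<i}(L_j+O_j) + L_i$ of all steps preceding the $i$-th optimization phase, whose accumulated reward we cannot control and may be as small as $0$; (b) the mixing time $t_i$ of the Markov chain induced by $\sigma_{\mathrm{MP}}^{\delta_i}$ on $\autotomdp{\calA}{\delta_i}{r_i}$, after which the empirical average over the optimization phase concentrates around the stationary mean; and (c) the length $L_{i+1}$ of the \emph{next} learning phase, which will again contribute nothing we can rely on. Concretely I would first fix, via Lemma~\ref{lem:suff-urandom}, a sequence $L_i$ growing fast enough that with $\delta_i$ the event $\delta_i \eclose{\eta_i}{(Q,\alpha_T)} \delta$ fails with probability at most $\gamma_i$, where $\eta_i \downarrow 0$ and $\sum_i \gamma_i < \infty$; by Lemma~\ref{lem:robust-opt}, on the good event the expectation-optimal strategy for the learned model is $\epsilon_i$-robust-optimal in the true MDP with $\epsilon_i \downarrow 0$.

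Next I would analyse a single optimization phase conditioned on the good event. Running $\sigma_{\mathrm{MP}}^{\delta_i}$ in $\autotochain{\calA}{\delta}{r}{\cdot}$ from whatever state the learning phase left us in, after the mixing time $t_i$ the chain is within, say, $\epsilon_i$ (in total variation) of its stationary distribution, whose reward expectation is at least $\Val(Q,\alpha_T) - \epsilon_i$ by robust-optimality. Using a concentration bound (Hoeffding/Azuma on the martingale of per-step rewards, rewards lying in $[0,1]$) I would choose $O_i$ large enough that the empirical average of rewards \emph{over the block of $O_i - t_i$ post-mixing steps} is at least $\Val(Q,\alpha_T) - 2\epsilon_i$ except with probability at most $\gamma_i'$, again summable. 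Finally I pick $O_i$ to additionally dominate $H_i + L_{i+1} + t_i$ by a factor $\epsilon_i^{-1}$, say $O_i \ge \epsilon_i^{-1}(H_i + L_{i+1} + t_i)$; this is the self-referential part of the recursion but it is solvable since $O_i$ appears only on the left and $H_{i+1}$ on the right depends on $O_i$ already chosen. With this, the prefix of length $H_i$, the mixing segment of length $t_i$, and the following learning segment of length $L_{i+1}$ together contribute at most an $O(\epsilon_i)$ fraction to the average computed at the end of the $i$-th optimization phase, so that average is $\ge \Val(Q,\alpha_T) - O(\epsilon_i)$ on the intersection of the two good events for episode $i$.

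The final step is a Borel--Cantelli argument. Let $B_i$ be the union of the two "bad" events for episode $i$ (learning failed, or the optimization-phase average failed to concentrate); by construction $\sum_i \rvprob{B_i} \le \sum_i (\gamma_i + \gamma_i') < \infty$, so almost surely only finitely many $B_i$ occur. On that almost-sure event, for all large $i$ the running average at the checkpoint $j = H_i + O_i$ (end of the $i$-th optimization phase) is at least $\Val(Q,\alpha_T) - O(\epsilon_i) \to \Val(Q,\alpha_T)$. Because consecutive checkpoints are within a factor $1+O(\epsilon_i)$ of one another in length (the dominance $O_i \gg H_i + L_{i+1}$ again), the running average at \emph{intermediate} times $j$ cannot fall much below the checkpoint values either, so $\liminf_j \frac1j \sum_{i<j} r(q_i,q_{i+1}) \ge \Val(Q,\alpha_T)$ on this almost-sure event; that is exactly $\MP(\rho) \ge \Val(Q,\alpha_T)$. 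Since the reverse inequality $\MP(\rho) \le \Val(Q,\alpha_T)$ holds surely (no strategy beats the value along any run, as the value is the supremum of expectations and $\MP$ is defined with a $\liminf$), we get $\MP(\rho) = \Val(Q,\alpha_T)$ almost surely, which implies the stated probability-one bound. The main obstacle I anticipate is step two: making the bound on a single optimization phase genuinely \emph{uniform} over the (unknown) entry state and over all $\delta$ compatible with $\calA$ and $\pmin$ — this forces the mixing time $t_i$ to be bounded in terms of $\pmin$, $|Q|$ and $\eta_i$ only, and requires that the learned strategy $\sigma_{\mathrm{MP}}^{\delta_i}$, though only guaranteed to be \emph{unichain}, still induces a chain whose single recurrent class is entered and mixed within a $\pmin$-controlled number of steps; handling the transient prefix of the induced chain cleanly is where the careful bookkeeping lies.
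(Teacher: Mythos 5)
Your proposal is correct and follows essentially the same route as the paper: learn with summable failure probabilities via Lemmas~\ref{lem:suff-urandom} and~\ref{lem:robust-opt}, choose each $O_i$ to dilute the past history, the transient/mixing prefix, and the next learning phase (the paper's $P_i$ and $F_i$ terms in the proof of Lemma~\ref{lem:local-opt} play exactly the role of your requirement $O_i \ge \epsilon_i^{-1}(H_i+L_{i+1}+t_i)$, and its Lemma~\ref{lem:tracol} packages your mixing-plus-concentration step), then conclude by Borel--Cantelli. One harmless blemish: your closing aside that $\MP(\rho)\le\Val(Q,\alpha_T)$ holds \emph{surely} is false (exceptional runs can exceed the value), but it is not needed since the proposition only asserts the lower bound.
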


\subparagraph{Optimality.} Note that $\imstrat$ is
optimal since it obtains with probability $1$ the best value that can be
obtained when the MDP is fully known, i.e. when $\delta$ and $r$ are known in
advance.

\section{Learning for MP under a Sure Parity Constraint}

\newcommand{\parstrat}{\sigma_{\mathrm{par}}}

We show here how to design learning strategies that obtain near-optimal
mean-payoff values while ensuring that all runs satisfy a given parity objective
with certainty. 

First, we note that all such learning strategies must avoid entering states $q$
from which there is no strategy to enforce the parity objective with certainty.
Hence, we make the hypothesis that all such states have been removed from the
automaton $\calA$, and so we assume that for all $q_0 \in Q$ there exists a
strategy $\parstrat$ such that for all functions $\delta$ compatible with
$\calA$, for all reward functions $r$, and for all $\rho \in
\runs{\autotochain{\calA}{\delta}{r}{\sigma}}{q_0}$, we have $\rho \models
\parity$.  It is worth noting that, in fact, there exists a memoryless
deterministic strategy such that the condition holds for all $q_0 \in
Q$~\cite{ag11,ar17}.  Notice the swapping of the quantifiers over the initial
states and the strategy, this is why we say it is \emph{uniformly winning
(for the parity objective)}.  The set of states to be removed, along with a
uniformly winning strategy, can be computed in quasi-polynomial
time~\cite{cjkls17}. We say that an automaton with no states from which there is
no winning strategy is \emph{surely good}.

We study the design of learning strategies for mean-payoff optimization under
{\em sure} parity constraints for increasingly complex cases.

\subsection{The case of a single good end component}
Consider a surely-good automaton $\calA = (Q,A,T,p)$ such that $(Q,\alpha_T)$ is
a GEC, i.e. the minimal priority of a state in the EC is even, and some $\pmin \in
(0,1]$.

\subparagraph{Yardstick.}
For this case, we use as yardstick the optimal expected mean-payoff value:
\(\Val(Q,\alpha_T) =\sup_\sigma
\expect{\autotochain{\calA}{\delta}{r}{\sigma}}{q_0}{\MP}.\)

\subparagraph{Learning strategy.}
We show here that it is possible to obtain an optimal mean-payoff
with high probability. Note that our solution extends a result given by Almagor et
al.~\cite{akv16} for {\em known} MDPs. The main idea behind our solution is to
use the strategy $\imstrat$ from Proposition~\ref{pro:mp-opt-ec} in a
controlled way: we verify that during all successive learning and optimization
episodes, the minimal parity value that is visited is even. If during some
episode, this is not the case, then we resort to a strategy $\parstrat$ that
enforces the parity objective with certainty. Such $\parstrat$ is guaranteed to
exist as $\calA$ is surely good. 

\begin{proposition}\label{pro:fbstrat}
    For all $\gamma \in (0,1)$, there exists a strategy $\sigma$ such that for
    all $q_0 \in Q$, for all $\delta$ compatible with $\calA$ and $\pmin$, and
    for all reward functions $r$, we have
        $\rho \models \parity$ for all $\rho \in
            \runs{\autotochain{\calA}{\delta}{r}{\sigma}}{q_0}$ and
        $\probevent{\autotochain{\calA}{\delta}{r}{\sigma}}{q_0}{
                        \rho 
                        :
                        \MP(\rho) \ge \Val(Q,\alpha_T)
                } \ge 1 - \gamma$.
\end{proposition}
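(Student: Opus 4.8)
The plan is to combine the infinite-memory strategy $\imstrat$ of Proposition~\ref{pro:mp-opt-ec} with the uniformly winning parity strategy $\parstrat$ as a ``safety net'', exactly as the learning-strategy paragraph suggests. Concretely, I would define $\sigma$ to run $\imstrat$ episode by episode, but equip it with a monitor that tracks the minimal priority seen since the start of the current episode. Since $\calA$ is a GEC, the minimal priority $\pmin^{\calA}$ over all of $Q$ is even; so as long as no episode is ``corrupted'' the parity objective is trivially met because $\imstrat$ stays inside the single EC and thus visits a minimal-priority state infinitely often. The monitor declares an episode \emph{bad} if, by the end of that episode (learning phase plus optimization phase), some state of priority strictly smaller than the smallest even priority witnessed so far has been seen with no subsequent smaller even priority — more simply, one can just check whether the minimal priority over the whole run so far is even, and since $\imstrat$ is confined to the GEC this check essentially never fails; but to be safe against the (probability-zero but not impossible) runs that avoid the minimal-priority state, I would, upon detecting that the liminf of priorities is in danger of being odd, permanently switch to $\parstrat$.

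The key steps, in order, are: (1) Recall that $\calA$ surely good and $(Q,\alpha_T)$ a GEC means there is a memoryless deterministic uniformly winning $\parstrat$, and the minimal priority of $Q$ is even. (2) Instantiate $\imstrat$ via Proposition~\ref{pro:mp-opt-ec} with the given $\gamma$ replaced by a suitable $\gamma' \le \gamma$, obtaining $(L_i,O_i)_i$ with $L_i \ge |Q|$ and the almost-sure mean-payoff guarantee. (3) Define $\sigma$ to play $\imstrat$ but, after each episode $i$, inspect whether the minimal priority seen so far along the current run is even; if at any point the monitored infimum of priorities has stabilized at an odd value — equivalently, if for sufficiently long no minimal-priority (even) state has been visited — switch irrevocably to $\parstrat$ from the current state. (4) Argue the \emph{sure} parity guarantee: any run of $\calM^\sigma$ either never triggers the switch, in which case it is a run of $\imstrat$ confined to the GEC and hence visits the minimal-priority even state infinitely often, so $\rho \models \parity$; or it triggers the switch, in which case from that point it follows the uniformly winning $\parstrat$, so again $\rho \models \parity$. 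Either way every run satisfies parity. (5) Argue the mean-payoff guarantee: the set of runs of $\imstrat$ that would trigger the switch has probability $0$ (since $\imstrat$ is confined to the GEC, it visits the minimal-priority state infinitely often almost surely), so with probability $1$ the run of $\sigma$ coincides with a run of $\imstrat$; by Proposition~\ref{pro:mp-opt-ec} such runs achieve $\MP(\rho) \ge \Val(Q,\alpha_T)$ with probability $1 \ge 1-\gamma$.

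The main obstacle is making the monitor well-defined and ensuring that triggering the switch is both (a) detectable after finitely many steps on the bad runs and (b) probability zero on $\imstrat$'s runs, while still preserving \emph{sure} (not merely almost-sure) parity on the runs that do trigger it. The subtlety is that ``the liminf of priorities is odd'' is not a finitely-observable event, so the switch cannot literally be triggered by that condition; instead one must pick a schedule — e.g., declare episode $i$ bad if the minimal-priority even state of $Q$ was not visited during the whole of episode $i$ — and verify that (i) on every run where parity would otherwise fail, infinitely many (indeed eventually all) episodes are bad, so the switch fires, and (ii) on $\imstrat$'s runs the probability that infinitely many episodes are bad is $0$ (here the confinement to the GEC and the fact that $L_i \ge |Q|$, giving each episode enough length to reach the minimal-priority state with probability bounded below, is what one leans on, together with Borel--Cantelli — note the complementary use of the lemma to the one in Proposition~\ref{pro:mp-opt-ec}). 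Once the monitor is set up this way, steps (4) and (5) are routine, and the only remaining bookkeeping is threading $\gamma$ through Proposition~\ref{pro:mp-opt-ec} and noting that the $\parstrat$-branch contributes nothing to the failure probability of the mean-payoff event since it has probability $0$ under $\imstrat$.
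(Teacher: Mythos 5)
Your overall architecture (run $\imstrat$, monitor visits to the minimal even priority, fall back irrevocably to $\parstrat$) is the same as the paper's, and your step (4) for sure parity is essentially right once the monitor is fixed so that every non-switching run is guaranteed to visit the minimal-even-priority state in each monitoring window. But step (5) contains a genuine error: the set of runs on which the switch fires cannot have probability $0$. There exist runs consistent with $\imstrat$, confined to the GEC, that never visit the minimal-even-priority state and hence violate parity (they have probability $0$ but are not impossible); to guarantee \emph{sure} parity, each such run must trigger the switch after some finite prefix, and every finite prefix consistent with $\imstrat$ has positive probability (all transition probabilities are at least $\pmin$ and the exploration strategy puts positive mass on every available action), so the switching event contains a cylinder of positive measure. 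Your conditions (i) and (ii) in the ``main obstacle'' paragraph are therefore jointly unsatisfiable for any finitely-observable trigger: (i) forces the trigger set to have positive probability. This is exactly why the proposition asserts $1-\gamma$ rather than $1$.

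The missing content is the quantitative tuning that makes the switching probability at most $\gamma$, which is where $\gamma$ actually enters the proof (your step (2) tries to thread $\gamma$ through Proposition~\ref{pro:mp-opt-ec}, but that proposition is an almost-sure statement with no $\gamma$ parameter). The paper groups the learning phases into windows of lengths $n_1,n_2,\dots$ chosen so that, with $\zeta>0$ a lower bound on the probability of visiting the minimal-even-priority state during a single learning phase of length at least $|Q|$, the product of the per-window success probabilities $\prod_j \bigl(1-(1-\zeta)^{n_j}\bigr)$ is at least $1-\gamma$; the switch fires only when an entire window fails to witness that state. The non-switching runs then have measure at least $1-\gamma$ and, by Proposition~\ref{pro:mp-opt-ec}, achieve $\MP \ge \Val(Q,\alpha_T)$ almost surely, which yields the stated bound. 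With this replacement for your step (5), the argument matches the paper's.
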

\begin{proof}[Proof sketch]
    We modify $\imstrat$ so as to ``give up'' on optimizing the mean
    payoff if the minimal even priority has not been seen during a long
    sequence of episodes.  This will guarantee that the measure of runs which
    give up on the mean-payoff optimization is at most $\gamma$.

    First, recall that we can instantiate $\imstrat$ so that $L_i \geq |Q|$ for
    all $i \in \mathbb{N}$. Hence, with some probability $\zeta > 0$, during
    every learning phase, we visit a state with even minimal priority. We can
    then find a sequence $n_1, n_2, \dots \in \mathbb{N}^\omega$ of natural
    numbers such that $\prod^\infty_{j = i} (1 - \zeta^{n_j}) \ge 1 - \gamma$,
    for some $i \in \mathbb{N}$. Given this sequence, we apply the following
    monitoring. If for $\ell \in \mathbb{N}$ we write $N_\ell \defequals
    \sum_{k = 1}^{\ell - 1} n_k$, then at the end of the $\ell$-th episode we
    verify that during some learning phase from $L_{N_\ell}, L_{N_\ell + 1},
    \dots, L_{N_\ell + n_\ell}$ we have visited a state with minimal even
    priority, otherwise we switch to a parity-winning strategy forever.
\end{proof}

\subparagraph{Optimality.}

The following proposition tells us that the guarantees from 
Proposition~\ref{pro:fbstrat} are indeed optimal w.r.t. our chosen yardstick.

\begin{proposition}
    Let $\calA$ be the single-GEC automaton on the left-hand side of
    Fig.~\ref{fig:opts} and 
    $\pmin \in (0,1]$.  For all 
    parity-winning strategies $\sigma$, there exist $\delta$
    compatible with $\calA$ and $\pmin$, and a reward function $r$, such that
    \(
        \probevent{\autotochain{\calA}{\delta}{r}{\sigma}}{q_0}{
            \rho : 
            \MP(\rho) \geq \Val(Q,\alpha_T)
            } < 1.
    \)
\end{proposition}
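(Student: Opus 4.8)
The goal is to show that, on the left-hand MDP of Fig.~\ref{fig:opts}, no parity-winning strategy $\sigma$ can guarantee the exact yardstick value $\Val(Q,\alpha_T)$ with probability $1$; there must always be some $\delta$ compatible with $\calA$ and $\pmin$ and some reward function $r$ for which the run value falls short of $\Val(Q,\alpha_T)$ with positive probability. The plan is to exhibit an instantiation of $x$, $r_0$, $r_1$ on which achieving the value exactly requires playing $b$ forever from $q_0$, while any parity-winning $\sigma$ is forced to return to $q_0$ and play $a$ (equivalently, visit $q_2$) infinitely often along almost every run, and then argue that this recurrent deviation strictly depresses $\MP$ below $\Val(Q,\alpha_T)$ on a positive-probability set of runs.

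**Key steps.** First I would fix the parameters so that playing $b$ is \emph{strictly} optimal: choose $r_1 = 1$, $r_0$ small, and $x$ close to $1$ so that $x \cdot r_1 > r_0$, hence the memoryless deterministic expectation-optimal strategy is ``always $b$'' and $\Val(Q,\alpha_T) = x \cdot r_1 = x$ (the long-run average reward of the BSCC $\{q_1,q_2\}$ reached by looping $b$, weighting the $r_1$-edges by their frequency). Here I should double-check the structure: from $q_0$, action $b$ moves to $q_1$ with reward $0$; from $q_1$, action $a$ goes to $q_0$ with probability $1-x$ (reward $0$) or to $q_2$ with probability $x$ (reward $r_1$); from $q_2$, $a$ returns to $q_1$ with reward $r_1$. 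So ``always $b$ from $q_0$, always $a$ elsewhere'' in fact re-enters $q_0$ with probability $1-x$ each time, and the mean payoff of that strategy, call it $v(x)$, is a rational function of $x$ bounded away from the degenerate value, but in any case is the supremum $\Val(Q,\alpha_T)$. Crucially, achieving $\MP(\rho) = \Val(Q,\alpha_T)$ on a run requires its long-run frequencies to match the optimal stationary distribution; any run on which a suboptimal action is taken with positive long-run frequency has $\MP$ strictly below the yardstick.

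**The parity obstruction.** Next I would recall that on this automaton priorities are $p(q_0)=2$, $p(q_1)=1$, $p(q_2)=0$, so a run wins the parity objective iff its minimal infinitely-recurring priority is even, i.e. iff either it visits $q_2$ infinitely often, or from some point on it stays in $q_0$ forever. But staying in $q_0$ forever means playing $a$ forever in $q_0$, which has mean payoff $r_0 < \Val(Q,\alpha_T)$ on that run. So on any run of a parity-winning $\sigma$, either the run is eventually trapped in the $a$-loop at $q_0$ (mean payoff $r_0$, strictly suboptimal), or it visits $q_2$ infinitely often. In the second case the run also visits $q_1$ infinitely often, hence executes action $a$ from $q_1$ infinitely often, and each such execution reaches $q_0$ with probability $1-x>0$; by a Borel--Cantelli / martingale argument the run returns to $q_0$ infinitely often with probability $1$. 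Now I claim that under \emph{any} parity-winning $\sigma$, on a positive-probability set of runs the state $q_0$ is revisited with positive long-run frequency \emph{and} $a$ is sometimes chosen there; but even if $\sigma$ never plays $a$ at $q_0$ except in the trap case, the inevitable stochastic returns to $q_0$ followed by $b$ add extra $0$-reward steps at a positive asymptotic rate only if the return frequency itself is positive — which it is, because the excursion $q_1 \to q_2 \to q_1$ has bounded expected length while returns to $q_0$ occur at rate bounded below by $(1-x)$ per visit to $q_1$. Comparing frequencies shows that a run visiting $q_2$ infinitely often and threading through $q_0$ infinitely often cannot have the \emph{same} stationary frequencies as the always-$b$ optimal strategy unless those frequencies already coincide — and for a generic choice of $x$ I can force them not to, so $\MP(\rho) < \Val(Q,\alpha_T)$ on that run.

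**Main obstacle and how to finish.** The delicate point is the last one: ruling out that some clever parity-winning $\sigma$ matches the yardstick exactly with probability $1$ by cycling through $q_2$ only ``rarely enough'' that the asymptotic frequencies still converge to the optimum. The cleanest way around this is to argue at the level of the induced Markov chain $\autotochain{\calA}{\delta}{r}{\sigma}$: since $\sigma$ is parity-winning, almost every run's limit behaviour lives in a BSCC of $\calM^\sigma$ whose minimal priority is even, i.e. one that projects onto a sub-EC containing $q_2$ (or the sink $\{q_0\}$-with-$a$). On such a BSCC the mean payoff is the chain's stationary average, a fixed value; by the choice of $x$ one can ensure every BSCC that (i) respects $\alpha_T$, (ii) contains $q_2$, and (iii) is reachable while staying parity-winning has stationary average strictly below $\Val(Q,\alpha_T)$ — because any sub-EC containing $q_2$ and used infinitely often must also use action $a$ at $q_1$ (the only action there), forcing positive-frequency returns to $q_0$, and from $q_0$ the parity-winning requirement does not let the chain avoid $q_2$, so $q_0$'s outgoing $b$-step contributes $0$ reward at a positive rate that the always-$b$ strategy also incurs but in the optimal proportion; a short optimization shows the constrained optimum is strictly smaller for suitable $x,r_0,r_1$. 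Finally, since this holds for \emph{every} parity-winning $\sigma$ with a single fixed $(\delta,r)$, and the set of runs landing in such a BSCC has probability $1 - (\text{prob.\ of the } r_0\text{-trap})$, in either case $\probevent{\autotochain{\calA}{\delta}{r}{\sigma}}{q_0}{\rho : \MP(\rho) \geq \Val(Q,\alpha_T)} < 1$, as required. I would present this via the BSCC/stationary-average route rather than a direct frequency computation, since that sidesteps pathwise $\liminf$ subtleties and reduces the claim to a finite case analysis over sub-ECs of a three-state gadget.
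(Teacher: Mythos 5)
Your setup is fine: the choice of $r_0,r_1,x$, the computation of the yardstick, and the observation that a run wins parity iff it visits $q_2$ infinitely often or eventually stays in the $a$-loop at $q_0$. The gap is in the core of the argument, which rests on a false claim: you assert that every run (equivalently, every BSCC of $\calM^\sigma$) that visits $q_2$ infinitely often while returning to $q_0$ with positive frequency has mean payoff strictly below $\Val(Q,\alpha_T)$, and that ``a short optimization shows the constrained optimum is strictly smaller''. This cannot be right. The expectation-optimal memoryless strategy ``always $b$ at $q_0$'' induces a single BSCC containing all three states, returns to $q_0$ with stationary frequency $(1-x)/2$, and its stationary average is exactly $\Val(Q,\alpha_T)=x$ --- the yardstick already prices in the unavoidable $1-x$ returns to $q_0$, so there is no strictly smaller constrained optimum. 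Concretely, the surely-winning strategy of Proposition~\ref{pro:fbstrat} run on this very automaton attains $\MP\ge\Val(Q,\alpha_T)$ on a set of runs of measure $1-\gamma$, and all of those runs visit $q_2$ and pass through $q_0$ infinitely often. More structurally, any argument phrased in terms of BSCCs and stationary averages of the induced chain only sees almost-sure behaviour and therefore cannot separate \emph{surely} winning from \emph{almost-surely} winning strategies; since this automaton is a GEC, an almost-surely winning strategy achieves $\Val(Q,\alpha_T)$ with probability $1$ (Proposition~\ref{pro:inf-mem-better}), so no argument of that shape can establish the proposition.

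What is missing is a proof that the first branch of your dichotomy --- falling into the $a$-loop at $q_0$ --- occurs with \emph{positive probability} under every surely parity-winning $\sigma$; that is the only genuine source of sub-optimality, and you never establish that it has positive measure. The paper's argument does exactly this, exploiting the sure/almost-sure distinction via a single adversarial consistent run: at every visit to $q_0$ take $b$ whenever $\sigma$ assigns it positive probability, and resolve every use of $a$ at $q_1$ to the $(1-x)$-branch back to $q_0$. If $b$ were taken infinitely often along this run, it would visit $q_1$ infinitely often and $q_2$ never, contradicting sure winning; hence after some finite prefix $h$ the strategy plays $a$ at $q_0$ with probability $1$ forever. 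Every step of $h$ has probability at least $\pmin$ times the corresponding action probability, so $h$ has some positive probability $\gamma$, and on its cylinder the mean payoff is $r_0<x=\Val(Q,\alpha_T)$. This yields $\probevent{\autotochain{\calA}{\delta}{r}{\sigma}}{q_0}{\rho:\MP(\rho)\ge\Val(Q,\alpha_T)}\le 1-\gamma<1$ directly, with no frequency comparison needed for the runs that do reach $q_2$ infinitely often.
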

\begin{proof}[Proof sketch]
    Consider a reward function such that $r_0 = 0$ and $r_1 = 1$ and an
    arbitrary $\delta$.  It is easy to see that $\Val(Q,\alpha_T) = 1$. However,
    any strategy that ensures the parity objective is satisfied surely must be
    such that, with probability $\gamma > 0$, it switches to follow a
    strategy $q_2 \mapsto (a \mapsto 1)$ forever. Hence, with probability at
    least $\gamma$ its mean-payoff is sub-optimal.
\end{proof}

\subsection{The case of a single end component}
We now turn to the case where the surely-good automaton $\calA = (Q,A,T,p)$
consists of a unique, not necessarily good, EC $(Q,\alpha_T)$. Let us also fix
some $\pmin \in (0,1]$.

An important observation regarding single-end-component MDPs that are surely
good is that they contain at least one GEC as stated in the following lemma.
\begin{lemma}\label{lem:contain-good-ec}
    For all surely-good automata $\calA = (Q,A,T,p)$ such that
    $(Q,\alpha_T)$ is an EC there exists $(S,\beta) \subseteq
    (Q,\alpha_T)$ such that $(S,\beta)$ is a GEC in
    $\autotomdp{\calA}{\delta}{r}$ for all $\delta$ compatible with $\calA$ and
    all reward functions $r$, i.e. $(Q,\alpha_T)$ is weakly good.
\end{lemma}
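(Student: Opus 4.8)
The plan is to extract the good end component from a uniformly winning memoryless deterministic strategy. Since $\calA$ is surely good, recall from the paragraph preceding the lemma that there is a memoryless deterministic strategy $\parstrat$ that, from every state and against every $\delta$ compatible with $\calA$ and every reward function $r$, produces only runs satisfying $\parity$. The key observation I would exploit is that, $\parstrat$ being memoryless deterministic, it picks a single action $\parstrat(q) \in \alpha_T(q)$ in each state, so the underlying directed graph $G_{\parstrat}$ of the induced Markov chain $\autotochain{\calA}{\delta}{r}{\parstrat}$ --- whose edges are the pairs $(q,q')$ with $(q,\parstrat(q),q') \in T$, equivalently $\delta(q'|q,\parstrat(q)) > 0$ --- depends only on $\calA$ and $\parstrat$, never on $\delta$ or $r$.

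I would then let $S$ be the vertex set of any bottom SCC (BSCC) of $G_{\parstrat}$ and put $\beta(q) \defequals \{\parstrat(q)\}$ for $q \in S$; the claim is that $(S,\beta)$ is the desired GEC. That $(S,\beta) \subseteq (Q,\alpha_T)$ is immediate. To see that $(S,\beta)$ is an EC in $\autotomdp{\calA}{\delta}{r}$ for every compatible $\delta$ and every $r$: since $S$ is a BSCC, for $q \in S$ we have $\supp{\delta(q,\parstrat(q))} = \{q' : (q,\parstrat(q),q') \in T\}$, which is the set of $G_{\parstrat}$-successors of $q$ and hence lies in $S$; and $\calG_{(S,\beta)}$ is exactly the subgraph of $G_{\parstrat}$ induced on $S$ (no edges leave $S$), which is strongly connected because $S$ is an SCC. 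All of this is manifestly independent of $\delta$ and $r$.

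The remaining point --- the minimal priority over $S$ is even --- is the only place the winning hypothesis enters, and I would argue it by contradiction. Suppose $s^\ast \in S$ attains $\min_{q \in S} p(q)$ with $p(s^\ast)$ odd, and fix any $\delta$ compatible with $\calA$ and any $r$. Since $\calG_{(S,\beta)}$ is strongly connected, there is a finite cycle through $s^\ast$ lying entirely inside $S$; traversing it forever yields an infinite path in $G_{\parstrat}$, which is a run $\rho$ of $\autotochain{\calA}{\delta}{r}{\parstrat}$ because every edge of $G_{\parstrat}$ has positive $\delta$-probability by compatibility. This $\rho$ stays in $S$ and visits $s^\ast$ infinitely often, so the least priority visited infinitely often along $\rho$ belongs to $\{p(q) : q \in S\}$ and is at most $p(s^\ast)$; by the choice of $s^\ast$ it equals $p(s^\ast)$, which is odd, whence $\rho \not\models \parity$, contradicting that $\parstrat$ is winning. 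Hence $\min_{q \in S} p(q)$ is even and $(S,\beta)$ is a GEC.

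I do not anticipate a serious obstacle here; the one point to state carefully is the uniformity over $\delta$ and $r$, which is delivered by the fact that $G_{\parstrat}$, and therefore the choice of $S$ and the EC structure of $(S,\beta)$, is fixed by $\calA$ alone, together with the cited existence of a uniformly winning memoryless deterministic parity strategy in a surely-good automaton.
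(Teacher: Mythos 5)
Your proof is correct and follows essentially the same route as the paper's: both extract the component from the Markov chain induced by a uniformly winning memoryless deterministic strategy $\parstrat$ and rule out an odd minimal priority by exhibiting a forever-repeated cycle that would violate $\parity$. If anything, your version is slightly more careful than the paper's, which takes $C$ to be ``a strongly-connected component'' of the induced chain without insisting it be a \emph{bottom} SCC --- the closure condition $\supp{\delta(s,\beta(s))} \subseteq S$ needed for an EC is exactly what your choice of a BSCC guarantees.
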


\subparagraph{Yardstick.} Let $\delta$ and $r$ be fixed in the single EC, our
yardstick for this case is defined as follows:
\(
    \cVal(Q,\alpha_T) \defequals
    \max_{q \in Q} \sup\left\{
    \expect{\calA^\sigma_{\delta,r}}{q}{\MP}
    \:\middle|\:
    \sigma \text{ is a parity-winning strategy}
    \right\}.
\)
That is $\cVal(Q,\alpha_T)$ is the best MP expectation value that can be
obtained from a state $q \in Q$ with a parity-winning strategy. It is remarkable
to note that we take the maximal value over all states in $Q$. As noted by
Almagor et al.~\cite{akv16}, this value is not always achievable even when
$\delta$ and $r$ are a priori known, but it can be approached arbitrarily close.

\subparagraph{Learning strategy.}
The following proposition tells us that we can obtain a value close to
$\cVal(Q,\alpha_T)$ with arbitrarily high probability while satisfying the
parity objective surely.
\begin{proposition}\label{pro:fbstrat2}
    For all $\epsilon,\gamma \in (0,1)$ there exists a strategy
    $\sigma$ such that for all $q_0 \in Q$,
    for all $\delta$ compatible with $\calA$ and
    $\pmin$, and for all reward functions $r$, we have
        $\rho \models \parity$ for all $\rho \in
            \runs{\autotochain{\calA}{\delta}{r}{\sigma}}{q_0}$ and
        $\probevent{\autotochain{\calA}{\delta}{r}{\sigma}}{q_0}{
                \rho : \MP(\rho) \geq \cVal(Q,\alpha_T) - \epsilon
            } \geq 1 - \gamma$.
\end{proposition}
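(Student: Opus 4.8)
The plan is to combine the ideas behind Proposition~\ref{pro:mp-opt-ec} (the infinite-memory strategy $\imstrat$) and Proposition~\ref{pro:fbstrat} (the safety-monitored variant for a single GEC), but now applied inside a GEC that is a \emph{strict subcomponent} of the whole EC $(Q,\alpha_T)$. By Lemma~\ref{lem:contain-good-ec}, $(Q,\alpha_T)$ is weakly good: it contains some $(S,\beta)\subseteq(Q,\alpha_T)$ that is a GEC for every $\delta$ compatible with $\calA$ and every $r$. The first step is therefore to fix such a GEC $(S,\beta)$ and note that $\cVal(Q,\alpha_T)$ is approached arbitrarily closely by parity-winning strategies that, after finitely many steps, stay forever inside some GEC and there behave like a (near-)expectation-optimal mean-payoff strategy. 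Since $\cVal$ takes the \emph{maximum over all starting states}, the strategy $\sigma$ we build will: (a) from $q_0$, use a parity-winning memoryless strategy $\parstrat$ to first navigate to (and then into) the GEC $(S,\beta)$ that realizes the near-optimal value — note the target state inside $(S,\beta)$ is reached with probability $1$ — and (b) once inside $(S,\beta)$, run a copy of the episode-based learning strategy $\imstrat$ restricted to the actions $\beta$, i.e. learning and optimizing only within the GEC.

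The second step is to equip the restricted $\imstrat$ with the same safety monitor as in Proposition~\ref{pro:fbstrat}. Because $(S,\beta)$ is a GEC, its minimal priority is even; because we can instantiate $\imstrat$ with $L_i\ge|S|$ (here $|S|$-step learning episodes inside $(S,\beta)$), during each learning phase we visit the minimal-priority state of $(S,\beta)$ with some fixed probability $\zeta>0$ (using $\pmin$ and the strong connectivity of $(S,\beta)$). We then pick a sequence $n_1,n_2,\dots$ with $\prod_{j\ge i}(1-\zeta^{n_j})\ge 1-\gamma$ for some $i$, and monitor blocks of consecutive episodes exactly as in Proposition~\ref{pro:fbstrat}: if a whole block $L_{N_\ell},\dots,L_{N_\ell+n_\ell}$ of learning phases fails to see a minimal-even-priority state, switch to $\parstrat$ forever. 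Since $\parstrat$ is a uniformly winning memoryless parity strategy for $\calA$ and is applied from a state of $\calA$, this guarantees $\rho\models\parity$ for \emph{every} run, i.e. sure satisfaction of parity. It remains only to combine probabilities: with probability at least $1-\gamma$ we never give up; conditioned on never giving up, the analysis of Proposition~\ref{pro:mp-opt-ec} (Borel–Cantelli on the ever-better approximations $\delta_i$, whose support equals that of $\calA$ restricted to $(S,\beta)$, plus Lemma~\ref{lem:robust-opt} and the limit-distribution bounds) gives $\MP(\rho)\ge\Val(S,\beta)$ almost surely, and $\Val(S,\beta)\ge\cVal(Q,\alpha_T)-\epsilon$ by the choice of $(S,\beta)$. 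A union bound over the (at most $\gamma$-probability) give-up event and the (zero-probability) bad-$\MP$ event inside the $\imstrat$-analysis yields $\MP(\rho)\ge\cVal(Q,\alpha_T)-\epsilon$ with probability at least $1-\gamma$.

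The main obstacle I expect is the interplay between the two probabilistic failure modes and the fact that $\cVal$ is only \emph{approachable}, not achievable. One must choose $(S,\beta)$ and an internal target value so that $\Val(S,\beta)\ge\cVal(Q,\alpha_T)-\epsilon/2$, absorb another $\epsilon/2$ loss from the robustness estimate of Lemma~\ref{lem:robust-opt} (which forces a choice of $\eta_\epsilon$ and hence of the learning-episode counts $L_i$), and then verify that the safety monitor's give-up event is genuinely independent-enough of the learning outcome to keep its probability below $\gamma$ — this is where care is needed, since the monitor reads priorities along the \emph{same} runs that the mean-payoff analysis uses. The clean way around this is the one used in Proposition~\ref{pro:fbstrat}: bound the give-up probability \emph{a priori} by $\gamma$ using only the fixed lower bound $\zeta$ on seeing the minimal priority per learning phase (which holds for every $\delta$ compatible with $\calA$ and $\pmin$), so that the two estimates decouple and a simple union bound closes the argument.
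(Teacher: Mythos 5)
Your overall architecture (reach a good sub-EC, then run the monitored infinite-memory strategy of Proposition~\ref{pro:fbstrat} inside it, then combine the failure probabilities and use the fact that $\cVal(Q,\alpha_T)$ is bounded by the maximal $\Val$ over contained GECs) is the same as the paper's. However, there are two genuine gaps in how you get into the right GEC. First, you ``fix'' the GEC $(S,\beta)$ that realizes the near-optimal value before any learning takes place, and your subsequent learning happens only inside $(S,\beta)$. But which contained GEC has the largest value depends on the unknown $\delta$ and $r$; it cannot be identified from $\calA$ alone. The paper's strategy therefore begins with a separate exploration phase over the \emph{whole} EC $(Q,\alpha_T)$ (of length $J_0$, budgeted $\gamma/4$ of the failure probability) to build global approximations $\delta'$ and $r'$, and only then selects a contained GEC whose approximate value is maximal; Lemma~\ref{lem:robust-opt} is used to argue this selected GEC is within $\epsilon$ of the best one. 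Without this phase your claim $\Val(S,\beta)\ge\cVal(Q,\alpha_T)-\epsilon/2$ has no justification.

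Second, you assert that the parity-winning strategy $\parstrat$ navigates into the chosen $(S,\beta)$ and that ``the target state inside $(S,\beta)$ is reached with probability $1$.'' A uniformly parity-winning strategy has no obligation to reach any particular GEC: in the automaton of Fig.~\ref{fig:opt3}, any surely winning strategy must eventually commit to one of the two GECs $\{q_1,q_2\}$ or $\{q_3,q_4\}$, and it may well commit to the one with the worse value, never visiting the other. Moreover, even a dedicated reachability strategy only reaches a given state of the EC almost surely, not surely, so the attempt must be truncated after finitely many steps to preserve the sure parity guarantee on the measure-zero non-reaching runs. The paper handles this by following the exploration strategy $\lambda$ for a bounded number $J_1$ of steps, accepting a reachability failure of probability at most $\gamma/4$, and reverting to $\parstrat$ forever on failure; the final bound is then $(1-\gamma/4)^3\ge 1-\gamma$ across the three failure modes (learning, reaching, and the monitored optimization). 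Your monitor construction inside the GEC and the final union bound are otherwise consistent with the paper's argument.
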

\begin{proof}[Proof sketch]
    We define a strategy $\sigma$ as follows. Let $\eta = \min\{\pmin,
    \eta_{\epsilon/2}\}$ for $\eta_{\epsilon/2}$ as defined for
    Lemma~\ref{lem:robust-opt}. The strategy $\sigma$ plays as follows.
    \begin{enumerate}
        \item It first computes $\delta'$ such that $\delta' \eclose{\eta}{ }
            \delta$ with probability at least $1 - \gamma/4$ and a reward
            function $r'$ by following an exploration strategy $\lambda$ for
            $\alpha_T$ during $J_0$ steps (see Lemma~\ref{lem:suff-urandom}).
        \item It then selects a contained good MEC with maximal
            expected mean-payoff
            value (see Lemma~\ref{lem:contain-good-ec}) and tries to reach it
            with probability at least $1 - \gamma/4$ by following $\lambda$
            during $J_1$ steps.
        \item Finally, if the component is reached, it follows a strategy
            $\tau$ as in Proposition~\ref{pro:fbstrat} with $\gamma/4$ from then
            onward.
    \end{enumerate}
    If the learning ``fails'' or if the component is not reached, the strategy
    reverts to following a winning strategy forever. (A failed learning phase is
    one in which the approximated distribution function does not have $T$ as its
    support. The EC-reaching phase may also fail.)
\end{proof}

\subparagraph{Optimality.} The following states that we cannot
improve on the result of Proposition~\ref{pro:fbstrat2}.

\begin{figure}
    \centering
    \begin{tikzpicture}
        \node[state,initial above](q0){$q_0 : 1$};
        \node[state,left= of q0](q1){$q_1 : 2$};
        \node[state,left= of q1](q2){$q_2 : 2$};
        \node[state,right= of q0](q3){$q_3 : 2$};
        \node[state,right= of q3](q4){$q_4 : 2$};
        \node[psplit,above=0.5cm of q2](q2a){ };
        \node[psplit,above=0.5cm of q1](q1a){ };
        \node[psplit,above=0.5cm of q3](q3a){ };
        \node[psplit,above=0.5cm of q4](q4a){ };

        \path
        (q0) edge[bend right] node[el,swap]{$a$} (q1)
        (q1) edge[bend right] node[el]{$b$} (q0)
        (q0) edge[bend right] node[el]{$b$} (q3)
        (q3) edge[bend right] node[el,swap]{$a$} (q0)
        (q2) edge[-] node[el,swap]{$a$} (q2a)
        (q2a) edge[bend right] (q2)
        (q2a) edge[bend left] (q1)
        (q1) edge[-] node[el]{$a$} (q1a)
        (q1a) edge[bend left] (q1)
        (q1a) edge[bend right] (q2)
        (q3) edge[-] node[el,swap]{$b$} (q3a)
        (q3a) edge[bend right] (q3)
        (q3a) edge[bend left] (q4)
        (q4) edge[-] node[el]{$b$} (q4a)
        (q4a) edge[bend left] (q4)
        (q4a) edge[bend right] (q3)
        ;
    \end{tikzpicture}
    \caption{An automaton for which it is impossible to learn to obtain
    near-optimal mean-payoff almost surely or optimal mean-payoff with high
    probability, while satisfying the parity objective. For clarity, probability
    and reward placeholders have been omitted.}
    \label{fig:opt3}
\end{figure}
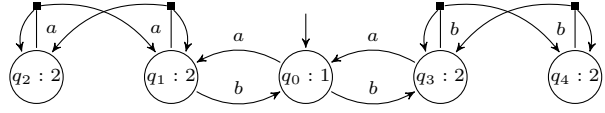

\begin{proposition}\label{prop:opt-sure-ec}
    Let $\calA$ be the single-EC automaton in Fig.~\ref{fig:opt3} and $\pmin \in
    (0,1]$.
    For all $\epsilon,\gamma \in (0,1)$, the two following statements hold.
	\begin{itemize}
        \item For all strategies $\sigma$, there exist $\delta$ compatible with
            $\calA$ and $\pmin$, and a reward function $r$, such
            that \(
                \probevent{\autotochain{\calA}{\delta}{r}{\sigma}}{q_0}{
                    \rho :
                    \MP(\rho) \ge \cVal(Q,\alpha_T) - \epsilon
                } < 1.
            \)
        \item For all strategies $\sigma$, there exist $\delta$ compatible with
            $\calA$ and $\pmin$, and a reward function $r$, such
            that \(
               \probevent{\autotochain{\calA}{\delta}{r}{\sigma}}{q_0}{
                    \rho:
                    \MP(\rho) \ge \cVal(Q,\alpha_T)
                } < 1 - \gamma.
            \)
	\end{itemize}
\end{proposition}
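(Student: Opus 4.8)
My plan rests on the shape of Fig.~\ref{fig:opt3}: the automaton is the hub $q_0$, of odd priority $1$, together with two good end components, the \emph{left} EC on $\{q_1,q_2\}$ (both states playing $a$) and the \emph{right} EC on $\{q_3,q_4\}$ (both playing $b$), all of whose states have even priority. A run satisfies $\parity$ exactly when it visits $q_0$ finitely often, so every run of $\autotochain{\calA}{\delta}{r}{\sigma}$, for any parity-winning $\sigma$, eventually stays forever in one of the two ECs; I will say it \emph{commits left} or \emph{commits right}. Inside an EC the strategy is forced ($q_2$ offers only $a$, $q_4$ only $b$), so the committed tail is a fixed irreducible finite Markov chain whose mean-payoff value I call $v_L$ (resp.\ $v_R$); by the ergodic theorem $\MP(\rho)=v_L$ on almost every committing-left run and $=v_R$ on almost every committing-right run. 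Hence $\cVal(Q,\alpha_T)=\max\{v_L,v_R\}$, the event $\{\MP\ge\cVal\}$ is, up to a null set, ``$\sigma$ commits to a value-maximal EC'', and $\{\MP\ge\cVal-\epsilon\}$ is ``$\sigma$ commits to an EC of value exceeding $\cVal-\epsilon$''. I then freeze the rewards so that the two values genuinely depend on the unknown probabilities — say reward $1$ on the edges leaving $q_1$ and $q_3$, reward $0$ on those leaving $q_2$ and $q_4$, arbitrary elsewhere — so that $v_L$ and $v_R$ are the $q_1$- and $q_3$-stationary masses of the respective chains, each free to range over an interval around $\tfrac12$ as the corresponding jump probabilities range over $[\pmin,1-\pmin]$; with the rewards frozen, what $\sigma$ sees is just the sequence of visited $(\text{state},\text{action},\text{successor})$ triples, and every finite such history has positive probability under every $\delta$ compatible with $\calA$ and $\pmin$.

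For the first item I would argue by contradiction, assuming $\probevent{\autotochain{\calA}{\delta}{r}{\sigma}}{q_0}{\MP\ge\cVal-\epsilon}=1$ for every compatible $\delta$ (with the frozen $r$). Fix a reference $\delta_0$. Since every run commits, one of the two ECs — say, up to the left--right symmetry of the automaton, the left one — is committed with probability at least $\tfrac12$ under $\delta_0$. Writing the commit-left event as the increasing union over $m$ of ``the run is outside $\{q_1,q_2\}$ for the last time at or before step $m$ and stays in $\{q_1,q_2\}$ thereafter'', some finite prefix $h_0$ ending inside $\{q_1,q_2\}$ has $\mathbb{P}_{\delta_0}[h_0]>0$ and $c:=\mathbb{P}_{\delta_0}[\text{the run stays in the left EC forever}\mid h_0]>0$. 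Now modify \emph{only} the jump probabilities of the \emph{right} EC to obtain a compatible $\delta_1$ with $v_R>v_L+\epsilon$; the left EC, hence $v_L$, is untouched. The event ``stay in the left EC forever after $h_0$'', together with $\sigma$'s behaviour along it, consults only $\{q_1,q_2\}$-transitions and the frozen rewards, so its conditional probability given $h_0$ is still exactly $c$, and $\mathbb{P}_{\delta_1}[h_0]>0$. Thus under $\delta_1$ the strategy commits left with positive probability, yet the left EC is now $\epsilon$-suboptimal, so $\probevent{\autotochain{\calA}{\delta_1}{r}{\sigma}}{q_0}{\MP\ge\cVal-\epsilon}<1$ — contradiction.

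The second item is the quantitative companion. Here I would start from a $\delta_0$ with $v_L=v_R$, so that $\sigma$ commits to one of the ECs — say the left — with probability $\ge\tfrac12$; picking a late enough prefix $h_0$ as above, the event ``commit left without ever re-entering the right EC'' has probability $\ge\tfrac12-o(1)$, and since it never consults the right-EC transitions, lowering \emph{only} that EC's value by a tiny $g>0$ preserves its probability verbatim. Then $\sigma$ commits to the now strictly-suboptimal EC with probability $\ge\tfrac12-o(1)>\gamma$ once $g$ is small enough, giving $\probevent{\autotochain{\calA}{\delta_1}{r}{\sigma}}{q_0}{\MP\ge\cVal}<1-\gamma$. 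The step I expect to be the main obstacle is making ``late enough prefix'' precise while keeping the preserved probability safely above the target — and, if one insists on the statement literally for all $\gamma\in(0,1)$, squeezing past the apparent $\tfrac12$ barrier, e.g.\ by always perturbing whichever EC $\sigma$ is seen to prefer and exploiting the automaton's symmetry. Everything else — the ergodic identification of $\MP$ inside an EC, the computation of $\cVal$, and the observation that a frozen finite observation history is consistent with every admissible $\delta$ — is routine.
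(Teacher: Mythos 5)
Your route---every surely parity-winning strategy must visit $q_0$ only finitely often, hence commit at a finite time to one of the two GECs $\{q_1,q_2\}$ or $\{q_3,q_4\}$, after which an adversarial perturbation of the \emph{other}, never-again-observed component makes the committed one suboptimal---is exactly the argument the paper sketches, and your handling of the first item is essentially a correct completion of that sketch (finite prefix $h_0$ with positive stay-left probability $c$, perturbation leaving $\mathbb{P}[h_0]>0$ and $c$ untouched). One quantitative caveat: with your frozen rewards the value of each component is a stationary mass ranging only over $[\pmin,1-\pmin]$, so the achievable gap $v_R-v_L$ is at most $1-2\pmin$ and you cannot force $v_R>v_L+\epsilon$ by perturbing $\delta$ alone when $\epsilon$ is large or $\pmin$ is close to $1/2$; to cover all $\epsilon\in(0,1)$ you must also exploit the adversary's freedom to choose $r$, keeping the rewards on the committed component and on all transitions the pre-commitment history can traverse fixed, so that the commitment probability is genuinely preserved.

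The genuine gap is the one you flag yourself in the second item, and your proposed escape does not close it. (There is also a sign slip: you must \emph{raise} the value of the unvisited component so that the committed one becomes strictly suboptimal, not lower it.) The $1/2$ barrier is not an artifact of your bookkeeping. Consider the strategy that, independently of all observations, randomizes at $q_0$ and immediately commits to the left GEC with probability $1/2$ and to the right one with probability $1/2$. Against \emph{every} compatible $\delta$ and every $r$, the half of the runs that land in the value-maximal component attain $\MP=\cVal(Q,\alpha_T)$ almost surely, so $\probevent{\autotochain{\calA}{\delta}{r}{\sigma}}{q_0}{\MP\ge\cVal(Q,\alpha_T)}\ge 1/2$ and no choice of $(\delta,r)$ yields a bound below $1-\gamma$ once $\gamma\ge 1/2$. ``Perturbing whichever component $\sigma$ is seen to prefer'' is not available here, because the preference is decided by a per-run coin toss rather than globally. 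Consequently your argument establishes the second item only for $\gamma<1/2$; for larger $\gamma$ the claim needs either a restriction on $\gamma$ or an argument of a different kind, and this should be stated explicitly rather than left to the symmetry heuristic.
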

\begin{proof}[Proof sketch]
    Observe that the MEC is not a good EC. However, it does contain the
    GECs with states $\{q_1,q_2\}$ and $\{q_3,q_4\}$ respectively. Now, since
    those two GECs are separated by $q_0$, whose priority is $1$, any winning
    strategy must at some point stop playing to $q_0$ and commit to a single
    GEC. Thus, the learning of the global EC can only last for a finite number
    of steps. It is then straightforward to argue that near-optimality with
    high-probability is the best achievable guarantee.
\end{proof}

\subsection{General surely-good automata}
\label{sec:sure-gen-auto}
In this section, we generalize our approach from single-EC automata to general
automata. We will argue that, under a sure parity constraint, we can achieve a
near-optimal mean payoff with high probability in any MEC $(S,\beta)$ in which we
end up with non-zero probability. That is, given that the event $\Inf \subseteq
S$, defined as the set of all runs that eventually always stay within $S$, has
non-zero probability measure.

\begin{theorem}\label{thm:case1}
    Consider a surely-good automaton $\calA = (Q,A,T,p)$
    and some $\pmin \in (0,1]$. For all
    $\epsilon,\gamma \in (0,1)$ there exists
    a strategy $\sigma$ such that for all $q_0 \in Q$,
    for all $\delta$ compatible with
    $\calA$ and $\pmin$, and all reward functions $r$, we have
    \begin{itemize}
        \item $\rho \models \parity$ for all $\rho \in
            \runs{\autotochain{\calA}{\delta}{r}{\sigma}}{q_0}$ and
        \item
            \(
                \probevent{\autotochain{\calA}{\delta}{r}{\sigma}}{q_0}{
                        \rho
                        :
                        \MP(\rho) \ge \cVal(S,\beta) - \epsilon
                    \:\middle|\:
                    \Inf \subseteq S
                } \geq 1 - \gamma
            \)
            for all $(S,\beta) \in \mec{\autotomdp{\calA}{\delta}{r}}$
            such that $(S,\beta)$ is weakly good and 
            $\probevent{\autotochain{\calA}{\delta}{r}{\sigma}}{q_0}{\Inf
            \subseteq S} > 0$.
    \end{itemize}
\end{theorem}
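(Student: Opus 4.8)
The plan is to reduce the general-automaton case to the single-EC case (Proposition~\ref{pro:fbstrat2}) by interleaving a ``reach a MEC'' phase with the within-MEC strategy, while being careful that the parity objective is enforced surely throughout, including during the transient phase before a MEC is reached and during any phase where learning or reaching fails. First I would recall that, since $\calA$ is surely good, there is a memoryless deterministic uniformly winning parity strategy $\parstrat$ (computable by~\cite{cjkls17}) available from every state; this will be the ``fallback'' that guarantees $\rho \models \parity$ on all runs no matter what the learning phases do. The surely-good hypothesis also propagates to each MEC: for every $(S,\beta) \in \mec{\autotomdp{\calA}{\delta}{r}}$, the sub-automaton induced by $(S,\beta)$ is itself surely good (any state of $S$ from which parity could not be forced surely would have been removed from $\calA$), so within a weakly good MEC we are exactly in the setting of Proposition~\ref{pro:fbstrat2}, and Lemma~\ref{lem:contain-good-ec} guarantees such a MEC contains a GEC.

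Next I would describe the strategy $\sigma$ in episodes, mirroring $\imstrat$ and the proof of Proposition~\ref{pro:fbstrat2} but run ``from the top''. In episode~$i$: (1) follow an exploration strategy $\lambda$ for $\alpha_T$ for $L_i$ steps, accumulating experiments to build models $\delta_i, r_i$ of the reachable part of $\calA$; (2) using $\delta_i$, identify (in the learned model) a weakly good MEC with maximal $\cVal$ among those that appear reachable, and follow a memoryless strategy that, in the learned model, reaches that MEC with high probability, for $R_i$ steps; (3) if at the end of step (2) we are inside a MEC, run the single-EC strategy of Proposition~\ref{pro:fbstrat2} there for $O_i$ steps with parameters $\epsilon/2, \gamma/2^{i+2}$ say; otherwise (we are not in a MEC, or the support check $\supp{\delta_i} = T$ on the visited part fails) abandon optimization and switch to $\parstrat$ forever. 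Crucially, the ``monitoring'' of Proposition~\ref{pro:fbstrat} is layered in: even once inside a MEC, if the minimal even priority of that MEC is not seen during a long enough block of sub-episodes, switch to $\parstrat$ forever. Surety of parity is then immediate: on any run, either we eventually switch to $\parstrat$ (and win), or we stay inside one MEC $(S,\beta)$ forever and the monitoring witnessed its minimal even priority infinitely often, so that priority is $\liminf_i p(q_i)$ and it is even.

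For the mean-payoff guarantee, I would fix a target MEC $(S,\beta)$ that is weakly good with $\probevent{\cdots}{q_0}{\Inf \subseteq S} > 0$ and condition on the event $\Inf \subseteq S$. On this event the run eventually stays in $S$; since $\Inf \subseteq S$ has positive measure and the MEC has a uniformly positive chance (bounded below using $\pmin$, $|Q|$, and the fact that $\mec{}$-membership does not depend on exact probabilities) of being entered and stably occupied, the ``reaching'' phases will, with probability going to~$1$ across episodes, deposit us in $S$ with ever-better learned models of the dynamics inside $S$; from that point $\sigma$ behaves exactly like the Proposition~\ref{pro:fbstrat2} strategy on the single-EC automaton induced by $(S,\beta)$, which certifies $\MP(\rho) \ge \cVal(S,\beta) - \epsilon$ with probability at least $1-\gamma$ (after a union bound over the finitely many ``bad'' events, absorbed by the $\gamma/2^{i+2}$ budgeting and by choosing the first successful episode index large enough, as in the Borel--Cantelli argument behind Proposition~\ref{pro:mp-opt-ec}); finally, since $\MP$ is a $\liminf$ of averages, the bounded rewards collected during the finitely-many transient steps before we settle in $S$ do not affect $\MP(\rho)$, so the conditional bound survives.

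The main obstacle I expect is the bookkeeping that makes the union bound close \emph{uniformly over all target MECs $(S,\beta)$ simultaneously and over all compatible $\delta$}: the strategy must not know $\delta$, yet for whichever MEC the run happens to stabilize in, the in-MEC guarantee must hold with probability $\ge 1-\gamma$ conditioned on $\Inf \subseteq S$. The clean way to handle this is to note there are only finitely many MECs, run the Proposition~\ref{pro:fbstrat2} machinery with a per-MEC, per-episode failure budget summing to less than $\gamma$ times the (positive, $\pmin$-bounded) lower bound on $\probevent{}{q_0}{\Inf \subseteq S}$, and then divide through by that measure when conditioning — exactly the ``compensating for the past history / for the time before the limit distribution / for the next learning phase'' scheme sketched before Proposition~\ref{pro:mp-opt-ec}. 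A secondary subtlety is ensuring the transient ``reaching'' phase itself never violates parity; this is why the fallback $\parstrat$ is invoked both on learning failure and on the monitoring failure, so that every infinite run either commits to $\parstrat$ or to a genuinely good MEC.
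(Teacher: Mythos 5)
Your proposal assembles the right ingredients (the uniformly winning fallback $\parstrat$, Lemma~\ref{lem:contain-good-ec}, and Proposition~\ref{pro:fbstrat2} inside each weakly good MEC), but the strategy you describe is architecturally different from the paper's and two of its steps break. First, you restart a \emph{global} random exploration of $\alpha_T$ at the beginning of every episode. Sure satisfaction of parity quantifies over all runs, including measure-zero ones, and your claimed dichotomy (``either eventually $\parstrat$ forever, or eventually trapped in one MEC'') is false under that episode structure: a run can return to the top-level exploration phase infinitely often, see a small odd priority infinitely often there, and still pass your within-MEC monitoring, which only checks that the MEC's minimal even priority \emph{is} seen, not that no smaller odd priority is seen in between. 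Such a run violates the parity objective, so the first bullet fails. (It also makes $\Inf \subseteq S$ a null event for every proper $S$, gutting the second bullet.) The paper avoids this entirely: between MECs it simply follows $\parstrat$ with no top-level learning, it hands control to the Proposition~\ref{pro:fbstrat2} strategy only upon \emph{first} entry into a not-yet-visited weakly good MEC, and it ignores revisited MECs. Since there are finitely many MECs, every run undergoes finitely many regime switches, hence either ends in $\parstrat$ forever or ends inside one MEC following the monitored strategy; prefix-independence of parity closes the argument.

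Second, your plan for the conditional bound --- budgeting failure probability against a ``positive, $\pmin$-bounded lower bound'' on $\probevent{\autotochain{\calA}{\delta}{r}{\sigma}}{q_0}{\Inf \subseteq S}$ and dividing by it --- does not go through: the theorem only assumes this probability is positive, and it can be arbitrarily small depending on $\delta$ and on the inter-MEC behaviour of $\sigma$ itself, so no such bound is available. The paper needs none. Let $A_S$ be the event that $S$ is entered and $G_S$ the event that the Proposition~\ref{pro:fbstrat2} strategy launched at first entry achieves $\MP \ge \cVal(S,\beta) - \epsilon$; then $G_S \subseteq \{\Inf \subseteq S\} \subseteq A_S$ and $\rvprob{G_S \mid A_S} \ge 1-\gamma$, so the probability conditioned on $\Inf \subseteq S$ is at least $\rvprob{G_S}/\rvprob{A_S} \ge 1-\gamma$. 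This nesting, together with the ``try each MEC exactly once, at first entry'' discipline that makes it valid (and prevents sub-optimal mass expelled from one MEC from contaminating another), is the key idea your proposal is missing. Your ``reach the max-value MEC'' phase is also unnecessary: the statement only demands a guarantee conditional on whichever MEC the run happens to be absorbed in, and the paper deliberately leaves the inter-MEC routing unspecified.
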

\begin{proof}[Proof sketch]
    The strategy $\sigma$ we construct follows a parity-winning
    strategy $\parstrat$ until a state contained in a weakly good MEC,
    that has not been visited before, is entered. In this case, the strategy
    follows $\tau$ (the strategy from Proposition~\ref{pro:fbstrat2}). Observe
    that when $\tau$ switches to $\parstrat$ (a parity-winning strategy) it may
    exit the end component. If this happens, then the component is marked as
    visited and $\parstrat$ is followed until a new---not previously
    visited---maximal good end component is entered. In that case, we switch to
    $\tau$ once more.  Crucially, the new strategy $\sigma$ ignores MECs that
    are revisited 
\end{proof}

\begin{remark}[On the choice of MECs to reach]
    The strategy constructed for the proof of Theorem~\ref{thm:case1} has to
    deal with leaving a MEC due to the fallbacks to the parity-winning strategy
    $\parstrat$.
    However, surprisingly, instead of actually following $\parstrat$,
    upon entering a new MEC it has to restart the process of achieving
    a satisfactory mean-payoff. Indeed, otherwise the overall mass of
    sub-optimal runs from various MECs (each smaller than $\gamma$) could get
    concentrated in a single MEC, thus violating the advertised guarantees.

    The strategy could be simplified
    as follows. First, we follow any strategy to reach a bottom MEC
    (BMEC)---that is, a MEC from which no other MEC is reachable. By definition,
    the winning strategy can be played here and the MEC cannot be escaped.
    Therefore, in the BMEC we run the strategy as described, and after the
    fallback we indeed simply follow $\parstrat$.  If we did not
    reach a BMEC after a long time, we could switch to the fallback, too.
    While this strategy is certainly simpler, our general strategy has the
    following advantage.
    Intuitively, we can force the strategy to stay in any current good MEC, even
    if it is not bottom, and thus maybe achieve a more satisfactory mean-payoff.
    Further, whenever we want, we can force the strategy to leave the current
    MEC and go to a lower one. For instance, if the current estimate of the mean
    payoff is lower than what we hope for, we can try our luck in a lower MEC.
    We further comment on the choice of unknown MECs in the conclusions.
\end{remark}

\section{Learning for MP under an Almost-Sure Parity Constraint}

In this section, we turn our attention to learning strategies that must ensure a
parity objective not with certainty (as in previous section) but {\em almost
surely}, i.e. with probability $1$.  As winning almost surely is less stringent,
we can hope both for a stricter yardstick (i.e. better target values) and also
better ways of achieving such high values. We show here that this is
indeed the case. Additionally, we argue that several important learning results
can now be obtained with finite-memory strategies.

As previously, we make the hypothesis that we have removed from $\calA$ all
states from which the parity objective cannot be forced with probability $1$ (no
such state can ever be entered). Note that to compute the set of states to
remove, we do not need the knowledge of $\delta$ but only the support as given
by $\calA$. States to remove can be computed in polynomial time using
graph-based algorithms~(see, e.g.,~\cite{bk08}). An automaton $\calA$ which
contains only almost-surely winning states for the parity objective  is called
\emph{almost-surely good}.

\newcommand{\asparstrat}{\sigma^{\mathrm{as}}_{\mathrm{par}}}

We have, as in the previous section,
that for all automata $\calA$ there exists a memoryless
deterministic strategy $\sigma$ such that for all $q_0 \in Q$, for all $\delta$
compatible with $\calA$, for all $r$, the measure of the subset of $\rho \in
\runs{\autotochain{\calA}{\mu}{r}{\sigma}}{q_0}$ such that $\rho \models
\parity$ is equal to $1$ (see e.g.~\cite{bk08}). Such a strategy is said to be
\emph{uniformly almost-sure winning (for the parity objective)}. In the sequel,
we denote such a strategy $\asparstrat$.

We now study the design of learning strategies for mean-payoff optimization
under {\em almost-sure} parity constraints for increasingly complex cases.

\subsection{The case of a single good end component}

Consider an automaton $\calA = (Q,A,T,p)$ such that $(Q,\alpha_T)$ is a
GEC, and some $\pmin \in (0,1]$.

\subparagraph{Yardstick.}
For this case, we use as a yardstick the optimal expected mean-payoff value:
\(
    \Val(Q,\alpha_T) =
    \sup_\sigma \expect{\autotochain{\calA}{\delta}{r}{\sigma}}{q_0}{\MP}
\)
for any $q_0 \in Q$.

\subparagraph{Learning strategies.}
We start by noting that $\imstrat$ from
Section~\ref{sec:mp} also ensures that the parity objective is satisfied almost
surely when exercised in a GEC.
\begin{proposition}\label{pro:inf-mem-better}
    One can compute a sequence $(L_i,O_i)_{i \in \mathbb{N}}$ such that for the
    resulting strategy $\imstrat$ we have that
    for all $q_0 \in Q$, for
    all $\delta$ compatible with $\calA$ and $\pmin$, and for all reward
    functions $r$, we have
        $\probevent{\autotochain{\calA}{\delta}{r}{\imstrat}}{q_0}{\parity}
            = 1$ and
        $\probevent{\autotochain{\calA}{\delta}{r}{\imstrat}}{q_0}{
                \rho : \MP(\rho) \ge \Val(Q,\alpha_T)
            } = 1$.
\end{proposition}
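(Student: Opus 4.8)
The plan is to leverage Proposition~\ref{pro:mp-opt-ec} almost for free, since that result already establishes that the very same strategy $\imstrat$ (for a suitable choice of $(L_i,O_i)_{i\in\mathbb{N}}$ with $L_i\ge|Q|$) achieves $\MP(\rho)\ge\Val(Q,\alpha_T)$ with probability one in an arbitrary single-EC automaton. Under our current hypothesis $(Q,\alpha_T)$ is in fact a GEC, so nothing about the mean-payoff part of the statement changes and the second bullet is immediate by simply re-using the sequence furnished by Proposition~\ref{pro:mp-opt-ec}. Thus the only new content is the first bullet: that this same $\imstrat$ also wins the parity objective almost surely. So I would first invoke Proposition~\ref{pro:mp-opt-ec} verbatim to fix the sequence $(L_i,O_i)_{i\in\mathbb{N}}$ and discharge the mean-payoff guarantee, then spend the rest of the proof on the parity guarantee.

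For the parity part, the key observation is that $\imstrat$ never stops exploring: in every episode $i$ it follows the uniform exploration strategy $\lambda$ for $\alpha_T$ for $L_i\ge|Q|$ steps. Fix any $\delta$ compatible with $\calA$ and $\pmin$, and let $q^\star$ be a state of minimal (hence even) priority in the GEC $(Q,\alpha_T)$. The plan is to show that $\rvprob{\text{visit }q^\star\text{ infinitely often}}=1$, which immediately gives $\liminf$ of the priorities along the run is even, i.e.\ $\rho\models\parity$ for almost every run. For this I would argue: conditioned on being in any state $q$ at the start of an exploration phase, since $(Q,\alpha_T)$ is strongly connected, there is a path of length at most $|Q|$ from $q$ to $q^\star$ in $\calG_{(Q,\alpha_T)}$; each step along it is chosen by $\lambda$ with probability at least $1/|A|$ and realized by $\delta$ with probability at least $\pmin$. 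Hence there is a uniform constant $\zeta\defequals\left(\pmin/|A|\right)^{|Q|}>0$ (independent of $\delta$ and of the episode) such that the probability of visiting $q^\star$ during the $i$-th exploration phase is at least $\zeta$, regardless of the past. Since there are infinitely many exploration phases and each independently (conditioned on the past) contributes at least $\zeta$ of hitting probability, a standard second Borel--Cantelli / conditional-Borel--Cantelli argument (using $\sum_i \zeta=\infty$) shows $q^\star$ is visited infinitely often with probability one. Therefore $\probevent{\autotochain{\calA}{\delta}{r}{\imstrat}}{q_0}{\parity}=1$.

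I expect the main obstacle to be purely a matter of care rather than depth: making the conditional lower bound $\zeta$ genuinely uniform over all histories and all $i$, i.e.\ checking that the bad priority cannot accumulate during the optimization phases. But this is harmless because parity only depends on the $\liminf$ of priorities, and visiting $q^\star$ infinitely often already forces that $\liminf$ to be the minimal (even) priority; whatever happens during the optimization phases $O_i$ is irrelevant to the qualitative objective. The one subtlety worth spelling out is that the optimization-phase strategy $\sigma_{\mathrm{MP}}^{\delta_i}$, being memoryless deterministic on $\autotomdp{\calA}{\delta_i}{r_i}$, keeps the run inside $Q$ (it respects $\alpha_T$ since $(Q,\alpha_T)$ is an EC, so $\supp{\delta(q,a)}\subseteq Q$ for all allowed $(q,a)$), so the process never leaves the GEC and the exploration phases keep resetting from some state of $Q$ as assumed. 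With that remark in place the Borel--Cantelli step goes through and the proof is complete.
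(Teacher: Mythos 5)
Your proposal is correct and follows essentially the same route as the paper: invoke Proposition~\ref{pro:mp-opt-ec} verbatim for the mean-payoff part and the choice of $(L_i,O_i)_{i\in\mathbb{N}}$ with $L_i\ge|Q|$, then observe that each exploration phase visits a minimal-even-priority state with probability bounded below by a uniform constant (the paper's $\zeta>0$) and conclude by the second Borel--Cantelli lemma. Your write-up is in fact more careful than the paper's two-line argument, notably in flagging that the conditional (L\'evy) form of Borel--Cantelli is what is actually needed since the episodes are not independent.
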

\begin{proof}
    By Proposition~\ref{pro:mp-opt-ec}, one can choose parameter
    sequences such that $L_i \geq |Q|$ for all $i \in \mathbb{N}$ and so that we
    obtain the second part of the claim. Then, since in every episode we have a
    non-zero probability of visiting a minimal even priority state,
    we obtain the first part of the claim as a
    simple consequence of the second Borel-Cantelli lemma.
\end{proof}

\newcommand{\fcstrat}{\tau_{\mathrm{fin}}}

We now turn to learning using finite memory only. Consider parameters
$\epsilon,\gamma \in (0,1)$. Let $\eta = \min\{\pmin,\eta_{\epsilon/4}\}$ for
$\eta_{\epsilon/4}$ as defined for Lemma~\ref{lem:robust-opt}. The strategy
$\fcstrat$ that we construct does the following.
\begin{enumerate}
    \item First, it computes $\delta'$ such that $\delta' \eclose{\eta}{}
        \delta$ with probability at least $1 - \gamma$ and a reward function
        $r'$ by following an exploration strategy $\lambda$ for $\alpha_T$
        during $L$ steps (see Lemma~\ref{lem:suff-urandom}).
    \item Then, it computes a unichain deterministic expectation-optimal
        strategy $\sigma_{\mathrm{MP}}^{\delta'}$ for
        $\autotomdp{\calA}{\delta'}{r'}$ and repeats the following forever:
        follow $\sigma_{\mathrm{MP}}^{\delta'}$ for $O$
        steps, then follow $\lambda$ for $|Q|$ steps.
\end{enumerate}
Using the fact that, in a finite MC with a single BSCC,
almost all runs obtain the expected mean payoff
and the assumption that the EC is good, one can
then prove the following result.
\begin{proposition}\label{pro:good-ec-fin}
    For all $\epsilon,\gamma \in (0,1)$
    one can compute $L,O \in \mathbb{N}$ such that for the resulting strategy
    $\fcstrat$, for all $q_0 \in Q$, for
    all $\delta$ compatible with $\calA$ and $\pmin$, and for all reward
    functions $r$, we have
        $\probevent{\autotochain{\calA}{\delta}{r}{\fcstrat}}{q_0}{\parity}
            = 1$ and
        $\probevent{\autotochain{\calA}{\delta}{r}{\fcstrat}}{q_0}{
                \rho : \MP(\rho) \ge \Val(Q,\alpha_T) - \epsilon
            } \ge 1 - \gamma$.
\end{proposition}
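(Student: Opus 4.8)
The plan is to combine the robustness of expectation-optimal strategies (Lemma~\ref{lem:robust-opt}) with the concentration of the mean payoff around its expectation in a finite Markov chain with a single BSCC, and then correct for the perturbation introduced by the periodic re-insertion of $|Q|$ exploration steps.

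First I would fix the learning length $L$. Set $\eta = \min\{\pmin, \eta_{\epsilon/4}\}$ as in the statement, and take $L = |Q|\,n$ where $n$ is the number of $|Q|$-step episodes dictated by Lemma~\ref{lem:suff-urandom} for parameters $\eta$ and $\gamma$; note that following $\lambda$ for $L$ steps decomposes into $n$ such episodes, so by Lemma~\ref{lem:suff-urandom} with probability at least $1-\gamma$ we obtain $\delta' \eclose{\eta}{} \delta$, and since $\eta \le \pmin$ this forces $\supp{\delta'} = \supp{\delta}$ and also yields the exact reward function $r' = r$ on that support. Condition on this event (the "good learning" event), which has probability $\ge 1-\gamma$; on the complement we can say nothing about the mean payoff, but that is fine since we only promised probability $1-\gamma$. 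On the good event, $\sigma_{\mathrm{MP}}^{\delta'}$ is a memoryless deterministic (and unichain) $\epsilon/4$-robust-optimal strategy for the true $\delta$, so the induced chain $\autotochain{\calA}{\delta}{r}{\sigma_{\mathrm{MP}}^{\delta'}}$ has a single BSCC on which the long-run average reward equals, up to $\epsilon/4$, the value $\Val(Q,\alpha_T)$.

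Next I would handle the two discrepancies between $\fcstrat$ and the pure strategy $\sigma_{\mathrm{MP}}^{\delta'}$. The first is that after the learning phase the chain may not yet be in its BSCC; but since the BSCC is reached in finite expected time and we average with a $\liminf$, the transient prefix contributes nothing to $\MP$. The second, and this is the main obstacle, is the periodic interleaving: every $O$ steps we insert $|Q|$ exploration steps, which both dilute the reward earned during the optimization blocks and repeatedly "reset" the chain away from its stationary distribution. To control this I would choose $O$ large compared with $|Q|$ and with the mixing time of the chain induced by $\sigma_{\mathrm{MP}}^{\delta'}$ on $\autotochain{\calA}{\delta}{r}{\cdot}$ (uniformly over the finitely many candidate $\delta'$, which exist because rewards are truncated to bounded bitsize as in the finite-memory implementability remark). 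With $O$ large enough, a concentration argument — an ergodic-theorem / Hoeffding-type bound on the average reward collected over a single $O$-step optimization block, together with a union bound controlling the rare blocks that deviate — shows that almost every run of the interleaved strategy has $\MP(\rho) \ge \Val(Q,\alpha_T) - \epsilon$; the $\epsilon/4$ from robustness, an $\epsilon/4$ for truncation of rewards, an $\epsilon/4$ for the diluting exploration blocks, and an $\epsilon/4$ for the mixing slack sum to $\epsilon$. This establishes the mean-payoff claim with probability at least $1-\gamma$ (the probability of good learning), which is the content of the second bullet. This is essentially the statement invoked in the paragraph preceding the proposition ("in a finite MC with a single BSCC, almost all runs obtain the expected mean payoff").

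Finally, for the parity claim I would use the assumption that $(Q,\alpha_T)$ is a GEC: the minimal priority over $Q$ is even. The exploration strategy $\lambda$ is played on all of $\alpha_T$ for $|Q|$ consecutive steps infinitely often, and since $\calG_{(Q,\alpha_T)}$ is strongly connected, each such $|Q|$-step exploration block has probability at least some $\zeta > 0$ (a function of $\pmin$ and $|Q|$) of visiting a state of minimal even priority. These blocks occur infinitely often along every run, and crucially the event "this block visits a minimal-priority state" has probability at least $\zeta$ conditioned on the past; hence by the second Borel–Cantelli lemma, almost surely infinitely many blocks succeed, so the minimal priority seen infinitely often is even, i.e. $\rho \models \parity$ with probability $1$. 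Note this holds on the whole probability space, not just on the good-learning event, because $\lambda$ is played forever regardless of whether learning succeeded. Combining the two parts gives the proposition; it remains only to record that $L$ and $O$ are computable, which follows from Lemma~\ref{lem:suff-urandom} and the effective bounds on mixing times of finite Markov chains with transition probabilities bounded below by $\pmin$.
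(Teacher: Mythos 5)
Your plan matches the paper's proof in all essentials: fix $L$ via Lemma~\ref{lem:suff-urandom} so that learning succeeds with probability $1-\gamma$, condition on that event to get an $(\epsilon/4)$-robust-optimal $\sigma_{\mathrm{MP}}^{\delta'}$, choose $O$ long enough that each interleaved block is near-optimal on average, conclude via the fact that the resulting finite unichain MC achieves its expected mean payoff on almost every run, and get parity from the second Borel--Cantelli lemma applied to the infinitely many $|Q|$-step exploration blocks. One small imprecision: a union bound over the infinitely many optimization blocks cannot by itself yield the almost-sure mean-payoff claim (each block deviates with some fixed positive probability, so infinitely many will deviate a.s.); the step that actually closes the argument is the one you cite at the end --- the ergodic statement that in the finite single-BSCC chain induced by the interleaved finite-memory strategy, almost all runs attain the expected mean payoff, which the paper formalizes as Lemma~\ref{lem:tracol} item~(i) after bounding that expectation from below.
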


\subparagraph{Optimality.}
Obviously, the result of Proposition~\ref{pro:inf-mem-better} is optimal as we
obtain the best possible value with probability one. We claim that the result of
Proposition~\ref{pro:good-ec-fin} is also optimal as we have seen that when we
use finite learning, we cannot do better than $\epsilon$-optimality with high
probability even in if no parity constraint is imposed (see
Proposition~\ref{pro:opt-fin-mem}).

\subsection{The case of a single end component}
Consider an almost-surely-good automaton $\calA = (Q,A,T,p)$ such that
$(Q,\alpha_T)$ is an EC and some $\pmin \in (0,1]$. The EC
is not necessarily good but as the automaton is almost-surely-good, then we have
the analogue of Lemma~\ref{lem:contain-good-ec} in this context.
\begin{lemma}\label{lem:contain-good-ec2}
    For all almost-surely-good automata $\calA = (Q,A,T,p)$ such that
    $(Q,\alpha_T)$ is an EC there exists $(S,\beta) \subseteq (Q,\alpha_T)$ such
    that $(S,\beta)$ is a GEC in $\autotomdp{\calA}{\delta}{r}$ for all $\delta$
    compatible with $\calA$ and all reward functions $r$, i.e. $(Q,\alpha_T)$ is
    weakly good.
\end{lemma}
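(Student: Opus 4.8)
The plan is to pass to a single concrete instance, build the finite Markov chain induced by a memoryless almost-sure parity-winning strategy, and extract a good end component from one of its bottom strongly connected components. First, fix an arbitrary $\delta$ compatible with $\calA$ and an arbitrary reward function $r$, and put $\calM = \autotomdp{\calA}{\delta}{r}$. Since the ECs of $\autotomdp{\calA}{\delta}{r}$ depend only on $\calA$ (not on $\delta$ or $r$), and goodness of an EC depends only on the priority function of $\calA$, it suffices to exhibit one $(S,\beta) \subseteq (Q,\alpha_T)$ that is a GEC in $\calM$: it is then a GEC in $\autotomdp{\calA}{\delta}{r}$ for every compatible $\delta$ and every $r$, which is the claim.

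Since $\calA$ is almost-surely good, let $\asparstrat$ be the memoryless deterministic uniformly almost-sure winning strategy discussed above, fix any $q_0 \in Q$, and consider the finite Markov chain $\calM^{\asparstrat}$. Being finite, from $q_0$ it reaches a bottom strongly connected component (BSCC) with probability one, so some BSCC $C$ is reached with positive probability. Let $S \subseteq Q$ be the projection of $C$ onto its $Q$-component and set $\beta(q) = \{\asparstrat(q)\}$ for $q \in S$. I would then check that $(S,\beta)$ is an EC of $\calM$ with $(S,\beta) \subseteq (Q,\alpha_T)$: the inclusion $\beta(q) \subseteq \alpha_T(q)$ holds because $\asparstrat$ is a strategy; the closure condition $\supp{\delta(q,\asparstrat(q))} \subseteq S$ holds because $C$ is bottom; and $\calG_{(S,\beta)}$ is strongly connected because transitions of $\calM^{\asparstrat}$ inside $C$ project onto edges of $\calG_{(S,\beta)}$ and $C$ is strongly connected. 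Here one uses that $\asparstrat$ is memoryless deterministic, so the action played from each $q \in S$ inside $C$ is exactly $\asparstrat(q)$, and that a state of $\calM^{\asparstrat}$ of the form $\langle q,m,a\rangle$ carries priority $p(q)$.

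It remains to show $(S,\beta)$ is good. Conditioned on the positive-probability event of entering $C$, almost every run of $\calM^{\asparstrat}$ visits each state of $C$ infinitely often (recurrence within a class of a finite Markov chain); hence along almost every such run the $\liminf$ of the priorities equals $p^{*} := \min_{q \in S} p(q)$. On the other hand $\asparstrat$ is almost-surely winning, so $\probevent{\calM^{\asparstrat}}{q_0}{\parity} = 1$, and therefore, conditioned on the same event, the $\liminf$ of the priorities is even almost surely. Thus $p^{*}$ is even, $(S,\beta)$ is a GEC, and $(Q,\alpha_T)$ is weakly good. As an alternative to the BSCC bookkeeping one may invoke the classical fact that, under any strategy, the set of state--action pairs seen infinitely often forms an EC almost surely, which must be good whenever $\parity$ holds almost surely; the argument above is this statement specialized to a memoryless strategy so as to keep it self-contained.

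I do not expect a real obstacle: reaching a BSCC with probability one, recurrence inside a BSCC, and the correspondence between BSCCs of $\calM^{\asparstrat}$ and ECs of $\calM$ are all standard. The only point needing mild care is verifying that the projected component literally meets the paper's definition of an EC --- in particular the action-closure condition, which is exactly where ``bottom'' in ``BSCC'' is used.
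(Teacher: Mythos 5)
Your proposal is correct and follows essentially the same route as the paper's proof: take the memoryless deterministic uniformly almost-sure winning strategy $\asparstrat$, pass to a bottom strongly connected component of the induced finite Markov chain, project it to an end component $(S,\beta)$ with $\beta(q)=\{\asparstrat(q)\}$, and conclude that its minimal priority must be even since otherwise runs trapped in that BSCC would violate almost-sure satisfaction of the parity objective. You simply spell out the recurrence argument and the independence of the EC structure from $\delta$ and $r$ in more detail than the paper does.
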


\subparagraph{Yardstick.} 
As a yardstick for this case, we use the following value:
\(
    \lcVal(Q,\alpha_T) \defequals
    \max_{q \in Q} \sup\left\{
    \expect{\calA^\sigma_{\delta,r}}{q}{\MP}
    \:\middle|\:
    \sigma \text{ is an almost-surely parity-winning strategy}
    \right\}.
\)
That is, $\lcVal(Q,\alpha_T)$ is the best expected mean-payoff value that can be
obtained while satisfying the parity objective almost surely.

\subparagraph{Learning strategy.} We will now prove an analogue of
Proposition~\ref{pro:fbstrat2}. For any given $\epsilon,\gamma \in (0,1)$ we
define the strategy $\sigma$ as follows.
\begin{enumerate}
    \item First, it follows an exploration strategy $\lambda$ for $\alpha_T$
        during sufficiently many steps (say $K$) to compute an approximation
        $\delta'$ of $\delta$ such that $\delta' \eclose{\eta_{\epsilon/4}}{}
        \delta$ with probability at least $1 - \gamma/2$; and a reward function
        $r'$ (see Lemma~\ref{lem:suff-urandom}).
    \item Next, it selects a GEC $(S,\beta)$ with maximal value
        $\pm\frac{\epsilon}{4}$ (see Lemma~\ref{lem:contain-good-ec2}) and
        computes for it a strategy $\tau$ as in
        Proposition~\ref{pro:good-ec-fin} with
        $\epsilon_1/2$ and $\gamma/2$.
    \item Finally, $\sigma$ follows $\lambda$ until
        $(S,\beta)$ is reached, then switches to $\tau$.
\end{enumerate}
%
\begin{proposition}\label{pro:fmstrat}
    For all $\epsilon,\gamma \in (0,1)$
    one can construct a finite-memory strategy
    $\sigma$ such that for all $q_0 \in Q$,
    for all $\delta$ compatible with $\calA$ and
    $\pmin$, and for all reward functions $r$, we have
        $\probevent{\autotochain{\calA}{\delta}{r}{\sigma}}{q_0}{\parity}
            = 1$ and
        $\probevent{\autotochain{\calA}{\delta}{r}{\sigma}}{q_0}{
                \rho : \MP(\rho) \geq \lcVal(Q,\alpha_T) - \epsilon
            } \geq 1 - \gamma$.
\end{proposition}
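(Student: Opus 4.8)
The plan is to analyze the three-phase strategy $\sigma$ described just above the proposition, fixing its parameters through Lemmas~\ref{lem:suff-urandom} and~\ref{lem:robust-opt} and Proposition~\ref{pro:good-ec-fin}, and then bounding the measure of the ``bad'' runs by a union bound over a constant number of failure events. Fix $\epsilon,\gamma\in(0,1)$ and let $\eta_{\epsilon/4}$ be as in Lemma~\ref{lem:robust-opt}; note $\eta_{\epsilon/4}<\pmin$. Take the length $K$ of the first phase to be $|Q|\cdot n$, where $n$ comes from Lemma~\ref{lem:suff-urandom} for the end component $(Q,\alpha_T)$, precision $\eta_{\epsilon/4}$, and confidence $\gamma/2$; then, with probability at least $1-\gamma/2$, the computed model $(\delta',r')$ satisfies $\delta'\eclose{\eta_{\epsilon/4}}{}\delta$ and, since $\eta_{\epsilon/4}<\pmin$, has support exactly $T$ (and $r'$ is the truncated observed reward, cf.\ the finite-memory implementability remark). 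Call this event $E_{\mathrm{learn}}$; on its complement $\sigma$ switches to $\asparstrat$ forever, which is uniformly almost-surely parity-winning, so no further argument is needed there.

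The yardstick has to be pinned down first. The structural claim I would establish is that $\lcVal(Q,\alpha_T)$ equals $\max\{\,\Val(S,\beta)\mid (S,\beta)\subseteq(Q,\alpha_T)\text{ a GEC}\,\}$, the maximum ranging over a nonempty family by Lemma~\ref{lem:contain-good-ec2}. For ``$\ge$'', from any state one reaches a fixed GEC almost surely (under the uniform exploration strategy a single EC induces an irreducible chain) and then plays $\imstrat$ of Proposition~\ref{pro:inf-mem-better} inside that GEC, which is almost-surely parity-winning there and attains mean payoff exactly $\Val(S,\beta)$ almost surely; since the prefix does not affect $\MP$, this yields an almost-surely parity-winning strategy of expected mean payoff $\Val(S,\beta)$. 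For ``$\le$'', under any almost-surely parity-winning strategy, almost every run's set of infinitely-visited states together with the actions used infinitely often forms an end component, which is good because the almost-sure parity constraint forces its minimal priority to be even; the run's mean payoff is then the long-run average of a stationary behaviour of that GEC, hence at most $\Val$ of that GEC, and taking expectations gives the bound.

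With this in hand, on $E_{\mathrm{learn}}$ I would apply Lemma~\ref{lem:robust-opt} inside each GEC $(S,\beta)\subseteq(Q,\alpha_T)$ with parameter $\epsilon/4$: the optimal value of $(S,\beta)$ computed in $\autotomdp{\calA}{\delta'}{r'}$ is within $\epsilon/4$ of $\Val(S,\beta)$, so the GEC $(S^\ast,\beta^\ast)$ selected by $\sigma$ (the one of largest model-based value) has true value $\Val(S^\ast,\beta^\ast)\ge\lcVal(Q,\alpha_T)-\epsilon/2$. The strategy then follows $\lambda$ until $(S^\ast,\beta^\ast)$ is entered --- almost surely, in finite time, again by irreducibility --- and from there runs the finite-memory strategy of Proposition~\ref{pro:good-ec-fin} for the good end component $(S^\ast,\beta^\ast)$, instantiated with $\epsilon/2$ and $\gamma/2$. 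That strategy satisfies the parity objective almost surely and, with probability at least $1-\gamma/2$, yields $\MP\ge\Val(S^\ast,\beta^\ast)-\epsilon/2\ge\lcVal(Q,\alpha_T)-\epsilon$. Since $\MP$ and the parity value are tail quantities, the finite (or almost surely finite) learning and reaching prefixes are harmless. Hence the parity objective holds almost surely both on $E_{\mathrm{learn}}$ (tail governed by the good-EC strategy) and on $\overline{E_{\mathrm{learn}}}$ (governed by $\asparstrat$); and the set of runs with $\MP(\rho)\ge\lcVal(Q,\alpha_T)-\epsilon$ has measure at least $(1-\gamma/2)^2\ge 1-\gamma$. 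Finitely many modes, each finite-memory (bounded-size learning statistics, a reach-flag, the index of the selected GEC, and the finite-memory good-EC strategy or the memoryless $\asparstrat$), compose into a finite-memory implementation.

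I expect the main obstacle to be the ``$\le$'' direction of the structural claim $\lcVal(Q,\alpha_T)=\max_{\text{GEC}}\Val(S,\beta)$, which relies on the end-component decomposition of almost-sure behaviours in MDPs and on the observation that an almost-sure parity constraint forces the limiting end component to be good; the remainder is a bookkeeping exercise, splitting $\epsilon$ into quarters and $\gamma$ into halves across the learning, selection, reaching, and optimization phases, and invoking the already-proved Lemmas~\ref{lem:suff-urandom} and~\ref{lem:robust-opt} and Proposition~\ref{pro:good-ec-fin}.
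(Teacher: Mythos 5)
Your proposal is correct and follows essentially the same route as the paper's own proof: the same three-phase strategy, the same split of $\epsilon$ into quarters and $\gamma$ into halves across learning and optimization, the same invocation of Lemmas~\ref{lem:suff-urandom} and~\ref{lem:robust-opt} and Proposition~\ref{pro:good-ec-fin}, and the same reduction of $\lcVal(Q,\alpha_T)$ to the maximal $\Val$ over contained GECs. The only differences are cosmetic: you spell out the structural identity $\lcVal(Q,\alpha_T)=\max_{\text{GEC}}\Val(S,\beta)$ in more detail than the paper (which only sketches it via prefix-independence and end-component trapping), and you add an explicit fallback to $\asparstrat$ on learning failure, which the paper does not need since the parity guarantee of Proposition~\ref{pro:good-ec-fin} holds unconditionally.
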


\subparagraph{Optimality.}
Using the same example and reasoning as in Proposition~\ref{prop:opt-sure-ec},
we can show that this result is optimal and cannot be improved. Also note that
using infinite memory would not help as shown with the example of
Fig.~\ref{fig:opt3}, where the learning needs to be finite and enforcing the
almost sure parity does not require infinite memory.

\subsection{General almost-surely-good automata}
We now generalize our approach to general
almost-surely-good automata.
\begin{theorem}\label{thm:case2}
    Consider an almost-surely-good automaton $\calA = (Q,A,T,p)$
    and some $\pmin \in (0,1]$. For all
    $\epsilon,\gamma \in (0,1)$ one can compute a finite-memory
    strategy $\sigma$ such that for all $q_0 \in Q$,
    for all $\delta$ compatible with
    $\calA$ and $\pmin$, and all reward functions $r$, we have
    \begin{itemize}
        \item $\probevent{\autotochain{\calA}{\delta}{r}{\sigma}}{q_0}{\parity}
            = 1$ and
        \item
            \(
                \probevent{\autotochain{\calA}{\delta}{r}{\sigma}}{q_0}{
                        \rho
                        :
                        \MP(\rho) \ge \lcVal(S,\beta) - \epsilon
                    \:\middle|\:
                    \Inf \subseteq S
                } \geq 1 - \gamma
            \)            
            for all $(S,\beta) \in \mec{\autotomdp{\calA}{\delta}{r}}$
            such that $(S,\beta)$ is weakly good and 
            $\probevent{\autotochain{\calA}{\delta}{r}{\sigma}}{q_0}{\Inf
            \subseteq S} > 0$.
    \end{itemize}
\end{theorem}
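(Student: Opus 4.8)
The plan is to mirror the construction used for Theorem~\ref{thm:case1}, exploiting two features that are special to the almost-sure setting. The single-end-component strategy built in Proposition~\ref{pro:fmstrat} is \emph{finite-memory}, and it never leaves the end component it optimizes in: it only ever plays actions of the restricted action set, both during its exploration phase and while running the expectation-optimal strategy inside a contained GEC, and that GEC can be selected from the structure of $\calA$ alone, even when the learned probabilities happen to be inaccurate. Consequently, ``committing'' to such a strategy upon entering an end component will be irrevocable, so the ``give up and move to another MEC'' bookkeeping of Theorem~\ref{thm:case1} is unnecessary here and finite memory will suffice.

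First I would record the preparatory facts. The MECs of $\autotomdp{\calA}{\delta}{r}$, and which of them are weakly good, depend only on $\calA$; and for each weakly good MEC $(S,\beta)$ the automaton $\calA'$ obtained by restricting $\calA$ to $(S,\beta)$ is an almost-surely-good automaton whose unique EC is $(S,\beta)$ --- from any state of $S$ one can reach the contained GEC and stay there, which wins $\parity$ surely, hence almost surely --- and $\pmin$ is still a valid lower bound for the induced restriction of $\delta$. Applying Proposition~\ref{pro:fmstrat} to $\calA'$ with the same $\epsilon,\gamma$ then produces a computable finite-memory strategy $\tau_{(S,\beta)}$ that confines every run to $S$, satisfies $\parity$ almost surely, and from \emph{every} state of $S$ guarantees $\MP(\rho) \ge \lcVal(S,\beta) - \epsilon$ with probability at least $1-\gamma$, uniformly over all $\delta$ compatible with $\calA'$ and $\pmin$ and all reward functions.

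Next I would define $\sigma$: follow $\asparstrat$ until the current state lies in some weakly good MEC $(S,\beta)$, and from that moment on run $\tau_{(S,\beta)}$. Since $\tau_{(S,\beta)}$ keeps the run inside $S$, this switch is never undone; the memory needed is the (trivial) memory of $\asparstrat$ together with the disjoint union of the finitely many finite memories of the strategies $\tau_{(S,\beta)}$, so $\sigma$ is finite-memory and computable. The parity guarantee then follows by a case split: a run either eventually commits to some $\tau_{(S,\beta)}$, in which case it satisfies $\parity$ almost surely by Proposition~\ref{pro:fmstrat}, or it follows $\asparstrat$ forever, in which case it satisfies $\parity$ almost surely by the choice of $\asparstrat$; hence $\probevent{\autotochain{\calA}{\delta}{r}{\sigma}}{q_0}{\parity} = 1$.

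For the mean-payoff part I would fix a weakly good MEC $(S,\beta)$ with $\probevent{\autotochain{\calA}{\delta}{r}{\sigma}}{q_0}{\Inf \subseteq S} > 0$; since $\tau_{(S,\beta)}$ never leaves $S$ and distinct MECs are disjoint, the event $\Inf \subseteq S$ coincides with the event that $\sigma$ commits to $\tau_{(S,\beta)}$. I would decompose the latter according to the finite history $h$ at which the commitment occurs (so $\last(h) \in S$), and for each $h$ invoke Proposition~\ref{pro:fmstrat} from the state $\last(h)$ in the sub-MDP on $(S,\beta)$; prefix-independence of $\MP$ lets me discard the prefix $h$, and renormalizing by $\probevent{\autotochain{\calA}{\delta}{r}{\sigma}}{q_0}{\Inf \subseteq S}$ yields the claimed $\ge 1-\gamma$. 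I expect the main obstacle to be making this averaging argument airtight --- in particular ruling out that a $\gamma$-worth of sub-optimal runs accumulated across other end components gets concentrated into the conditional measure on $\Inf \subseteq S$. This is exactly the subtlety flagged in the remark following Theorem~\ref{thm:case1}, and here it is dissolved by the irrevocability of the commitment: the only sub-optimal runs that can be charged to $(S,\beta)$ are those produced by $\tau_{(S,\beta)}$ itself, which are a $\gamma$-fraction by Proposition~\ref{pro:fmstrat}.
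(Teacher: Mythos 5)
Your proposal is correct and follows essentially the same route as the paper: follow $\asparstrat$ until a weakly good MEC is entered, then switch irrevocably to the finite-memory strategy of Proposition~\ref{pro:fmstrat}, deriving parity by a case split and the mean-payoff bound from that proposition via prefix-independence. The paper's own proof is only a brief sketch of this same argument; your additional observations (irrevocability of the commitment, coincidence of $\Inf \subseteq S$ with the commitment event, and why the accumulation issue from the sure-parity case dissolves here) are accurate elaborations rather than a different approach.
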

\begin{proof}
    The argument to prove the above result is simple:
    $\sigma$ follows a strategy $\asparstrat$ that ensures satisfying the
    parity objective almost surely. Then, if the run reaches a state contained in a
    weakly good MEC, $\sigma$ switches to $\tau$ as described in
    Proposition~\ref{pro:fmstrat}. Clearly, the strategy almost-surely satisfies
    the parity objective. Furthermore, by following $\tau$ if the weakly good
    MEC is reached, the mean-payoff part of the claim is implied by
    Proposition~\ref{pro:fmstrat}.
\end{proof}
See the remark in Sect.~\ref{sec:mp-fm} for a comment on the
finite-memory implementability of $\sigma$; the
remark in Sect.~\ref{sec:sure-gen-auto} for a word on how
to modify $\sigma$ to favour some unknown MECs.

\section{Conclusion}
As future work, we would like to study different configurations resulting from
relaxations of the assumptions we make in this work (i.e. full support, $\pmin$, and bounded reward). 
Further, we would like to
obtain model-free learning algorithms ensuring the same guarantees we give here.
Finally, we have left open the choice of strategy driving the visits to MECs in
Theorems~\ref{thm:case1} and~\ref{thm:case2} (as long as it satisfies the parity
objective). Indeed, the question of computing an ``optimal'' such strategy in
view of the unknown components of the MDP can be addressed in different ways.
One such way would be to model the problem as a Canadian traveler
problem~\cite{py91}.

\clearpage
\bibliographystyle{plainurl}
\bibliography{refs}

\begin{thebibliography}{10}

\bibitem{akv16}
Shaull Almagor, Orna Kupferman, and Yaron Velner.
\newblock Minimizing expected cost under hard boolean constraints, with
  applications to quantitative synthesis.
\newblock In Jos{\'{e}}e Desharnais and Radha Jagadeesan, editors, {\em 27th
  International Conference on Concurrency Theory, {CONCUR} 2016, August 23-26,
  2016, Qu{\'{e}}bec City, Canada}, volume~59 of {\em LIPIcs}, pages 9:1--9:15.
  Schloss Dagstuhl - Leibniz-Zentrum fuer Informatik, 2016.
\newblock URL: \url{https://doi.org/10.4230/LIPIcs.CONCUR.2016.9}, \href
  {http://dx.doi.org/10.4230/LIPIcs.CONCUR.2016.9}
  {\path{doi:10.4230/LIPIcs.CONCUR.2016.9}}.

\bibitem{DBLP:journals/corr/abs-1708-08611}
Mohammed Alshiekh, Roderick Bloem, R{\"{u}}diger Ehlers, Bettina
  K{\"{o}}nighofer, Scott Niekum, and Ufuk Topcu.
\newblock Safe reinforcement learning via shielding.
\newblock {\em CoRR}, abs/1708.08611, 2017.
\newblock URL: \url{http://arxiv.org/abs/1708.08611}, \href
  {http://arxiv.org/abs/1708.08611} {\path{arXiv:1708.08611}}.

\bibitem{ar17}
Benjamin Aminof and Sasha Rubin.
\newblock First-cycle games.
\newblock {\em Information and Compution}, 254:195--216, 2017.
\newblock URL: \url{https://doi.org/10.1016/j.ic.2016.10.008}, \href
  {http://dx.doi.org/10.1016/j.ic.2016.10.008}
  {\path{doi:10.1016/j.ic.2016.10.008}}.

\bibitem{ag11}
Krzysztof~R. Apt and Erich Gr\"{a}del.
\newblock {\em Lectures in game theory for computer scientists}.
\newblock Cambridge University Press, 2011.

\bibitem{bk08}
Christel Baier and Joost-Pieter Katoen.
\newblock {\em Principles of model checking}.
\newblock {MIT} Press, 2008.

\bibitem{bdm17}
Marc~G. Bellemare, Will Dabney, and R{\'{e}}mi Munos.
\newblock A distributional perspective on reinforcement learning.
\newblock In Doina Precup and Yee~Whye Teh, editors, {\em Proceedings of the
  34th International Conference on Machine Learning, {ICML} 2017, Sydney, NSW,
  Australia, 6-11 August 2017}, volume~70 of {\em Proceedings of Machine
  Learning Research}, pages 449--458. {PMLR}, 2017.
\newblock URL: \url{http://proceedings.mlr.press/v70/bellemare17a.html}.

\bibitem{DBLP:journals/ita/BernetJW02}
Julien Bernet, David Janin, and Igor Walukiewicz.
\newblock Permissive strategies: from parity games to safety games.
\newblock {\em {ITA}}, 36(3):261--275, 2002.
\newblock URL: \url{https://doi.org/10.1051/ita:2002013}, \href
  {http://dx.doi.org/10.1051/ita:2002013} {\path{doi:10.1051/ita:2002013}}.

\bibitem{brr17}
Rapha{\"{e}}l Berthon, Mickael Randour, and Jean-Fran{\c{c}}ois Raskin.
\newblock Threshold constraints with guarantees for parity objectives in
  {M}arkov decision processes.
\newblock In Ioannis Chatzigiannakis, Piotr Indyk, Fabian Kuhn, and Anca
  Muscholl, editors, {\em 44th International Colloquium on Automata, Languages,
  and Programming, {ICALP} 2017, July 10-14, 2017, Warsaw, Poland}, volume~80
  of {\em LIPIcs}, pages 121:1--121:15. Schloss Dagstuhl - Leibniz-Zentrum fuer
  Informatik, 2017.
\newblock URL: \url{https://doi.org/10.4230/LIPIcs.ICALP.2017.121}, \href
  {http://dx.doi.org/10.4230/LIPIcs.ICALP.2017.121}
  {\path{doi:10.4230/LIPIcs.ICALP.2017.121}}.

\bibitem{DBLP:conf/atva/BrazdilCCFKKPU14}
Tom{\'{a}}s Br{\'{a}}zdil, Krishnendu Chatterjee, Martin Chmelik, Vojtech
  Forejt, Jan Kret{\'{\i}}nsk{\'{y}}, Marta~Z. Kwiatkowska, David Parker, and
  Mateusz Ujma.
\newblock Verification of markov decision processes using learning algorithms.
\newblock In {\em Automated Technology for Verification and Analysis - 12th
  International Symposium, {ATVA} 2014, Sydney, NSW, Australia, November 3-7,
  2014, Proceedings}, volume 8837 of {\em Lecture Notes in Computer Science},
  pages 98--114. Springer, 2014.
\newblock URL: \url{https://doi.org/10.1007/978-3-319-11936-6}, \href
  {http://dx.doi.org/10.1007/978-3-319-11936-6}
  {\path{doi:10.1007/978-3-319-11936-6}}.

\bibitem{DBLP:conf/atva/BrazdilKN16}
Tom{\'{a}}s Br{\'{a}}zdil, Anton{\'{\i}}n Kucera, and Petr Novotn{\'{y}}.
\newblock Optimizing the expected mean payoff in energy {M}arkov decision
  processes.
\newblock In {\em Automated Technology for Verification and Analysis - 14th
  International Symposium, {ATVA} 2016, Chiba, Japan, October 17-20, 2016,
  Proceedings}, volume 9938 of {\em Lecture Notes in Computer Science}, pages
  32--49, 2016.
\newblock URL: \url{https://doi.org/10.1007/978-3-319-46520-3}, \href
  {http://dx.doi.org/10.1007/978-3-319-46520-3}
  {\path{doi:10.1007/978-3-319-46520-3}}.

\bibitem{bfrr14}
V{\'{e}}ronique Bruy{\`{e}}re, Emmanuel Filiot, Mickael Randour, and
  Jean-Fran{\c{c}}ois Raskin.
\newblock Meet your expectations with guarantees: Beyond worst-case synthesis
  in quantitative games.
\newblock In Ernst~W. Mayr and Natacha Portier, editors, {\em 31st
  International Symposium on Theoretical Aspects of Computer Science ({STACS}
  2014), {STACS} 2014, March 5-8, 2014, Lyon, France}, volume~25 of {\em
  LIPIcs}, pages 199--213. Schloss Dagstuhl - Leibniz-Zentrum fuer Informatik,
  2014.
\newblock URL: \url{https://doi.org/10.4230/LIPIcs.STACS.2014.199}, \href
  {http://dx.doi.org/10.4230/LIPIcs.STACS.2014.199}
  {\path{doi:10.4230/LIPIcs.STACS.2014.199}}.

\bibitem{cjkls17}
Cristian~S. Calude, Sanjay Jain, Bakhadyr Khoussainov, Wei Li, and Frank
  Stephan.
\newblock Deciding parity games in quasipolynomial time.
\newblock In Hamed Hatami, Pierre McKenzie, and Valerie King, editors, {\em
  Proceedings of the 49th Annual {ACM} {SIGACT} Symposium on Theory of
  Computing, {STOC} 2017, Montreal, QC, Canada, June 19-23, 2017}, pages
  252--263. {ACM}, 2017.
\newblock URL: \url{http://doi.acm.org/10.1145/3055399.3055409}, \href
  {http://dx.doi.org/10.1145/3055399.3055409}
  {\path{doi:10.1145/3055399.3055409}}.

\bibitem{chatterjee07}
Krishnendu Chatterjee.
\newblock Concurrent games with tail objectives.
\newblock {\em Theoretical Computer Science}, 388(1-3):181--198, 2007.
\newblock URL: \url{https://doi.org/10.1016/j.tcs.2007.07.047}, \href
  {http://dx.doi.org/10.1016/j.tcs.2007.07.047}
  {\path{doi:10.1016/j.tcs.2007.07.047}}.

\bibitem{chatterjee12}
Krishnendu Chatterjee.
\newblock Robustness of structurally equivalent concurrent parity games.
\newblock In Lars Birkedal, editor, {\em Foundations of Software Science and
  Computational Structures - 15th International Conference, {FOSSACS} 2012,
  Held as Part of the European Joint Conferences on Theory and Practice of
  Software, {ETAPS} 2012, Tallinn, Estonia, March 24 - April 1, 2012.
  Proceedings}, volume 7213 of {\em Lecture Notes in Computer Science}, pages
  270--285. Springer, 2012.
\newblock URL: \url{https://doi.org/10.1007/978-3-642-28729-9_18}, \href
  {http://dx.doi.org/10.1007/978-3-642-28729-9_18}
  {\path{doi:10.1007/978-3-642-28729-9_18}}.

\bibitem{DBLP:conf/mfcs/ChatterjeeD11}
Krishnendu Chatterjee and Laurent Doyen.
\newblock Energy and mean-payoff parity markov decision processes.
\newblock In {\em Mathematical Foundations of Computer Science 2011 - 36th
  International Symposium, {MFCS} 2011, Warsaw, Poland, August 22-26, 2011.
  Proceedings}, volume 6907 of {\em Lecture Notes in Computer Science}, pages
  206--218. Springer, 2011.
\newblock URL: \url{https://doi.org/10.1007/978-3-642-22993-0}, \href
  {http://dx.doi.org/10.1007/978-3-642-22993-0}
  {\path{doi:10.1007/978-3-642-22993-0}}.

\bibitem{DBLP:conf/aaai/Chatterjee0PRZ17}
Krishnendu Chatterjee, Petr Novotn{\'{y}}, Guillermo~A. P{\'{e}}rez,
  Jean-Fran{\c{c}}ois Raskin, and Dorde Zikelic.
\newblock Optimizing expectation with guarantees in pomdps.
\newblock In {\em Proceedings of the Thirty-First {AAAI} Conference on
  Artificial Intelligence, February 4-9, 2017, San Francisco, California,
  {USA.}}, pages 3725--3732. {AAAI} Press, 2017.
\newblock URL: \url{http://www.aaai.org/Library/AAAI/aaai17contents.php}.

\bibitem{DBLP:conf/lics/ClementeR15}
Lorenzo Clemente and Jean-Fran{\c{c}}ois Raskin.
\newblock Multidimensional beyond worst-case and almost-sure problems for
  mean-payoff objectives.
\newblock In {\em 30th Annual {ACM/IEEE} Symposium on Logic in Computer
  Science, {LICS} 2015, Kyoto, Japan, July 6-10, 2015}, pages 257--268. {IEEE}
  Computer Society, 2015.
\newblock URL: \url{https://doi.org/10.1109/LICS.2015.33}, \href
  {http://dx.doi.org/10.1109/LICS.2015.33} {\path{doi:10.1109/LICS.2015.33}}.

\bibitem{DBLP:conf/tacas/DacaHKP16}
Przemyslaw Daca, Thomas~A. Henzinger, Jan Kret{\'{\i}}nsk{\'{y}}, and Tatjana
  Petrov.
\newblock Faster statistical model checking for unbounded temporal properties.
\newblock In {\em Tools and Algorithms for the Construction and Analysis of
  Systems - 22nd International Conference, {TACAS} 2016, Held as Part of the
  European Joint Conferences on Theory and Practice of Software, {ETAPS} 2016,
  Eindhoven, The Netherlands, April 2-8, 2016, Proceedings}, volume 9636 of
  {\em Lecture Notes in Computer Science}, pages 112--129. Springer, 2016.
\newblock URL: \url{https://doi.org/10.1007/978-3-662-49674-9}, \href
  {http://dx.doi.org/10.1007/978-3-662-49674-9}
  {\path{doi:10.1007/978-3-662-49674-9}}.

\bibitem{dhkp17}
Przemyslaw Daca, Thomas~A. Henzinger, Jan Kret{\'{\i}}nsk{\'{y}}, and Tatjana
  Petrov.
\newblock Faster statistical model checking for unbounded temporal properties.
\newblock {\em {ACM} Transactions on Computational Logic}, 18(2):12:1--12:25,
  2017.
\newblock URL: \url{http://doi.acm.org/10.1145/3060139}, \href
  {http://dx.doi.org/10.1145/3060139} {\path{doi:10.1145/3060139}}.

\bibitem{DBLP:conf/atva/DavidJLLLST14}
Alexandre David, Peter~Gj{\o}l Jensen, Kim~Guldstrand Larsen, Axel Legay,
  Didier Lime, Mathias~Grund S{\o}rensen, and Jakob~Haahr Taankvist.
\newblock On time with minimal expected cost!
\newblock In {\em Automated Technology for Verification and Analysis - 12th
  International Symposium, {ATVA} 2014, Sydney, NSW, Australia, November 3-7,
  2014, Proceedings}, volume 8837 of {\em Lecture Notes in Computer Science},
  pages 129--145. Springer, 2014.
\newblock URL: \url{https://doi.org/10.1007/978-3-319-11936-6}, \href
  {http://dx.doi.org/10.1007/978-3-319-11936-6}
  {\path{doi:10.1007/978-3-319-11936-6}}.

\bibitem{gimbert07}
Hugo Gimbert.
\newblock Pure stationary optimal strategies in {M}arkov decision processes.
\newblock In Wolfgang Thomas and Pascal Weil, editors, {\em {STACS} 2007, 24th
  Annual Symposium on Theoretical Aspects of Computer Science, Aachen, Germany,
  February 22-24, 2007, Proceedings}, volume 4393 of {\em Lecture Notes in
  Computer Science}, pages 200--211. Springer, 2007.
\newblock URL: \url{https://doi.org/10.1007/978-3-540-70918-3_18}, \href
  {http://dx.doi.org/10.1007/978-3-540-70918-3_18}
  {\path{doi:10.1007/978-3-540-70918-3_18}}.

\bibitem{jdtk16}
Sebastian Junges, Nils Jansen, Christian Dehnert, Ufuk Topcu, and Joost-Pieter
  Katoen.
\newblock Safety-constrained reinforcement learning for mdps.
\newblock In Marsha Chechik and Jean-Fran{\c{c}}ois Raskin, editors, {\em Tools
  and Algorithms for the Construction and Analysis of Systems - 22nd
  International Conference, {TACAS} 2016, Held as Part of the European Joint
  Conferences on Theory and Practice of Software, {ETAPS} 2016, Eindhoven, The
  Netherlands, April 2-8, 2016, Proceedings}, volume 9636 of {\em Lecture Notes
  in Computer Science}, pages 130--146. Springer, 2016.
\newblock URL: \url{https://doi.org/10.1007/978-3-662-49674-9_8}, \href
  {http://dx.doi.org/10.1007/978-3-662-49674-9_8}
  {\path{doi:10.1007/978-3-662-49674-9_8}}.

\bibitem{DBLP:journals/jair/KaelblingLM96}
Leslie~Pack Kaelbling, Michael~L. Littman, and Andrew~W. Moore.
\newblock Reinforcement learning: {A} survey.
\newblock {\em J. Artif. Intell. Res.}, 4:237--285, 1996.
\newblock URL: \url{https://doi.org/10.1613/jair.301}, \href
  {http://dx.doi.org/10.1613/jair.301} {\path{doi:10.1613/jair.301}}.

\bibitem{lw95}
L{\'{a}}szl{\'{o}} Lov{\'{a}}sz and Peter Winkler.
\newblock Exact mixing in an unknown {M}arkov chain.
\newblock {\em The Electronic Journal of Combinatorics}, 2, 1995.
\newblock URL:
  \url{http://www.combinatorics.org/Volume_2/Abstracts/v2i1r15.html}.

\bibitem{norris98}
James~R. Norris.
\newblock {\em {M}arkov chains}.
\newblock Cambridge series in statistical and probabilistic mathematics.
  Cambridge University Press, 1998.

\bibitem{py91}
Christos~H. Papadimitriou and Mihalis Yannakakis.
\newblock Shortest paths without a map.
\newblock {\em Theoretical Computer Science}, 84(1):127--150, 1991.

\bibitem{puterman05}
Martin~L. Puterman.
\newblock {\em {M}arkov Decision Processes}.
\newblock Wiley-Interscience, 2005.

\bibitem{rn10}
Stuart~J. Russell and Peter Norvig.
\newblock {\em Artificial Intelligence - {A} Modern Approach (3. internat.
  ed.)}.
\newblock Pearson Education, 2010.
\newblock URL:
  \url{http://vig.pearsoned.com/store/product/1,1207,store-12521_isbn-0136042597,00.html}.

\bibitem{solan03}
Eilon Solan.
\newblock Continuity of the value of competitive {M}arkov decision processes.
\newblock {\em Journal of Theoretical Probability}, 16(4):831--845, 2003.

\bibitem{sb18}
Richard~S. Sutton and Andrew~G. Barto.
\newblock {\em Reinforcement learning: an introduction}.
\newblock Adaptive computation and machine learning. {MIT} Press, 2018.
\newblock URL: \url{http://www.incompleteideas.net/book/the-book-2nd.html}.

\bibitem{thomas95}
Wolfgang Thomas.
\newblock On the synthesis of strategies in infinite games.
\newblock In {\em {STACS}}, pages 1--13, 1995.
\newblock URL: \url{https://doi.org/10.1007/3-540-59042-0_57}, \href
  {http://dx.doi.org/10.1007/3-540-59042-0_57}
  {\path{doi:10.1007/3-540-59042-0_57}}.

\bibitem{tracol09}
Mathieu Tracol.
\newblock Fast convergence to state-action frequency polytopes for {MDP}s.
\newblock {\em Opereration Research Letters}, 37(2):123--126, 2009.
\newblock URL: \url{https://doi.org/10.1016/j.orl.2008.12.003}, \href
  {http://dx.doi.org/10.1016/j.orl.2008.12.003}
  {\path{doi:10.1016/j.orl.2008.12.003}}.

\bibitem{valiant84}
Leslie~G. Valiant.
\newblock A theory of the learnable.
\newblock In Richard~A. DeMillo, editor, {\em Proceedings of the 16th Annual
  {ACM} Symposium on Theory of Computing, April 30 - May 2, 1984, Washington,
  DC, {USA}}, pages 436--445. {ACM}, 1984.
\newblock URL: \url{http://doi.acm.org/10.1145/800057.808710}, \href
  {http://dx.doi.org/10.1145/800057.808710} {\path{doi:10.1145/800057.808710}}.

\bibitem{vardi85}
Moshe~Y. Vardi.
\newblock Automatic verification of probabilistic concurrent finite-state
  programs.
\newblock In {\em 26th Annual Symposium on Foundations of Computer Science,
  Portland, Oregon, USA, 21-23 October 1985}, pages 327--338. {IEEE} Computer
  Society, 1985.
\newblock URL: \url{https://doi.org/10.1109/SFCS.1985.12}, \href
  {http://dx.doi.org/10.1109/SFCS.1985.12} {\path{doi:10.1109/SFCS.1985.12}}.

\bibitem{wd92}
Christopher J. C.~H. Watkins and Peter Dayan.
\newblock Technical note q-learning.
\newblock {\em Machine Learning}, 8:279--292, 1992.
\newblock URL: \url{https://doi.org/10.1007/BF00992698}, \href
  {http://dx.doi.org/10.1007/BF00992698} {\path{doi:10.1007/BF00992698}}.

\bibitem{DBLP:conf/iros/WenET15}
Min Wen, R{\"{u}}diger Ehlers, and Ufuk Topcu.
\newblock Correct-by-synthesis reinforcement learning with temporal logic
  constraints.
\newblock In {\em 2015 {IEEE/RSJ} International Conference on Intelligent
  Robots and Systems, {IROS} 2015, Hamburg, Germany, September 28 - October 2,
  2015}, pages 4983--4990. {IEEE}, 2015.
\newblock URL:
  \url{http://ieeexplore.ieee.org/xpl/mostRecentIssue.jsp?punumber=7347169}.

\bibitem{DBLP:conf/ijcai/WenT16}
Min Wen and Ufuk Topcu.
\newblock Probably approximately correct learning in stochastic games with
  temporal logic specifications.
\newblock In {\em Proceedings of the Twenty-Fifth International Joint
  Conference on Artificial Intelligence, {IJCAI} 2016, New York, NY, USA, 9-15
  July 2016}, pages 3630--3636. {IJCAI/AAAI} Press, 2016.
\newblock URL: \url{http://www.ijcai.org/Proceedings/2016}.

\end{thebibliography}

\clearpage
\appendix
\section{Proof of Lemma~\ref{lem:robust-opt}}
One of the results we cite, i.e.~\cite[Theorem 5]{chatterjee12}, comes from a
work of Chatterjee that focuses on stochastic parity games with the same
support. (In their nomenclature, they are structurally equivalent.) There, they
derive robustness bounds for MDPs with the discounted-sum function and use them
to obtain robustness bounds for MDPs with a parity objective.

We are, however, extending those results to MDPs with the mean-payoff function
(cf.~\cite{dhkp17}) making use of an observation by Solan~\cite{solan03}:
robustness bounds for discounted-sum MDPs extend directly to mean-payoff MDPs if
they do not depend on the discount factor. As can be observed in the cited
result, this is indeed the case. Furthermore, we need not adapt the bound in our
context since we are assuming that the reward function in our MDPs with
mean-payoff function assigns transitions rewards between $0$ and $1$.

\begin{proof}[Proof of Lemma~\ref{lem:robust-opt}]
    The bounds for $\eta_\epsilon$ and $\max\{|r(q,a,q') - r'(q,a,q')| :
    (q,a,q') \in \supp{\delta}\}$ are obtained directly from Solan's
    inequality~\cite[Theorem 6]{solan03}:
    \[
        \left|
        \sup_{\tau_1} \expect{\calM^{\tau_1}}{q_0}{\MP}
        -
        \sup_{\tau_2} \expect{\calN^{\tau_2}}{q_0}{\MP}
        \right|
        \leq
        \frac{4|Q|d(\delta,\delta')}{1 - 2|Q|d(\delta,\delta')} +
        \lVert r-r' \rVert_{\infty}
    \]
    where $\lVert r-r' \rVert_\infty \defequals \max\{|r(q,a,q')-r'(q,a,q')| :
    (q,a,q') \in \supp{\delta}\}$ and $\calN = (Q,A,\alpha,\delta',p,r')$.
    We can then use Chatterjee's observation that Solan's $d(\cdot,\cdot)$
    function is upper-bounded by $\frac{\eta_\epsilon}{\pmin}$ where
    $\eta_\epsilon$ here stands for the maximal absolute difference of
    transition probabilities from $\delta$ and $\delta'$~\cite[Proposition
    1]{chatterjee12}. One thus obtains the following reformulation of the above
    inequality.
    \begin{equation}\label{eqn:solan}
        \left|
        \sup_{\tau_1} \expect{\calM^{\tau_1}}{q_0}{\MP}
        -
        \sup_{\tau_2} \expect{\calN^{\tau_2}}{q_0}{\MP}
        \right|
        \leq
        \frac{4|Q|(\eta_\epsilon/\pmin)}{1 - 2|Q|(\eta_\epsilon/\pmin)} +
        \lVert {r-r'} \rVert_{\infty}
    \end{equation}
    It follows from the inequalities required of $\eta_\epsilon$ and
    $\lVert {r-r'} \rVert_\infty$ in Lemma~\ref{lem:robust-opt}, together with
    the above
    inequalities, that the left-hand side of Inequality~\eqref{eqn:solan} is at
    most $\epsilon/2$. Indeed, we have required them to be twice as small as
    necessary. This is because Chatterjee has shown (in the proof
    of~\cite[Theorem 5]{chatterjee12}) that if the optimal expected values of
    two structurally-equivalent MDPs differ by at most $\epsilon$, then a
    memoryless expectation-optimal strategy for one is
    $2\epsilon$-expectation-optimal for the other. The result thus follows.
\end{proof}

\section{Proof of Lemma~\ref{lem:suff-urandom}}
We recall Hoeffding's concentration bound for the binomial distribution.
\begin{proposition}[Hoeffding's inequalities]
    Let $X_1,\dots,X_n$ be independent random variables with domain bounded by
    the interval $[0,1]$ and let $M \defequals \frac{1}{n} \sum_{i=1}^n X_i$.
    For all $0 < \epsilon < 1$ the following hold.
    \begin{itemize}
        \item $\rvprob{\rvexpect{M} - M \ge \epsilon} \leq
            \exp(-2n\epsilon^2)$
        \item $\rvprob{M - \rvexpect{M} \ge \epsilon} \leq
            \exp(-2n\epsilon^2)$
        \item $\rvprob{|M - \rvexpect{M}| \ge \epsilon} \leq
            2\exp(-2n\epsilon^2)$
    \end{itemize}
\end{proposition}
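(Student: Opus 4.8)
The plan is to use the standard Chernoff bounding technique together with Hoeffding's lemma. First I would fix $\lambda > 0$ and apply Markov's inequality to the nonnegative random variable $\exp(\lambda \sum_{i=1}^n (X_i - \rvexpect{X_i}))$, obtaining
\[
    \rvprob{M - \rvexpect{M} \ge \epsilon}
    = \rvprob{\textstyle\sum_{i=1}^n (X_i - \rvexpect{X_i}) \ge n\epsilon}
    \le \exp(-\lambda n \epsilon)\,\rvexpect{\exp\Bigl(\lambda \textstyle\sum_{i=1}^n (X_i - \rvexpect{X_i})\Bigr)}.
\]
By independence of the $X_i$, the expectation on the right factorizes as $\prod_{i=1}^n \rvexpect{\exp(\lambda(X_i - \rvexpect{X_i}))}$, so it suffices to bound each factor.

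The key ingredient, and the only genuinely technical step, is Hoeffding's lemma: if $Y$ is a random variable with $\rvexpect{Y} = 0$ and $Y \in [a,b]$ almost surely, then $\rvexpect{\exp(\lambda Y)} \le \exp(\lambda^2 (b-a)^2 / 8)$. I would prove this by convexity of $y \mapsto \exp(\lambda y)$: the function lies below the chord through $(a,\exp(\lambda a))$ and $(b,\exp(\lambda b))$ on $[a,b]$, so taking expectations and using $\rvexpect{Y}=0$ bounds $\rvexpect{\exp(\lambda Y)}$ by $\frac{b}{b-a}\exp(\lambda a) - \frac{a}{b-a}\exp(\lambda b) = \exp(\psi(\lambda))$ for a suitable smooth function $\psi$ with $\psi(0) = \psi'(0) = 0$ and $\psi''(t) \le (b-a)^2/4$ for all $t$; a second-order Taylor expansion then gives $\psi(\lambda) \le \lambda^2(b-a)^2/8$. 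Since each $X_i - \rvexpect{X_i}$ takes values in an interval of length at most $1$ (as $X_i \in [0,1]$), we conclude $\rvexpect{\exp(\lambda(X_i - \rvexpect{X_i}))} \le \exp(\lambda^2/8)$ for every $i$.

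Combining the two steps yields $\rvprob{M - \rvexpect{M} \ge \epsilon} \le \exp(n\lambda^2/8 - \lambda n \epsilon)$ for every $\lambda > 0$. I would then minimize the exponent over $\lambda$: the quadratic $\lambda^2/8 - \lambda\epsilon$ attains its minimum at $\lambda = 4\epsilon$, where its value is $-2\epsilon^2$, giving the second inequality $\rvprob{M - \rvexpect{M} \ge \epsilon} \le \exp(-2n\epsilon^2)$. The first inequality follows by applying the identical argument to the variables $1 - X_i$ (equivalently, to $-X_i$), which also lie in $[0,1]$; and the third follows from the first two by a union bound, since $\{|M - \rvexpect{M}| \ge \epsilon\} = \{M - \rvexpect{M} \ge \epsilon\} \cup \{\rvexpect{M} - M \ge \epsilon\}$. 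Everything apart from Hoeffding's lemma is routine bookkeeping, so I expect the Taylor-remainder estimate $\psi'' \le (b-a)^2/4$ to be the only place needing care.
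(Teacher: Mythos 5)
Your proof is correct: it is the standard Chernoff--Hoeffding argument (Markov's inequality applied to the exponential moment, Hoeffding's lemma for mean-zero bounded variables via convexity and the second-order Taylor bound $\psi''\le (b-a)^2/4$, optimization at $\lambda=4\epsilon$, and a union bound for the two-sided version), and the computation $\lambda^2/8-\lambda\epsilon=-2\epsilon^2$ at $\lambda=4\epsilon$ checks out. The paper itself offers no proof to compare against --- it merely \emph{recalls} Hoeffding's inequalities as a known classical result before using them in the proof of Lemma~\ref{lem:suff-samples} --- so your write-up supplies a standard textbook derivation of a fact the authors take for granted.
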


We learn an approximation $\delta'$ of the transition function $\delta$ by
following a strategy in the MDP and remembering the number of ``experiments'' we
have conducted for each pair $(q,a) \in Q \times A$. We then stop sampling when
a sufficient number of experiments has been carried out.  To obtain an
approximation that matches a desired confidence interval, we need a bound on how
many experiments have to be carried out.

\begin{lemma}\label{lem:suff-samples}
    Consider an MDP $\calM = (Q,A,\alpha,\delta,p,r)$, an end component
    $(S,\beta)$ in $\calM$, and $\epsilon, \gamma \in (0,1)$. If we let
    $X^{q'}_{q,a}$ denote a Bernoulli random variable with success probability
    $\delta(q'|q,a)$, then 
    \begin{equation}\label{eq:suff-samples}
        k \ge \frac{\ln(2|Q|^2|A|) - \ln(\gamma)}{2\epsilon^2}
    \end{equation}
    samples of each $X^{q'}_{q,a}$, for all $q,q' \in Q$ and all $a \in
    \beta(q)$, suffice to be able to compute a transition function
    $\delta'$ such that 
    \[
        \rvprob{\delta' \eclose{\epsilon}{(S,\beta)} \delta} \geq 1- \gamma.
    \]
\end{lemma}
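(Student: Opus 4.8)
The plan is a standard PAC-style estimate: approximate each transition probability by an empirical frequency and control all the resulting errors simultaneously with Hoeffding's inequality and a union bound. Concretely, I would fix, for each pair $(q,a)$ with $q \in S$ and $a \in \beta(q)$, a single batch of $k$ independent experiments on $(q,a)$; for a successor state $q'$, the indicator that the $i$-th experiment in this batch lands in $q'$ is precisely one sample $X^{q'}_{q,a,i}$ of $X^{q'}_{q,a}$, and across $i = 1,\dots,k$ these are i.i.d.\ Bernoulli random variables with mean $\delta(q'|q,a)$. I then set $\delta'(q'|q,a) \defequals \tfrac{1}{k}\sum_{i=1}^{k} X^{q'}_{q,a,i}$, the fraction of the $k$ experiments landing in $q'$. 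Since all these fractions come from one shared batch, $\sum_{q' \in Q}\delta'(q'|q,a) = 1$, so $\delta'(\cdot\mid q,a)$ is a bona fide element of $\dist{Q}$ and, moreover, $\supp{\delta'(q,a)} \subseteq \supp{\delta(q,a)}$ automatically (we never observe an outcome outside the true support); for pairs $(q,a)$ with $q\notin S$ or $a\notin\beta(q)$ we may define $\delta'$ in any way compatible with the support, since they are irrelevant to $\eclose{\epsilon}{(S,\beta)}$.

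For the error analysis, fix $q\in S$, $q'\in S$ and $a\in\beta(q)$. Since $\rvexpect{\delta'(q'|q,a)} = \delta(q'|q,a)$ and $\delta'(q'|q,a)$ is an average of $k$ i.i.d.\ $[0,1]$-valued variables, the (two-sided) Hoeffding inequality recalled above gives $\rvprob{\left|\delta'(q'|q,a) - \delta(q'|q,a)\right| \ge \epsilon} \le 2\exp(-2k\epsilon^2)$. Taking a union bound over the at most $|S|\cdot|S|\cdot|A| \le |Q|^2|A|$ triples $(q,q',a)$ with $q\in S$, $q'\in S$, $a\in\beta(q)$ — legitimate since a union bound needs no independence, so the correlation among the successor indicators of a single experiment is harmless — we obtain
\[
    \rvprob{\,\exists\, q,q'\in S,\ a\in\beta(q):\ \left|\delta'(q'|q,a)-\delta(q'|q,a)\right| \ge \epsilon\,}
    \;\le\; 2|Q|^2|A|\exp(-2k\epsilon^2).
\]
The complement of the event on the left is exactly $\{\delta' \eclose{\epsilon}{(S,\beta)} \delta\}$ (recall $\eclose{\epsilon}{(S,\beta)}$ only constrains $q,q'\in S$, $a\in\beta(q)$).

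It remains to pick $k$. Requiring $2|Q|^2|A|\exp(-2k\epsilon^2)\le\gamma$ is equivalent, after taking logarithms, to $k \ge \tfrac{\ln(2|Q|^2|A|)-\ln(\gamma)}{2\epsilon^2}$, which is exactly hypothesis~\eqref{eq:suff-samples} (and is a positive real since $\epsilon,\gamma \in (0,1)$ and $|Q|^2|A|\ge 1$); under it the displayed failure probability is at most $\gamma$, hence $\rvprob{\delta' \eclose{\epsilon}{(S,\beta)} \delta} \ge 1-\gamma$, as claimed. There is no genuine obstacle here; the only two points needing care are (i) emitting an honest probability distribution rather than a vector of unrelated coordinate estimates, which is why all the frequencies for a fixed $(q,a)$ are read off one shared batch of experiments, and (ii) noting that Hoeffding is invoked across experiments for a fixed successor — where the samples really are i.i.d.\ — while the dependence among distinct successors enters only through the union bound, which tolerates it. I would also emphasize that this lemma merely postulates the $k$ samples per pair are available; converting "$k$ samples of each $(q,a)$" into "follow an exploration strategy for some number of $|Q|$-step episodes" is the separate content of Lemma~\ref{lem:suff-urandom}.
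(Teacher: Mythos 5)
Your proposal is correct and follows essentially the same route as the paper's proof: empirical means per triple $(q,a,q')$, the two-sided Hoeffding bound giving failure probability at most $\gamma/(|Q|^2|A|)$ per triple under the stated choice of $k$, and a union bound over the at most $|Q|^2|A|$ triples. The additional remarks you make (that the shared batch yields a genuine distribution with the right support, and that dependence among successors is absorbed by the union bound) are sound refinements the paper leaves implicit.
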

\begin{proof}
    Let us denote by $d^{q'}_{q,a}$ the empirical mean of the $k$ samples
    of $X^{q'}_{q,a}$.
    It follows from Hoeffding's two-sided inequality and from our
    choice of $k$ that
    \[
        \rvprob{\left|d^{q'}_{q,a} - \delta(q'|q,a)\right| \ge \epsilon} \leq
        \frac{\gamma}{|Q|^2|A|}
    \]
    for all $q,q' \in Q$ and all $a \in \beta(q)$. Hence, the
    probability that for some $q,q' \in Q$ and some $a \in \beta(q)$ we
    get $\left|d^{q'}_{q,a} - \delta(q'|q,a)\right| \geq \epsilon$
    is at most $\gamma$.
\end{proof}

We can now prove the result.
\begin{proof}[Proof of Lemma~\ref{lem:suff-urandom}]
    Let us start by recalling a tail bound for binomial distributions that
    follows from Hoeffding's inequalities. Let $X_1,\dots,X_m$ be independent
    Bernoulli random variables with success probability $\mu$. We want an upper
    bound on the probability that the random variable $Y \defequals \sum_{i=1}^m
    X_i$ (with binomial distribution) is less than some desired threshold $k \in
    \mathbb{N}$. If $k \leq m\mu$ then 
    \begin{align*}
        & \rvprob{Y \leq k - 1}\\
        =& \rvprob{-Y \geq -k + 1}\\
        =& \rvprob{m\mu - Y \ge m\mu -k + 1}\\
        =& \rvprob{\mu - \frac{1}{m}\sum_{i=1}^m X_i \geq \frac{m\mu -k +1}{m}}
    \end{align*}
    and $0 < (m\mu -k + 1)/m < 1$.
    From one of Hoeffding's one-sided inequalities we then obtain that
    \begin{equation}\label{eq:bin-tail}
        \rvprob{Y \leq k - 1} \leq \exp\left(-2\frac{(m\mu-k+1)^2}{m}\right).
    \end{equation}

    Let us consider an arbitrary state $q' \in e$. Observe that, from any state
    $q_0 \in e$, the measure of runs $q_0 a_0 \dots q_{|Q|}$ that start from
    $q_0$ and contain the infix $q'a$, i.e. $q_i a_i = q'a$ for some $0 \leq i
    \leq |Q|$, while following a uniform random exploration strategy
    $\lambda$ for $\beta$ during $|Q|$ steps is at least
    \[
        \mu \defequals \left(\frac{\pmin}{|A|}\right)^{|Q|}.
    \]
    At this point we would like to fix the value of $k$ (used in our discussion
    above) by using Lemma~\ref{lem:suff-samples} with
    $\epsilon$ and $\gamma/2$. Therefore, we will henceforth have
    \[
        k \defequals \left\lceil \frac{\ln(4|Q|^2|A|) -
        \ln(\gamma)}{2\epsilon^2} \right\rceil.
    \]
    
    Consider two states $q,q' \in Q$ and an action $a \in A$.  We want to ensure
    that $X^{q'}_{q,a}$, as defined in Lemma~\ref{lem:suff-samples}, is sampled
    at least $k$ times with high probability. For this purpose, we let $W_{q,a}$
    be a Bernoulli random variable with success probability $\mu$. Intuitively,
    $W_{q,a} = 1$ indicates that we have reached $q$ and from it played $a$
    while following $\lambda$ during $|Q|$ steps. We will use the bound given in
    Equation~\eqref{eq:bin-tail} to obtain a lower bound on the number $n$ of
    times the strategy $\lambda$ has to be followed for $|Q|$ steps. That is,
    since $\exp\left(-2{(n\mu-k+1)^2}/{n}\right)$ is eventually decreasing, we
    can compute $n$ large enough so that $n \geq k/\mu$ and
    \[
        \rvprob{\sum_{i=1}^n W_{q,a} \leq k - 1} \leq
        \frac{\gamma}{2|Q||A|}.
    \]
    Observe that $n$ will be polynomial in $\mu$ and $1/\mu$ (thus exponential
    in $|Q|$ and polynomial in $|A|$ and $1/\pmin$), and also in $k$ (and thus
    polynomial in $1/\epsilon$ and $\ln(1/\gamma)$) since it suffices to take
    $n$ larger than the maximum among $k/\mu$ and the second root of
    \[
        (n\mu -k + 1)^2 - \frac{n}{2}\left(\ln(2|Q||A|) - \ln(\gamma)
        \right) \ge 0.
    \]
    Hence, the probability that, after following $\lambda$ for $n$ episodes of
    $|Q|$ steps in $(S,\beta)$, some $X^{q'}_{q,a}$ has not yet been sampled
    sufficiently many times is at most $\gamma/2$. 
    
    From the above discussion it follows that after following $\lambda$ for $n$
    (not necessarily consecutive) episodes of $|Q|$ steps each, we have
    \[
        \rvprob{\delta' \not\eclose{\epsilon}{(S,\beta)} \delta \:\middle|\:
        \exists W_{q,a} : \sum_{i=1}^n W_{q,a} \leq k - 1 } \cdot
        \rvprob{\exists W_{q,a} : \sum_{i=1}^n W_{q,a} \leq k -1}
        \leq \frac{\gamma}{2}
    \]
    where $\delta'$ is as constructed in Lemma~\ref{lem:suff-samples}
    (i.e. from the empirical mean of the samples of the $X^{q'}_{q,a}$).
    To conclude, we observe that that we also have
    \[
        \rvprob{\delta' \not\eclose{\epsilon}{(S,\beta)} \delta \:\middle|\:
        \forall W_{q,a} : \sum_{i=1}^n W_{q,a} \geq k} \cdot
        \rvprob{\forall W_{q,a} : \sum_{i=1}^n W_{q,a} \geq k}
        \leq \frac{\gamma}{2}
    \]
    by Lemma~\ref{lem:suff-samples}. The fact that
    \[
        \rvprob{\delta' \not\eclose{\epsilon}{(S,\beta)} \delta} \leq \gamma
    \]
    then follows from the law of total probability.
\end{proof}

\section{On the convergence of the finite averages}
Let us fix an MDP $\calM = (Q,A,\alpha,\delta,p,r)$ for this section.

The following result is repeatedly used throughout the paper.
\begin{lemma}\label{lem:tracol}
    If $(Q,\alpha)$ is an EC then for all $q_0 \in Q$, for all
    unichain deterministic memoryless strategies $\mu$, we have
    \begin{romanenumerate}
        \item \label{itm:lim}
            \(
                \probevent{\calM^\mu}{q_0}{
                \rho : \MP(\rho)
                \ge \expect{\calM^\mu}{q_0}{\MP}
                } = 1; \text{ and}
            \)
        \item \label{itm:fin} for all $\epsilon \in (0,1)$, one can compute
            $M(\epsilon) \in \mathbb{N}$ (dependent only on $\pmin$, $|Q|$, and
            $|A|$) such that
            \(
                \probevent{\calM^\mu}{q_0}{
                   \rho : \forall k \ge M(\epsilon),\, \finMP(\rho(..k))
                    \ge \expect{\calM^\mu}{q_0}{\MP} - \epsilon
                } \geq 1 - \epsilon.
            \)
    \end{romanenumerate}
\end{lemma}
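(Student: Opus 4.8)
The plan is to treat the two items separately: item~\ref{itm:lim} is a direct consequence of the ergodic theorem for finite Markov chains, while item~\ref{itm:fin} needs a quantitative mixing estimate that is uniform in the data. For item~\ref{itm:lim}, since $\mu$ is unichain the induced chain $\calM^\mu$ has a single recurrent class $B$ (its unique BSCC). From any $q_0$ the run enters $B$ after finitely many steps almost surely, and a finite prefix does not affect $\liminf_j \finMP(\rho(..j))$. Inside $B$ the chain is irreducible, so by the strong law of large numbers for finite Markov chains the running average satisfies $\finMP(\rho(..j)) \to g$ almost surely as $j\to\infty$, where $g$ is the stationary average reward of $\calM^\mu$ on $B$, i.e. $g \defequals \sum_{s}\pi(s)\sum_{s'}\delta'(s'|s)\,r'(s,s')$ with $\pi$ the stationary distribution on $B$. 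In particular the $\liminf$ defining $\MP$ is an honest limit a.s., so $\MP(\rho)=g$ almost surely; since $\MP$ is bounded, bounded convergence gives $\expect{\calM^\mu}{q_0}{\MP}=g$, hence $\MP(\rho)=\expect{\calM^\mu}{q_0}{\MP}$ for almost every $\rho$, which is item~\ref{itm:lim}.

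For item~\ref{itm:fin}, I would run a regeneration-based concentration argument whose error rates depend only on $\pmin$, $|Q|$ and $|A|$. Note that $\calM^\mu$ has at most $|Q|(|A|+1)$ states (product construction with one-element memory) and every transition of $\calM^\mu$ has probability at least $\pmin$. \emph{(a) Transient phase.} A path-counting argument shows that from any state the chain reaches $B$ within $|Q|(|A|+1)$ steps with probability at least $\pmin^{|Q|(|A|+1)}$, so the probability of still being outside $B$ at step $t$ decays geometrically in $t$ at a rate fixed by $\pmin,|Q|,|A|$; choose $T_0 = T_0(\epsilon,\pmin,|Q|,|A|)$ absorbing this, and note that, since rewards lie in $[0,1]$, the first $T_0$ steps shift $\finMP(\rho(..k))$ by at most $T_0/k$, which is below $\epsilon/3$ once $k \ge 3T_0/\epsilon$. \emph{(b) Regeneration.} Fix a reference state $s^\star \in B$; from the first visit to $s^\star$ onward, decompose the run into i.i.d.\ excursions between consecutive visits to $s^\star$. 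Excursion lengths have expectation $1/\pi(s^\star)$ and geometric tails, both bounded in terms of $\pmin$ and $|Q|(|A|+1)$ only, and the reward collected over an excursion is at most its length, so it inherits the same tail bound; moreover $g = \rvexpect{\text{reward per excursion}}/\rvexpect{\text{excursion length}}$. \emph{(c) Concentration.} Apply a Bernstein-type inequality for i.i.d.\ sums with subexponential summands to the numerator and the denominator of the ratio over the first $N$ completed excursions, relate $N$ to the step count $k$, and union-bound the resulting error probabilities --- which decay geometrically in $k$ --- over all $k \ge M(\epsilon)$; picking $M(\epsilon)$ so that this sum is at most $\epsilon$ yields $\finMP(\rho(..k)) \ge g - \epsilon = \expect{\calM^\mu}{q_0}{\MP} - \epsilon$ simultaneously for all $k \ge M(\epsilon)$, outside a set of measure at most $\epsilon$. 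Every constant produced along the way is a function of $\pmin$, $|Q|$ and $|A|$ only.

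The \textbf{main obstacle} is exactly this uniformity: one must verify that each quantity feeding the concentration bound --- the lower bound on $\pi(s^\star)$, the geometric tail rate of the return times to $s^\star$, and the transient length $T_0$ --- depends only on $\pmin$, $|Q|$ and $|A|$, with no residual dependence on the particular $\mu$, $\delta$ or $r$. Phrasing the argument through regeneration rather than a direct mixing-time computation is what makes this robust, because it sidesteps possible periodicity of $B$; the other delicate point is the ``$\forall k \ge M(\epsilon)$'' quantifier, which forces a union bound over infinitely many $k$ and therefore genuinely requires the per-step error probabilities to decay geometrically, not merely to tend to zero.
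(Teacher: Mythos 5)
Your proposal is correct, but it reaches the two items by a genuinely different route than the paper. For item~\ref{itm:fin}, the paper does not build the concentration bound itself: it imports a per-$k$ exponential concentration result of Tracol (comparison of $\finMP(\rho(..k))$ with $\expect{\calM^\mu}{q_0}{\finMP_k}$), upgrades it to a comparison with $\expect{\calM^\mu}{q_0}{\MP}$ by showing $\expect{\calM^\mu}{q_0}{\MP}=\lim_\ell\expect{\calM^\mu}{q_0}{\finMP_\ell}$ (ergodic theorem plus dominated convergence), and then handles the ``$\forall k\ge M(\epsilon)$'' quantifier exactly as you do, by summing/multiplying geometrically decaying error probabilities over $k$. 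Your regeneration-and-Bernstein argument effectively re-proves the imported concentration result from scratch; this is more self-contained and makes the dependence of the constants on $\pmin$, $|Q|$, $|A|$ fully explicit (the paper only asserts this dependence via the citation, modulo a remark on computable mixing-time bounds), at the cost of the extra bookkeeping you flag — tail bounds for return times, the renewal-reward ratio, and the boundary excursions. Your identification of the two delicate points (uniformity of constants and the need for summable, not merely vanishing, per-$k$ error) is exactly where the content lies. For item~\ref{itm:lim}, you argue directly from the strong law for irreducible finite chains plus prefix-independence, whereas the paper derives it from item~\ref{itm:fin} by intersecting the events for a sequence $\epsilon_i\to 0$; both are valid, and your direct argument is arguably cleaner since the paper needs the ergodic theorem anyway to justify swapping limit and expectation.
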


Before we prove the above lemma we recall a result by Tracol which is slightly
weaker.
\begin{proposition}[{\cite[Proposition 2]{tracol09}}]\label{pro:original-tracol}
    If $(Q,\alpha)$ is an EC then for all $q_0 \in Q$, for all
    unichain deterministic memoryless strategies $\mu$, 
    for all $\epsilon \in (0,1)$, one can compute $K_0 \in \mathbb{N}$ and
    $c_1,c_2 >0$ 
    (dependent only on $\pmin$, $|Q|$, and $|A|$) such that $\forall k \ge
    K_0$ we have
    \[
        \probevent{\calM^\mu}{q_0}{
           \rho : \finMP(\rho(..k))
           \ge \expect{\calM^\mu}{q_0}{\finMP_k} - \epsilon
        } \geq 1 - c_1\cdot\exp(-k\cdot c_2 \cdot \epsilon^2)
    \]
    where $\finMP_k$ is the function such that $\rho \mapsto \finMP(\rho(..k))$.
\end{proposition}

\begin{remark}
We observe that Tracol's result depends on a bound for the mixing time of the
induced Markov chain. From results
in~\cite{lw95} it follows that one can compute such a bound even in unknown
chains.
\end{remark}

We will need one final ingredient before proving the advertised lemma: a
strengthening of Tracol's result in which the comparison inside the probability
operator is with the expected mean payoff and not the expectation of the finite
average.
\begin{proposition}\label{pro:old-tracol}
    If $(Q,\alpha)$ is an EC then for all $q_0 \in Q$, for all
    unichain deterministic memoryless strategies $\mu$, 
    for all $\epsilon \in (0,1)$, one can compute $K_0 \in \mathbb{N}$ and
    $c_1,c_2 >0$ 
    (dependent only on $\pmin$, $|Q|$, and $|A|$) such that $\forall k \ge
    K_0$ we have
    \begin{equation}\label{eqn:want-tracol}
        \probevent{\calM^\mu}{q_0}{
           \rho : \finMP(\rho(..k))
           \ge \expect{\calM^\mu}{q_0}{\MP} - \epsilon
        } \geq 1 - c_1\cdot\exp(-k\cdot c_2 \cdot \epsilon^2).
    \end{equation}
\end{proposition}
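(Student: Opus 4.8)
The plan is to bootstrap from Tracol's result, Proposition~\ref{pro:original-tracol}, by comparing the quantity $\expect{\calM^\mu}{q_0}{\finMP_k}$ that appears there with the expected mean payoff $\expect{\calM^\mu}{q_0}{\MP}$, and arguing that the two become $\epsilon/2$-close once $k$ exceeds a \emph{computable} threshold $K_1$ depending only on $\pmin$, $|Q|$, $|A|$ (and $\epsilon$). First I would observe that, since $\mu$ is unichain, $\calM^\mu$ has a single recurrent class with unique stationary distribution $\pi$, so that $\expect{\calM^\mu}{q_0}{\MP} = \sum_q \pi(q)\,\bar r(q)$, where $\bar r(q)$ is the one-step expected reward from $q$ in $\calM^\mu$, whereas $\expect{\calM^\mu}{q_0}{\finMP_k} = \frac{1}{k}\sum_{i=0}^{k-1}\sum_q \rvprob{q_i = q}\,\bar r(q)$. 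By the ergodic theorem for finite Markov chains, $\frac{1}{k}\sum_{i=0}^{k-1}\rvprob{q_i=q}$ converges to $\pi(q)$ at rate $O(1/k)$, where the hidden constant (and the contribution of a possibly transient $q_0$) is controlled by a bound on the mixing time of $\calM^\mu$. Such a bound is computable from $\pmin$ and $|Q|$ even though $\delta$ is unknown, by the remark following Proposition~\ref{pro:original-tracol} (via~\cite{lw95}); this yields the desired $K_1$ with $\expect{\calM^\mu}{q_0}{\finMP_k} \ge \expect{\calM^\mu}{q_0}{\MP} - \epsilon/2$ for all $k \ge K_1$.

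Next I would apply Proposition~\ref{pro:original-tracol} with $\epsilon/2$ in place of $\epsilon$, obtaining $K_0'$ and $c_1', c_2' > 0$ (depending only on $\pmin$, $|Q|$, $|A|$) such that for every $k \ge K_0'$, with probability at least $1 - c_1'\exp(-k\, c_2'\, \epsilon^2/4)$ a run $\rho$ satisfies $\finMP(\rho(..k)) \ge \expect{\calM^\mu}{q_0}{\finMP_k} - \epsilon/2$. For $k \ge \max\{K_0', K_1\}$ such a run then also satisfies $\finMP(\rho(..k)) \ge \expect{\calM^\mu}{q_0}{\MP} - \epsilon$, so setting $K_0 \defequals \max\{K_0', K_1\}$, $c_1 \defequals c_1'$ and $c_2 \defequals c_2'/4$ gives Inequality~\eqref{eqn:want-tracol}.

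The only delicate step — and the one I expect to require the most care — is making $K_1$ genuinely \emph{computable} with the advertised parameter dependence: one must pin down an explicit $O(1/k)$ estimate for $\bigl|\frac{1}{k}\sum_{i<k}\rvprob{q_i=q} - \pi(q)\bigr|$ in terms of $\pmin$, $|Q|$, $|A|$ and $\epsilon$ without access to $\delta$. This is exactly what the mixing-time bound cited after Proposition~\ref{pro:original-tracol} provides (restricting attention to the reachable sub-chain of $\calM^\mu$, whose minimal nonzero transition probability is at least $(\pmin/|A|)^{|Q|}$); everything else in the argument is a routine splitting of $\epsilon$ into two halves.
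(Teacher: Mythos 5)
Your proposal is correct and shares the paper's overall skeleton---split the error budget, invoke Proposition~\ref{pro:original-tracol} with $\epsilon/2$, and then argue that $\expect{\calM^\mu}{q_0}{\finMP_k}$ is eventually within $\epsilon/2$ of $\expect{\calM^\mu}{q_0}{\MP}$---but it handles that last step by a genuinely different argument. The paper proceeds softly: it shows $\lim_{\ell}\expect{\calM^\mu}{q_0}{\finMP_\ell} = \expect{\calM^\mu}{q_0}{\MP}$ by combining the ergodic theorem (almost-sure existence of the limit of the finite averages, extended to unichain MCs via prefix-independence of $\MP$) with Lebesgue's dominated convergence theorem, and then appeals to the definition of limit to get eventual closeness. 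You instead expand both expectations in terms of the occupation measure and the stationary distribution $\pi$ of the recurrent class, and bound $\bigl|\frac{1}{k}\sum_{i<k}\rvprob{q_i=q}-\pi(q)\bigr|$ by an explicit $O(1/k)$ Ces\`aro rate controlled by a mixing-time bound computable from $\pmin$ and $|Q|$. What your version buys is precisely the computability of the threshold $K_1$ with the advertised parameter dependence---a point the paper's ``by definition of limit'' step leaves implicit and covers only through the remark citing~\cite{lw95} for the mixing time inside Tracol's result. What the paper's version buys is brevity: no convergence rates for the occupation measure need to be discussed. One small correction to your last step: since $\mu$ is deterministic and memoryless, the reachable sub-chain of $\calM^\mu$ already has minimal nonzero transition probability at least $\pmin$ itself, not merely $(\pmin/|A|)^{|Q|}$; your weaker bound is still valid but unnecessarily loose.
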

\begin{proof}
    Our first observation is that, by definition of limit, if we were to replace
    Equation~\eqref{eqn:want-tracol} by
    \[
        \probevent{\calM^\mu}{q_0}{
           \rho : \forall k \ge M(\epsilon),\, \finMP(\rho(..k))
           \ge \lim_{\ell \in \mathbb{N}_{>
           0}}\expect{\calM^\mu}{q_0}{\finMP_\ell} -
           \epsilon
        } \geq 1 - c_1\cdot\exp(-k\cdot c_2 \cdot \epsilon^2)
    \]
    the claim would then be implied by Proposition~\ref{pro:original-tracol}.
    Hence, it suffices to prove that
    \begin{equation}\label{eqn:limit-swap-exp}
        \expect{\calM^\mu}{q_0}{\MP} = \lim_{\ell \in \mathbb{N}_{>
        0}}\expect{\calM^\mu}{q_0}{\finMP_\ell}.
    \end{equation}
    Recall that the reward function is bounded, i.e. all rewards are in $[0,1]$.
    Then, from the ergodic theorem for bounded irreducible unichain reward
    Markov chains~\cite[Theorem 1.10.2]{norris98} we get that
    \[
        \probevent{\calM^\mu}{q_0}{
            \rho : \text{the limit }
            \lim_{\ell \in \mathbb{N}_{>0}} \finMP(\rho(..\ell))
            \text{ exists}
        } = 1.
    \]
    (Technically, the ergodic theorem applies only to strongly-connected MCs.
    However, it clearly extends to unichain MCs for the mean-payoff function
    since it is prefix-independent and almost all runs reach the unique BSCC
    with probability $1$.) Finally, we can now apply Lebesgue's dominated
    convergence theorem and conclude that Equation~\ref{eqn:limit-swap-exp} does
    indeed hold.
\end{proof}

We will now prove the lemma in two parts.
\begin{proof}[Proof of Lemma~\ref{lem:tracol}]
    \item \paragraph*{Item~\ref{itm:fin}}
    Remark that for all $\epsilon$, there exists $K_1 \ge K_0$ such that
    \[
        1 - c_1 \cdot \exp(-k \cdot c_2 \cdot \epsilon^2) \leq 1 - 2^k.
    \]
    Recall that (from~\cite[Proof of Lemma 12]{brr17}) we know that
    \(
        \lim_{i\to\infty}\prod_{j=i}^\infty(1-2^{-j}) = 1.
    \)
    The latter means that
    \[
        \prod_{j=K_2}^\infty(1-2^{-j}) \ge 1 - \epsilon
    \]
    for some $K_2 \ge K_1$.  Let us set $M(\epsilon)$ to be the minimal such
    $K_2$.

    Let us denote by $E_\ell$ the event
    \[
        \bigcap_{k = K_2}^\ell\{\rho : \finMP(\rho(..k))
            \ge \expect{\calM^\sigma}{q_0}{\MP} - \epsilon\}.
    \]
    It follows from Proposition~\ref{pro:old-tracol} that the probability
    measure of $E_\ell$ is at least $\prod_{k = K_2}^\ell (1 - 2^{-k})$.
    Furthermore, we have that $E_j \subseteq E_i$ for all $i \leq j$. Hence,
    we get (see~\cite[Page 756]{bk08}) that
    \[
        \probevent{\calM^\mu}{q_0}{
        \bigcap_{\ell \geq K_2} E_\ell
        } \ge 
        \prod_{j=K_2}^\infty(1-2^{-j}) \ge 1 - \epsilon
    \]
    which concludes the proof.

    \item \paragraph*{Item~\ref{itm:lim}}
    We will now make use of item~\ref{itm:fin} to prove item~\ref{itm:lim}.
    Consider a sequence $(\epsilon_i)_{i \in \mathbb{N}}$ such that $\epsilon_i
    = 2^i$. It should be clear that, if we write $E_i$ for the event
    \[
        \left\{
            \rho \st \exists K_0 \in \mathbb{N},\forall k \ge K_0,\,
            \finMP(\rho(..k)) \ge \expect{\calM^\sigma}{q_0}{\MP} - \epsilon_i
        \right\}
    \]
    we have that $E_k \subseteq E_j$ for all $j \leq k$. Furthermore, it follows
    from item~\ref{itm:lim} that $\probevent{\calM^\mu}{q_0}{E_i} \geq 1 - 2^i$
    for all $i \ge 0$.
    Hence, we can once more use
    the limit of the probabilities of the $E_i$ and conclude that
    \[
        \probevent{\calM^\mu}{q_0}{
        \bigcap_{i \in \mathbb{N}} E_i
        } = \lim_{i \in \mathbb{N}} 1 - \epsilon_i = 1
    \]
    which proves the claim since the event measured above corresponds to the set
    of runs whose mean payoff is at least the expected mean payoff
\end{proof}

\section{Proof of Proposition~\ref{pro:mp-opt-ec}}
Let $S_i$ denote the sum of all steps of all episodes $j < i$, i.e.
\(
    S_i \defequals \sum_{j = 0}^{i-1} L_i + O_i.
\)

In the following lemma, we state the guarantees that $\imstrat$ enforces
when the sequence $(L_i,O_i)_{i \in \mathbb{N}}$ of parameters is chosen
appropriately. We need to introduce some notation.
Let $\rho = q_0 a_0 \dots$ be a run and $k,\ell \in
\mathbb{N}$ such that $k \leq \ell$.  We denote by $\finMP(\rho(k..\ell))$ the
(finite) average of the $(k,\ell)$-infix of $\rho$, i.e.
\(
    \finMP(\rho(k..\ell)) \defequals
    \frac{1}{\ell-k}\sum_{i=k}^{\ell-1} w(q_i,a_i,q_{i+1}).
\)
We write $\rho(..\ell)$ instead of $\rho(0..\ell)$. 
\begin{lemma}\label{lem:local-opt}
    For all sequences $(\epsilon_i)_{i \in \mathbb{N}}$ such that $0 <
    \epsilon_k < \epsilon_j$ for all $j < k$, one can compute $(L_i,O_i)_{i \in
    \mathbb{N}}$ such that $L_i \geq |Q|$ for
    all $i \in \mathbb{N}$; additionally, for all $q_0 \in Q$, 
    for all $\delta$ compatible with $\calA$ and $\pmin$, and
    for all reward functions $r$,
    we have
    \[
        \forall i \ge 1,\,
        \probevent{\autotochain{\calA}{\delta}{r}{\imstrat}}{q_0}{
            \rho : \forall k \in
            (S_i,S_{i+1}], \, \finMP(\rho(..k)) \ge
            \Val(Q,\alpha_T) -
            \epsilon_i
        } \ge 1 -
        \epsilon_i.
    \]
\end{lemma}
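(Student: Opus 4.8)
The plan is to construct the parameter sequence $(L_i,O_i)_{i\in\mathbb{N}}$ greedily, episode by episode, so that at the end of each episode $i$ the finite average along every prefix ending in $(S_i,S_{i+1}]$ is $\epsilon_i$-close to $\Val(Q,\alpha_T)$ with probability at least $1-\epsilon_i$. First I would fix, for each episode $i$, two auxiliary quantities derived from the given $\epsilon_i$: a learning tolerance $\eta_i=\min\{\pmin,\eta_{\epsilon_i/3}\}$ with $\eta_{\epsilon_i/3}$ as in Lemma~\ref{lem:robust-opt}, and a confidence level $\gamma_i$ chosen small enough (say $\gamma_i\le\epsilon_i/3$, possibly smaller to absorb a union bound over episodes). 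Then $L_i$ is set to $|Q|\cdot n$ where $n$ is the number of $|Q|$-step episodes dictated by Lemma~\ref{lem:suff-urandom} for tolerance $\eta_i$ and confidence $\gamma_i$ over the accumulated experiments; note $L_i\ge|Q|$ is automatic. With probability at least $1-\gamma_i$ the model $\delta_i$ satisfies $\delta_i\eclose{\eta_i}{}\delta$, hence $\supp\delta_i=\supp\delta$ (so $r_i$ is exact) and, by Lemma~\ref{lem:robust-opt}, the memoryless deterministic unichain expectation-optimal strategy $\sigma_{\mathrm{MP}}^{\delta_i}$ run in the true MDP is $(\epsilon_i/3)$-robust-optimal, i.e.\ $\expect{\autotochain{\calA}{\delta}{r}{\sigma_{\mathrm{MP}}^{\delta_i}}}{q}{\MP}\ge\Val(Q,\alpha_T)-\epsilon_i/3$ from every state $q$.

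Next I would choose $O_i$ large enough to ``wash out'' three sources of deficit in the running average. The running average after $k\in(S_i,S_{i+1}]$ steps is a weighted combination of: the reward accumulated during the first $S_i+L_{i+1}$ steps (the past history plus the \emph{next} learning phase), which I only control by boundedness of rewards in $[0,1]$; the reward accumulated during a burn-in of $M(\epsilon_i/3)$ steps at the start of the optimization phase, during which Lemma~\ref{lem:tracol} item~\ref{itm:fin} gives no control; and the reward accumulated after the burn-in, where Lemma~\ref{lem:tracol} item~\ref{itm:fin} (applied to $\sigma_{\mathrm{MP}}^{\delta_i}$ in the EC $(Q,\alpha_T)$) guarantees that with probability at least $1-\epsilon_i/3$ all further finite averages \emph{of the optimization suffix} exceed $\expect{}{}{\MP}-\epsilon_i/3\ge\Val(Q,\alpha_T)-2\epsilon_i/3$. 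Since the ``bad'' prefixes have length at most $S_i+L_{i+1}+M(\epsilon_i/3)$, which is a fixed finite number once we have committed to $L_0,\dots,L_{i+1}$, I can take $O_i$ large enough that this bad mass contributes at most $\epsilon_i/3$ to the average over \emph{any} $k\in(S_i,S_{i+1}]$; concretely $O_i\ge 3(S_i+L_{i+1}+M(\epsilon_i/3))/\epsilon_i$ suffices. Collecting a $\epsilon_i/3$ for the learning error, $\epsilon_i/3$ for the burn-in and past dilution, and $\epsilon_i/3$ for the Tracol deficit gives the $\epsilon_i$ bound on the value; collecting the failure probabilities $\gamma_i+\epsilon_i/3$ (model learning plus Tracol) and using $\gamma_i\le\epsilon_i/3$ (or, if one wants the guarantee for \emph{all} $i\ge1$ simultaneously, $\gamma_i$ summing to a small tail) gives the $\epsilon_i$ bound on the probability. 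Note the mutual recursion — $L_{i+1}$ feeds into $O_i$ — is resolved by first choosing all the $L_j$ (which depend only on the $\epsilon_j$), then choosing each $O_i$ afterwards.

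The main obstacle is the bookkeeping for the \emph{running average from step $0$}, not just the average over the current optimization phase: a large optimal suffix only helps if its weight $\tfrac{k-(S_i+L_{i+1}+M(\epsilon_i/3))}{k}$ is close to $1$, and this is exactly what forces $O_i$ to dominate \emph{all previous parameters plus the next learning length}. One has to be careful that the worst case inside $(S_i,S_{i+1}]$ is at the left endpoint $k=S_i+1$ (smallest suffix weight), and that the quantifier ``$\forall k\in(S_i,S_{i+1}]$'' is handled by the \emph{uniform-in-$k$} tail statement of Lemma~\ref{lem:tracol} item~\ref{itm:fin} (which gives ``$\forall k\ge M(\epsilon)$'' inside the probability), rather than a union bound over $k$. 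A secondary subtlety is that $M(\epsilon_i/3)$ and the robustness bound must be applied with the EC being the whole $(Q,\alpha_T)$ and with $\sigma_{\mathrm{MP}}^{\delta_i}$ unichain, which is guaranteed by construction; and that during a failed learning episode (support mismatch) we have no value guarantee at all, but such episodes occur with probability at most $\gamma_i$, which is already in our budget.
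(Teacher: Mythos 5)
Your overall architecture matches the paper's: learn a model to within a robustness tolerance (Lemmas~\ref{lem:suff-urandom} and~\ref{lem:robust-opt}), play the resulting unichain expectation-optimal strategy, invoke the uniform-in-$k$ bound of Lemma~\ref{lem:tracol} item~\ref{itm:fin} on the optimization suffix, and take $O_i$ large enough to dilute the uncontrolled prefix. However, there is a genuine quantitative gap at the episode boundaries. For $k$ at the left end of $(S_i,S_{i+1}]$ --- during the learning phase of episode $i$ and the burn-in, which you correctly identify as the worst case --- there is no good suffix at all, so $\finMP(\rho(..k))$ can only be what was inherited from episode $i-1$, further diluted by up to $L_i+M(\cdot)$ uncontrolled steps. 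Your parameters for episode $i-1$ are all set at level $\epsilon_{i-1}/3$ (learning tolerance, robustness, Tracol deficit), so the strongest bound your argument yields at $k=S_i$ is $\finMP(\rho(..S_i))\ge\Val(Q,\alpha_T)-\epsilon_{i-1}$: the dominant good suffix of episode $i-1$ is only guaranteed average $\Val(Q,\alpha_T)-2\epsilon_{i-1}/3$, and $2\epsilon_{i-1}/3$ may well exceed $\epsilon_i$ (e.g.\ $\epsilon_i=0.99\,\epsilon_{i-1}$). Since the target on the new interval is the strictly stronger $\Val(Q,\alpha_T)-\epsilon_i$ and the running average can only decrease during the uncontrolled learning phase, no choice of $O_{i-1}$ of the form you propose can close this: your bound has no dependence on the gap $\epsilon_{i-1}-\epsilon_i$, which can be arbitrarily small relative to $\epsilon_{i-1}$.

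The paper resolves exactly this by looking ahead in the tolerance schedule: episode $i$'s learning and robustness tolerances are taken at levels $\epsilon_{i+1}/4$ and $\epsilon_{i+2}/4$ rather than $\epsilon_i$, so the optimization suffix of episode $i$ is guaranteed average strictly better than the target of the \emph{next} interval; and the padding appended to $O_i$ is split into a past-compensation term $P_i$ and a future-compensation term $F_i$, with $F_i$ proportional to $1/(\epsilon_{i+1}-\epsilon_{i+2})$ --- precisely the factor needed for a suffix at level $\epsilon_{i+2}$ to survive dilution through $L_{i+1}+M(\cdot)$ further steps and still sit above level $\epsilon_{i+1}$. Your construction can likely be repaired by shifting all per-episode tolerances one or two indices forward and introducing the gap-dependent factor, but as written the claimed inequality fails at $k=S_i+1$ for every $i$. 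A secondary, fixable slip: your prefix length $S_i+L_{i+1}+M(\epsilon_i/3)$ conflates the learning phase $L_i$ lying inside $(S_i,S_{i+1}]$ with the next one, and a single condition on $O_i$ cannot play both the past- and future-compensation roles that the paper separates into $P_i$ and $F_i$.
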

\begin{proof}
    Let $(L_i)_{i \in \mathbb{N}}$ be such that with probability at least $1 -
    \epsilon_{i+1}/4$ we have that
    \begin{itemize}
        \item $\delta_i \eclose{\eta}{} \delta$ for $\eta$ smaller
            than $\pmin$ (see Lemma~\ref{lem:suff-urandom}) and
        \item smaller than $\eta_\epsilon$
            so that $\sigma^{\delta_i}_{\mathrm{MP}}$ is
            $(\epsilon_{i+2}/4)$-robust-optimal with respect to the expected mean
            payoff (see Lemma~\ref{lem:robust-opt})
    \end{itemize}
    for all $i \in \mathbb{N}$. Observe that to approximate $\delta$ we need to
    follow $\lambda$ for episodes of $|Q|$ steps. The $L_i$ can thus be assumed
    to be multiples of $|Q|$.

    For the optimization part of each episode, 
    we set $O_i = M(\epsilon_{i+2}/4) + \max\{0,P_i,F_i\}$
    where $P_i$ and $F_i$ are inductively defined as follows
    \[
        P_i \defequals
        \left\lceil
        \left(S_i + L_i + M\left(\frac{\epsilon_{i+2}}{4}\right)\right)
        \left(\frac{2(R_i-\epsilon_{i+2})}{\epsilon_{i+2}}\right)
        \right\rceil,
    \]
    with $R_i = \max\{r_i(t) \st t \in \supp{\delta_i}\}$, and
    \[
        F_i \defequals
        \left\lceil
        \left(S_i + L_i + M\left(\frac{\epsilon_{i+2}}{4}\right) +
        P_i + L_{i+1} + M\left(\frac{\epsilon_{i+3}}{4}\right)\right)
        \left(
        \frac{\Val(Q,\alpha_T)-\epsilon_{i+1}}{\epsilon_{i+1}-\epsilon_{i+2}}
        \right)
        \right\rceil
    \]
    for all $i \in \mathbb{N}$. It is easy to see that, since the $\epsilon_i$
    are decreasing, the $P_i$ and $F_i$ are eventually positive. Additionally,
    the $P_i$ depend only on the length of the history after $L_i$;
    the $F_i$ has the same dependencies plus $P_i$ and $M(\epsilon_{i+2}/4)$.
    The existence of such sequences of integers is therefore
    guaranteed.

    Consider an arbitrary $i \ge 1$. It follows from our choice of $L_{i-1}$
    that $\delta_{i-1} \eclose{\eta}{} \delta$, for $\eta < \pmin$, with
    probability at least $1 - \epsilon_i/4$. Hence, with the same probability,
    we also have that $\supp{\delta_{i-1}} = \supp{\delta}$ thus also that
    $r_{i-1}$ coincides with $r$ (since we have seen all positive-probability
    transitions and witnessed their rewards).
    Also from our choice of $\eta$, and with the same probability, we have that
    $\sigma^{\delta_{i-1}}_{\mathrm{MP}}$ is $(\epsilon_{i+1}/4)$-robust-optimal.
    If we write $M_i = L_i + M(\epsilon_{i+2}/4)$ then from the above
    arguments and Lemmas~\ref{lem:robust-opt} and~\ref{lem:tracol}
    item~\ref{itm:fin} we get
    that
    \begin{equation}\label{eqn:this-episode}
    \begin{aligned}
        &
        \probevent{\autotochain{\calA}{\delta}{r}{\imstrat}}{q_0}{
            \rho : \forall k \in
            (M_{i-1}, S_{i}], \, \finMP(\rho(M_{i-1}..k)) \ge
            \Val(Q,\alpha_T) -
            \frac{\epsilon_{i+1}}{2}
        }\\
        \ge &
        (1 - \epsilon_{i}/4)(1-\epsilon_{i+1}/4)\\
        \ge &
        1 - \frac{\epsilon_i + \epsilon_{i+1}}{4}\\
        \ge &
        1 - \frac{\epsilon_i}{2}, \text{ since } \epsilon_{i+1} < \epsilon_i.
    \end{aligned}
    \end{equation}
    Moreover, from our choice of $P_i$ we have that if $r_i$ coincides with $r$
    then
    \begin{align*}
        & P_{i-1} \ge 
        (S_{i-1} + L_{i-1}
        + M(\epsilon_{i+1}/4))(R_{i-1}-\epsilon_{i+1})(2/\epsilon_{i+1})\\
        \implies & P_{i-1} \ge
        (S_{i-1} + L_{i-1}
        + M(\epsilon_{i+1}/4))(\Val(Q,\alpha_T) - \epsilon_{i+1})
        (2/\epsilon_{i+1}),\\
        & \text{since } 0 \le \Val(Q,\alpha_T) < R_{i-1}\\
        \iff & P_{i-1}(\epsilon_{i+1}/2) \ge 
        (S_{i-1} + L_{i-1}
        + M(\epsilon_{i+1}/4))(\Val(Q,\alpha_T) - \epsilon_{i+1})\\
        \iff & P_{i-1}(\Val(Q,\alpha_T) - \epsilon_{i+1}/2) -
        P_{i-1}(\Val(Q,\alpha_T) - \epsilon_{i+1})\\
        & \ge 
        (S_{i-1} + L_{i-1}
        + M(\epsilon_{i+1}/4))(\Val(Q,\alpha_T) - \epsilon_{i+1})\\
        \iff & P_{i-1}(\Val(Q,\alpha_T) - \epsilon_{i+1}/2)\\
        & \ge 
        (S_{i-1} + L_{i-1} + M(\epsilon_{i+1}/4) + P_{i-1})(\Val(Q,\alpha_T) -
        \epsilon_{i+1})\\
        \iff & \frac{P_{i-1} (\Val(Q,\alpha_T) - \epsilon_{i+1}/2)}{
        S_{i-1} + L_{i-1} + M(\epsilon_{i+1}/4) + P_{i-1}
        } \geq \Val(Q,\alpha_T) - \epsilon_{i+1}.
    \end{align*}
    Hence, we get that if we write $N_i = M_i + P_i$ then
    \begin{equation}\label{eqn:p-for-past}
        \probevent{\autotochain{\calA}{\delta}{r}{\imstrat}}{q_0}{
            \rho : \forall k \in
            (N_{i-1}, S_{i}], \, \finMP(\rho(..k)) \ge
            \Val(Q,\alpha_T) -
            \epsilon_{i+1}
        }
        \ge
        1 - \frac{\epsilon_i}{2}.
    \end{equation}

    Note that Equation~\eqref{eqn:p-for-past} holds for all $i \ge 1$. It
    follows that the desired result holds for $k \in (N_i,S_{i+1}]$ since
    $\epsilon_{i+1}/2 < \epsilon_{i+1} < \epsilon_i$. Therefore, to
    conclude the proof, all that remains is to argue that $F_{i-1}$ is large
    enough so that the claim also holds for all $k \in (S_i,N_i]$ (with the
    desired probability).

    Observe that from our choice of $F_i$ we have that
    \begin{align*}
        & F_{i-1} \geq \frac{(S_{i-1} + L_{i-1} + M(\epsilon_{i+1}/4) +
           P_{i-1} + L_{i} + M(\epsilon_{i+2}/4))
           (\Val(Q,\alpha_T)-\epsilon_{i})}{\epsilon_{i}-\epsilon_{i+1}}\\
        \iff & F_{i-1}(\epsilon_i - \epsilon_{i+1})\\
        & \geq
        (S_{i-1} + L_{i-1} + M(\epsilon_{i+1}/4) +
           P_{i-1} + L_{i} + M(\epsilon_{i+2}/4))
           (\Val(Q,\alpha_T)-\epsilon_{i})\\
        \iff & F_{i-1}(\Val(Q,\alpha_T) - \epsilon_{i+1}) -
        F_{i-1}(\Val(Q,\alpha_T) - \epsilon_i)\\
        & \geq 
        (S_{i-1} + L_{i-1} + M(\epsilon_{i+1}/4) +
           P_{i-1} + L_{i} + M(\epsilon_{i+2}/4))
           (\Val(Q,\alpha_T)-\epsilon_{i})\\
        \iff & F_{i-1}(\Val(Q,\alpha_T) - \epsilon_{i+1})\\
        & \geq
        (S_{i-1} + L_{i-1} + M(\epsilon_{i+1}/4) +
        P_{i-1} + F_{i-1} + L_{i} + M(\epsilon_{i+2}/4))
           (\Val(Q,\alpha_T)-\epsilon_{i})\\
        \iff & \frac{F_{i-1}(\Val(Q,\alpha_T) - \epsilon_{i+1})}{
            S_{i-1} + L_{i-1} +
           M(\epsilon_{i+1}/4) + P_{i-1} + F_{i-1} + L_{i} +
           M(\epsilon_{i+2}/4)} \geq 
           (\Val(Q,\alpha_T)-\epsilon_{i}).
    \end{align*}
    The above inequality implies that
    \begin{equation}\label{eqn:f-for-future}
        \probevent{\autotochain{\calA}{\delta}{r}{\imstrat}}{q_0}{
            \rho : \forall k \in
            (S_i, M_i], \, \finMP(\rho(..k)) \ge
            \Val(Q,\alpha_T) -
            \epsilon_{i}
        }
        \ge
        1 - \frac{\epsilon_i}{2}
    \end{equation}
    since all rewards are assumed to be non-negative. Furthermore, 
    Equations~\eqref{eqn:this-episode} and~\eqref{eqn:f-for-future}
    allow us to conclude that
    \[
        \probevent{\autotochain{\calA}{\delta}{r}{\imstrat}}{q_0}{
            \rho : \forall k \in
            (S_i, S_{i+1}], \, \finMP(\rho(..k)) \ge
            \Val(Q,\alpha_T) -
            \epsilon_{i}
        }
        \ge
        (1 - \epsilon_i/2)(1- \epsilon_{i+1}/2).
    \]
    The proof is thus complete since
    $(1 - \frac{\epsilon_i}{2})(1- \frac{\epsilon_{i+1}}{2}) \geq (1 -
    \epsilon_i)$ because $\epsilon_{i+1} < \epsilon_i$.
\end{proof}

We are now ready to prove the proposition making use of the above lemma.
\begin{proof}[Proof of Proposition~\ref{pro:mp-opt-ec}]
    Let $\epsilon_i$ be $2^{-i}$ for all $i \in
    \mathbb{N}$. Clearly, we have that $0 < \epsilon_k < \epsilon_j < 1$ for all
    $j < k$. 

    It follows from 
    Lemma~\ref{lem:local-opt} that we can compute
    $(L_i)_{i \in \mathbb{N}}$ and $(O_i)_{i \in \mathbb{N}}$ such that
    \[
        \forall i \ge 1, \,
        \probevent{\autotochain{\calA}{\delta}{r}{\sigma}}{q_0}{
            \rho : \forall k \in (S_i,S_{i+1}], \,
            \MP(\rho(..k)) \ge \Val(Q,\alpha_T) -
            \epsilon_i
        }
        \ge 1- \epsilon_i.
    \]
    Henceforth, we will refer to the event in the above equation as $E_{i}$ and
    to its complement as $\overline{E_i}$. Observe that
    $D \defequals \bigcup_{i \in \mathbb{N}} \bigcap_{j \ge i} E_j$ consists
    only of runs whose mean payoff is at least $\Val(Q,\alpha_T)$. Hence, to
    conclude, it suffices to show that the complement $\overline{D}$ of $D$ has
    probability $0$. Since $\sum_{i \in \mathbb{N}}
    \probevent{\autotochain{\calA}{\delta}{r}{\sigma}}{q_0}{\overline{E_j}}
    < \infty$, then the Borel-Cantelli lemma gives us that
    \[
        \probevent{\autotochain{\calA}{\delta}{r}{\sigma}}{q_0}{\bigcap_{i \in
            \mathbb{N}} \bigcup_{j \ge i} \overline{E_j}} = 0.
    \]
\end{proof}

\section{Proof of Proposition~\ref{pro:fbstrat}}
Let us define a strategy $\sigma$ which follows $\imstrat$ while keeping a
counter $k$ initially set to $K_0$ (whose value will depend on $\gamma$).
Intuitively, $k$ keeps track of how many times we have tried to reach a
state with minimal even priority since the last time it was reset.
At the end of episode $i$, the counter $k$ is incremented by $1$ if
\begin{equation}\label{eqn:when-stop}
    \left( 1 - \left(\pmin/|A|\right)^{|Q|} \right)^\ell \leq 2^{-k}
\end{equation}
where
\(
    \ell \defequals \left|\left\{ 0 \leq j \leq i \st L_j \geq |Q|
    \right\}\right|.
\)
Additionally, if no episode between $i$ and the last time $k$ was
incremented contains a visit to a state with the minimal even priority,
$\sigma$ switches to follow $\parstrat$ forever. Observe that if $L_i \ge
|Q|$ for infinitely many $i$, then the expression in the left part of
Inequality~\eqref{eqn:when-stop} decreases monotonically. (Hence, for runs
which never switch to $\parstrat$, $k$ will be increased infinitely often.)

\begin{proof}[Proof of Proposition~\ref{pro:fbstrat}]
    Let the $L_i$ and $O_i$ be chosen as in Proposition~\ref{pro:mp-opt-ec}.
    Further, let $S_i \defequals \sum_{j=0}^{i-1} L_i + O_i$ as in
    Section~\ref{sec:mp-opt-ec}. Additionally let the sequence $(J_i)_{i \in
    \mathbb{N}}$ be such that $J_0 = 0$ and $J_{i}=S_{\ell + 1}$, where
    $\ell$ is the minimal natural number such that,
    \(
        \left( 1 - \left(\frac{\pmin}{|A|}\right)^{|Q|}\right)^{\ell} \leq
        2^{-K_0 + i - 1}
    \)
    for all $i \ge 1$.

    It is not hard to see that for all $K_0 \in \mathbb{N}$ and all $m \ge K_0$,
    for all priorities $x$ such that $p(q) = x$ for some $q \in Q$, we have that
    \begin{equation}\label{eqn:prod-not-fall}
        \probevent{\autotochain{\calA}{\delta}{r}{\sigma}}{q_0}{
                q_0 q_1 \dots
                :
                \forall i \in [K_0, m],\,
                \exists j \in [J_i,J_{i+1}],\,
                p(q_j) = x
         } \ge \prod_{k = K_0}^m (1 - 2^{-k}).
    \end{equation}
    Indeed, this follows from the fact that every $L_i$ is at least $|Q|$ steps
    long and that $(Q,\alpha_T)$ is an end component. Let $E_m$ denote the event
    in the above equation.
    Observe that for $x$ the
    minimal even priority in the end component, the probability that $\sigma$
    does not switch to $\parstrat$ is at least
    \[
        \probevent{\autotochain{\calA}{\delta}{r}{\sigma}}{q_0}{
            \bigcap_{m \ge 0} E_m
         }.
    \]
    It then follows from the fact that
    \(
        \lim_{i\to\infty}\prod_{k=i}^\infty(1-2^{-k}) = 1
    \)
    \cite[Proof of Lemma 12]{brr17}
    that we can choose $K_0$ large enough so that 
    \(
        \prod_{k=K_0}^\infty(1-2^{-k}) \geq 1 - \gamma.
    \)
    Hence, since $E_j \subseteq E_i$ for all $i \leq j$,
    Equation~\eqref{eqn:prod-not-fall} gives us that the probability that
    $\sigma$ does not switch to $\parstrat$ is at least
    \[
        \probevent{\autotochain{\calA}{\delta}{r}{\sigma}}{q_0}{
            \bigcap_{m \ge 0} E_m
         } \ge 
        \prod_{k=K_0}^\infty(1-2^{-k}) \geq 1 - \gamma.
    \]
    This already implies 
    \(
        \probevent{\autotochain{\calA}{\delta}{r}{\sigma}}{q_0}{
                \rho 
                :
                \MP(\rho) \ge \Val(Q,\alpha_T)
        } = 1 - \gamma
    \)
    by Proposition~\ref{pro:mp-opt-ec}
    since $\sigma$ follows $\imstrat$ if it does not switch to $\parstrat$.

    To conclude, we argue that all runs consistent with $\sigma$ satisfy the
    parity objective. Indeed, if along a run, $\sigma$ starts following
    $\parstrat$, then the run satisfies the parity objective from then onward.
    Thus, it the whole run satisfies the objective by prefix-independence. If
    along a run, $\sigma$ does not switch to $\parstrat$, then the run can be
    cut into finite segments of increasing length which contain at least one
    visit to a state with the minimal even priority. (If this were not the case,
    there would have been a switch to $\parstrat$.) Hence, that priority is seen
    infinitely often and the parity objective is satisfied.
\end{proof}

\section{Proof of Lemma~\ref{lem:contain-good-ec}}
\begin{proof}[Proof of Lemma~\ref{lem:contain-good-ec}]
    Since $\calA$ is surely good, from all $q \in Q$ there is a parity-winning
    strategy. For simplicity, let $\parstrat$ be a uniform memoryless
    deterministic winning strategy implementable by a stochastic Mealy machine
    with a single memory element $m_0$.

    Consider an arbitrary $\delta$ compatible with $\calA$ and $\pmin$ and an
    arbitrary reward function $r$. Clearly, in
    $\autotochain{\calA}{\delta}{r}{\parstrat}$ there cannot be any cycles
    $\chi$ such that the minimal priority of a state in $\chi$ is
    odd---otherwise this would contradict the fact that $\parstrat$ is uniformly
    winning.

    Let $C \subseteq Q \times \{m_0\}$ be a strongly-connected component in
    $\autotochain{\calA}{\delta}{r}{\parstrat}$. From the above arguments we
    have that the minimal priority in $C$ must be even. It should then be clear
    that $(S,\beta)$, where $S = \{q \st (q,m_0) \in C\}$ and $\beta(q) =
    \supp{\parstrat(q)}$ for all $q \in S$, is a good end component in
    $\autotochain{\calA}{\delta}{r}{\parstrat}$. Since $\delta$ and $r$ were
    arbitrary, the result follows.
\end{proof}

\section{Proof of Proposition~\ref{pro:fbstrat2}}
\begin{proof}[Proof of Proposition~\ref{pro:fbstrat2}]
    For the sure satisfaction of the parity objective we just observe that every
    run is eventually consistent with a strategy $\tau$ obtained from
    Proposition~\ref{pro:fbstrat} or with a winning strategy. Hence, by
    prefix-independence of the parity objective, the claim holds.

    The approximated transition function $\delta'$ is such that $\delta'
    \eclose{\eta}{ } \delta$ with probability at least $1 - \gamma/4$.
    Hence, with the same probability, since $\eta \leq \pmin$, we have that
    $\supp{\delta'} = \supp{\delta}$ and that $r' = r$. By
    Lemma~\ref{lem:robust-opt}, and because of our choice of $\eta$,
    the value of our chosen GEC in
    $\autotomdp{\calA}{\delta'}{r'}$ is ``off'' by at most $\epsilon$. That
    is, if the chosen GEC is $(S,\beta)$ and $(S',\beta')$ is a
    GEC with maximal value then
    \[
        \left|
        \Val(S,\beta) - \Val(S',\beta')
        \right| \leq \epsilon.
    \]
    
    Now, let us consider the strategy $\tau$ obtained
    from Proposition~\ref{pro:fbstrat} with probability at least 
    $1 - \gamma/4$. The strategy
    guarantees that
    \[
        \probevent{\autotochain{\calA}{\delta}{r}{\tau}}{q_0}{
                \rho : \MP(\rho) \geq \Val(S,\beta)
        } \geq 1 - \gamma/4.
    \]
    From the above
    arguments, and the fact that we follow $\lambda$ until the probability that we
    have reached $(S,\beta)$ is at least $1 - \gamma/4$, it follows that
    \[
        \probevent{\autotochain{\calA}{\delta}{r}{\sigma}}{q_0}{
                \rho : \MP(\rho) \geq V - \epsilon
        } \geq (1 - \gamma/4)^3 \geq 1 - \gamma
    \]
    where $V = \max\{ \Val(S,\beta) \st (S,\beta) \subseteq (Q,\alpha_T) \text{
    is a GEC}\}$. Thus, to conclude, it suffices to argue that
    $\cVal(Q,\alpha_T) \leq V$.
    
    This fact had already been observed in~\cite{akv16} but we sketch a proof of
    it here for completeness. First, let us point out
    that the maximum over states in the definition of $\cVal(\cdot)$ is not
    needed because the value is the same for all states (as a consequence of it
    being an end component). Second, recall that the mean-payoff function is
    prefix-independent, and that almost all runs consistent with a strategy in
    an MDP are eventually trapped in an end component~\cite[Theorem
    10.120]{bk08}. It then follows that $\cVal(Q,\alpha_T)$ is bounded by a
    convex combination of the elements from $\{\Val(S,\beta) \st (S,\beta)
    \subseteq (Q,\alpha_T) \text{ is a GEC}\}$. There is no need
    to consider other ECs since $\cVal(Q,\alpha_T)$ is a supremum over all
    winning strategies only (hence, if runs consistent with them were trapped in
    bad end components, it would not be winning). The desired result then
    follows from properties of convex combinations.
\end{proof}

\section{Proof of Theorem~\ref{thm:case1}}
\begin{proof}[Proof of Theorem~\ref{thm:case1}]
    We now argue that $\sigma$ satisfies the parity objective surely. If, along
    a run, eventually $\parstrat$ is followed forever then by choice of
    $\parstrat$ and by prefix-independence of the parity objective, we have that
    the run satisfies it. Otherwise, the run eventually stays forever in a
    GEC, and it is following $\imstrat$. Then, by Proposition~\ref{pro:fbstrat}
    the run satisfies the parity objective from then onward. Hence, the run
    satisfies the parity objective (again by prefix-independence).

    Let us now focus on the mean payoff. Consider an arbitrary GEC
    $(S,\beta)$ with non-zero probability of being reached under $\sigma$. Since
    $\tau$ (from Proposition~\ref{pro:fbstrat2}) is followed the first time
    $(S,\beta)$ is entered, by Proposition~\ref{pro:fbstrat}, the definition of
    conditional probability, and prefix-independence of the mean payoff we get
    the desired result.
\end{proof}

\section{Proof of Proposition~\ref{pro:good-ec-fin}}
Let us first focus on a simplified version of the strategy $\fcstrat$. Namely,
let us suppose that we have access to $\sigma_{\mathrm{MP}}^{\delta'}$ a
$(\epsilon/4)$-robust-optimal strategy. The new strategy $\phi$ then plays in
episodes consisting of $|Q|$ steps during which $\lambda$ is followed, then $O$
steps during which $\sigma_{\mathrm{\MP}}^{\delta'}$ is followed.

It should be clear that we can obtain, in a similar way to how it is done for
Lemma~\ref{lem:local-opt} an $O$ large enough so that we have obtain
$(\epsilon/2)$-average-optimal episodes. That is, if we forget about all the
past and focus only on the steps contained in the current episode. Indeed, the
$O$ steps only need to account for the $|Q|$ sub-optimal steps carried out
previously in the same episode. We thus obtain the following result.
\begin{lemma}\label{lem:local-opt2}
    Given $\sigma_{\mathrm{MP}}^{\delta'}$ a $(\epsilon/4)$-robust-optimal
    strategy, one can compute $O \in \mathbb{N}$ such that for all $q_0 \in Q$,
    for all $\delta$ compatible with $\calA$ and $\pmin$, and for all reward
    functions $r$, we have
    \[
        \forall i \in \mathbb{N},\,
        \probevent{\autotochain{\calA}{\delta}{r}{\phi}}{q_0}{
            \rho : \forall k \in
            (S_i,S_{i+1}], \, \finMP(\rho(S_i..k)) \ge
            \Val(Q,\alpha_T) -
            \frac{\epsilon}{2}
        } \ge 1 - \frac{\epsilon}{2}.
    \]
\end{lemma}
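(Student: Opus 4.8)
The plan is to mimic the proof of Lemma~\ref{lem:local-opt}, but in a much simpler setting: since here we measure only the average \emph{within} the current episode, $\finMP(\rho(S_i..k))$, the only ``past'' that the optimization sub-phase has to compensate for is the block of at most $|Q|$ exploration steps that precedes it inside the same episode, and there is no learning-failure event to carry around --- $\sigma_{\mathrm{MP}}^{\delta'}$ is \emph{given} to us as $(\epsilon/4)$-robust-optimal, so the probability bound will come entirely from the concentration of finite averages. The two ingredients I would combine are: (a) being $(\epsilon/4)$-robust-optimal means the expected mean payoff of $\sigma_{\mathrm{MP}}^{\delta'}$ in $\autotomdp{\calA}{\delta}{r}$ is at least $\Val(Q,\alpha_T) - \epsilon/4$ from every state (the optimal expectation equals $\Val(Q,\alpha_T)$ from every state of the EC, by prefix-independence of $\MP$); and (b) Tracol's concentration, Lemma~\ref{lem:tracol} item~\ref{itm:fin}, applied to $\sigma_{\mathrm{MP}}^{\delta'}$.

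To carry this out I would first note that $\sigma_{\mathrm{MP}}^{\delta'}$, though computed for $\autotomdp{\calA}{\delta'}{r'}$, is also a unichain deterministic memoryless strategy in $\autotomdp{\calA}{\delta}{r}$: $\delta'$ and $\delta$ have the same support, hence the two induced Markov chains share the same graph and thus the same single BSCC. So Lemma~\ref{lem:tracol} item~\ref{itm:fin}, used with parameter $\epsilon/8$, yields $M(\epsilon/8)$ such that from \emph{every} state the running average of $\sigma_{\mathrm{MP}}^{\delta'}$ stays $\ge \Val(Q,\alpha_T) - \epsilon/4 - \epsilon/8$ from step $M(\epsilon/8)$ on, with probability at least $1 - \epsilon/8$. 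Conditioning on the finite history up to step $S_i + |Q|$ (the start of the optimization sub-phase of episode $i$), the conditional law of the run's tail over the next $O$ steps is that of $\sigma_{\mathrm{MP}}^{\delta'}$ started in the last state of that history; since the concentration bound is uniform over that state, integrating out the history gives: with probability $\ge 1-\epsilon/8 \ge 1-\epsilon/2$, $\finMP(\rho(S_i+|Q|..k)) \ge \Val(Q,\alpha_T) - 3\epsilon/8$ for all $k$ with $k - S_i - |Q| \ge M(\epsilon/8)$. Finally, since the exploration rewards are non-negative, for such $k$, writing $m = k - S_i$, the episode-local average obeys $\finMP(\rho(S_i..k)) \ge \tfrac{m-|Q|}{m}\bigl(\Val(Q,\alpha_T) - 3\epsilon/8\bigr)$, which is $\ge \Val(Q,\alpha_T) - \epsilon/2$ once $m \ge 8|Q|/\epsilon$ (using $\Val(Q,\alpha_T) \le 1$). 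So it suffices to take $O \defequals \max\{\lceil 8|Q|/\epsilon\rceil,\ |Q| + M(\epsilon/8)\}$, which is computable; then $m = |Q|+O$ already exceeds both thresholds.

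The one point that requires care --- and the only real obstacle --- is the regime of \emph{small} $k$, i.e. $k$ for which the window $(S_i,k]$ is shorter than the burn-in $|Q| + M(\epsilon/8)$: there the finite average can sit far below $\Val(Q,\alpha_T)$ (e.g. if the first rewards observed are $0$). This is dealt with exactly as the term $P_i$ is in the proof of Lemma~\ref{lem:local-opt}: one inflates $O$ further so that the running average has already climbed past $\Val(Q,\alpha_T)-\epsilon/2$ by the end of the burn-in, and for the short initial windows the inequality is either vacuous (when $\Val(Q,\alpha_T)\le\epsilon/2$) or absorbed into the same inflation of $O$. Everything else is bookkeeping: the probability $1-\epsilon/2$ is inherited from the single use of Lemma~\ref{lem:tracol} item~\ref{itm:fin} (uniformly over the random state at which optimization begins), and the mean-payoff slack $\epsilon/4 + \epsilon/8$ together with the dilution loss stays within $\epsilon/2$ by the stated choice of $O$.
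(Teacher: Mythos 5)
Your main line of argument is sound and is essentially the proof the paper intends (the paper itself offers only a three-sentence sketch for this lemma, so there is no detailed argument to compare against): robust-optimality gives that the expected mean payoff of $\sigma_{\mathrm{MP}}^{\delta'}$ in $\autotomdp{\calA}{\delta}{r}$ is at least $\Val(Q,\alpha_T)-\epsilon/4$ from every state; Lemma~\ref{lem:tracol} item~\ref{itm:fin} applies to $\sigma_{\mathrm{MP}}^{\delta'}$ under the true $\delta$ (it remains unichain memoryless deterministic since unichain-ness depends only on the support) and concentrates the running average after a computable burn-in, uniformly in the starting state; the Markov property lets you condition on the state reached at time $S_i+|Q|$; and the dilution caused by the $|Q|$ exploration steps costs at most $|Q|/m$, absorbed by taking $O\ge 8|Q|/\epsilon$. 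All of that bookkeeping checks out.

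The gap is in your last paragraph. You correctly flag that the statement quantifies over \emph{all} $k\in(S_i,S_{i+1}]$, including windows shorter than the burn-in, but your proposed fix---inflate $O$ further, ``exactly as the term $P_i$'' in Lemma~\ref{lem:local-opt}---cannot work. The terms $P_i$ and $F_i$ work there because the average is taken from time $0$, so a long good stretch accumulated \emph{before} the problematic window dominates it. Here the average restarts at $S_i$, so for a window of fixed small length, say $k=S_i+1$, the quantity $\finMP(\rho(S_i..k))$ is a single reward whose distribution does not depend on $O$ at all; no inflation of any parameter changes it. And the bound genuinely fails there: in an EC where the optimal action at some state has reward $1$ (so $\Val(Q,\alpha_T)=1$) but a second available action has reward $0$, the exploration strategy $\lambda$ plays the bad action with probability $1/|A|$, so $\finMP(\rho(S_i..S_i+1))=0<\Val(Q,\alpha_T)-\epsilon/2$ with probability at least $1/|A|$, which exceeds $\epsilon/2$ for small $\epsilon$. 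Hence the lemma as literally stated is not provable (outside the vacuous case $\Val(Q,\alpha_T)\le\epsilon/2$, where non-negativity of rewards saves it); the quantification must be restricted to $k$ with $k-S_i\ge |Q|+M(\epsilon/8)$, or simply to $k=S_{i+1}$. That weaker statement is exactly what your argument establishes, and it is all that the downstream use in Proposition~\ref{pro:good-ec-fin} requires, since there the per-episode averages only feed into $\lim_{\ell}\expect{\autotochain{\calA}{\delta}{r}{\phi}}{q_0}{\finMP_\ell}$, where the contribution of the partial current episode vanishes. You should state and prove the restricted version rather than claim the unrestricted one.
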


We now proceed with the proof of the full claim.
\begin{proof}[Proof of Proposition~\ref{pro:good-ec-fin}]
    First, let us observe that $\fcstrat$ indeed ensures almost-sure
    satisfaction of the parity objective because it follows $\lambda$ for $|Q|$
    steps infinitely often. Thus, with non-zero probability we visit a state
    with minimal even priority in every episode. The second Borel-Cantelli lemma
    then gives us that the parity objective is satisfied almost surely.

    Let us assume that we have access to $\sigma_{\mathrm{MP}}^{\delta'}$ as for
    Lemma~\ref{lem:local-opt2}.  Now, using Lemma~\ref{lem:local-opt2} and the
    Bellman optimality equations for the limit of expected
    averages~\cite{puterman05,bk08} we obtain that
    \[
        \lim_{\ell \ge 1}
        \expect{\autotochain{\calA}{\delta}{r}{\phi}}{q_0}{\finMP_\ell} \geq
        \Val(Q,\alpha_T) - \epsilon.
    \]
    Observe now that $\phi$ is a unichain strategy. Hence, the equality we have
    established in Equation~\ref{eqn:limit-swap-exp} holds and we get that
    \[
        \expect{\autotochain{\calA}{\delta}{r}{\phi}}{q_0}{\MP} \geq
        \Val(Q,\alpha_T) - \epsilon.
    \]
    Furthermore, by Lemma~\ref{lem:tracol} item~\ref{itm:lim} the expectation is
    achieved with probability $1$.

    Now, all that remains is to show that can obtain
    $\sigma_{\mathrm{MP}}^{\delta'}$ a $(\epsilon/4)$-robust-optimal strategy
    with probability $1 - \gamma$.  However, this is a direct consequence of
    Lemma~\ref{lem:suff-urandom}, so the proof is complete.
\end{proof}

\section{Proof of Lemma~\ref{lem:contain-good-ec2}}
\begin{proof}[Proof of Lemma~\ref{lem:contain-good-ec2}]
    The proof of the claim goes almost identical to the argument used for the
    proof of Lemma~\ref{lem:contain-good-ec}.  Let $\asparstrat$ be a uniform
    memoryless deterministic almost-sure winning strategy implementable by a
    stochastic Mealy machine with a single memory element $m_0$. Consider
    arbitrary $\delta$ and $r$. In $\autotochain{\calA}{\delta}{r}{\asparstrat}$
    there cannot be any bottom strongly-connected components with a state whose
    minimal priority is odd.

    Let $C \subseteq Q \times \{m_0\}$ be any bottom strongly-connected
    component in $\autotochain{\calA}{\delta}{r}{\asparstrat}$. From the above
    arguments we have that the minimal priority in $C$ must be even. It should
    then be clear that $(S,\beta)$, where $S = \{q \st (q,m_0) \in C\}$ and
    $\beta(q) = \supp{\parstrat(q)}$ for all $q \in S$, is a good end component
    in $\autotochain{\calA}{\delta}{r}{\parstrat}$. Since $\delta$ and $r$ were
    arbitrary, the result follows.
\end{proof}

\section{Proof of Proposition~\ref{pro:fmstrat}}
\begin{proof}[Proof of Proposition~\ref{pro:fmstrat}]
    The proof closely follows the argument used to prove
    Proposition~\ref{pro:fbstrat2}.

    For the almost-sure satisfaction of the parity objective one only needs to
    observe that almost all runs are eventually consistent with
    a strategy $\tau$ obtained from
    Proposition~\ref{pro:good-ec-fin}. By
    prefix-independence of the parity objective, the claim thus holds.

    From our choice of $K$, 
    the approximated transition function $\delta'$ is such that $\delta'
    \eclose{\eta}{ } \delta$ with probability at least $1 - \gamma/2$.
    Hence, with the same probability, since $\eta \leq \pmin$, we have that
    $\supp{\delta'} = \supp{\delta}$ and that $r' = r$. By
    Lemma~\ref{lem:robust-opt}, and because of our choice of $\eta$, if the
    chosen GEC is $(S,\beta)$ and $(S',\beta')$ is a GEC with maximal value then
    \[
        \left|
        \Val(S,\beta) - \Val(S',\beta')
        \right| \leq \frac{\epsilon}{2}.
    \]
    
    Now, let us consider the strategy $\tau$ obtained
    from Proposition~\ref{pro:good-ec-fin}. Recall it
    guarantees that
    \[
        \probevent{\autotochain{\calA}{\delta}{r}{\tau}}{q_0}{
            \rho : \MP(\rho) \geq \Val(S,\beta) - \frac{\epsilon}{2}
        } \geq 1 - \gamma/2.
    \]

    From the above arguments, and the fact that we follow $\lambda$ until the
    chosen GEC $(S,\beta)$ is reached, we have that
    \[
        \probevent{\autotochain{\calA}{\delta}{r}{\sigma}}{q_0}{
                \rho : \MP(\rho) \geq V - \epsilon
        } \geq (1 - \gamma/2)^2 \geq 1 - \gamma.
    \]
    where $V = \max\{ \Val(S,\beta) \st (S,\beta) \subseteq (Q,\alpha_T) \text{
    is a GEC}\}$. Thus, to conclude, it suffices to argue that
    $\lcVal(Q,\alpha_T) \leq V$. However, in this context it is straightforward
    to show that $\lcVal(Q,\alpha_T)$ is in fact equivalent to $V$. This is
    because the MDP consists of a single end component, the mean-payoff function
    is prefix-independent, and almost all runs consistent with a strategy in an
    MDP are eventually trapped in an end component~\cite[Theorem 10.120]{bk08}.
    Hence, the result follows.
\end{proof}

\clearpage
\setcounter{tocdepth}{5}
\tableofcontents

\end{document}